\theoremstyle{plain}
\newtheorem{theorem}{Theorem}[section]
\newtheorem{proposition}[theorem]{Proposition}
\icmltitlerunning{Revisiting Bellman Errors for Offline Model Selection}
\begin{document}

\twocolumn[
\icmltitle{Revisiting Bellman Errors for Offline Model Selection}

\icmlsetsymbol{equal}{*}

\begin{icmlauthorlist}
\icmlauthor{Joshua P Zitovsky}{uncbios}
\icmlauthor{Daniel de Marchi}{uncbios}
\icmlauthor{Rishabh Agarwal}{brain,mila,equal}
\icmlauthor{Michael R Kosorok}{uncbios,equal}
\end{icmlauthorlist}

\icmlaffiliation{uncbios}{Department of Biostatistics, UNC Chapel Hill, North Carolina USA}
\icmlaffiliation{brain}{Google DeepMind}
\icmlaffiliation{mila}{Mila}
\icmlcorrespondingauthor{Josh Zitovsky}{joshz@live.unc.edu}

\icmlkeywords{Offline Reinforcement Learning, Offline RL, Deep Reinforcement Learning, Deep RL, Off-Policy Evaluation, OPE, Model Selection, Offline Model Selection, Hyperparameter Tuning, Offline Hyperparameter Tuning, Bellman Errors, Precision Medicine, Dynamic Treatment Regimes}
\vskip 0.3in
]

\printAffiliationsAndNotice{\icmlEqualContribution} 

\begin{abstract}
Offline model selection (OMS), that is, choosing the best policy from a set of many policies given only logged data, is crucial for applying offline RL in real-world settings. One idea that has been extensively explored is to select policies based on the mean squared Bellman error (MSBE) of the associated Q-functions. However, previous work has struggled to obtain adequate OMS performance with Bellman errors, leading many researchers to abandon the idea. To this end, we elucidate why previous work has seen pessimistic results with Bellman errors and identify conditions under which OMS algorithms based on Bellman errors will perform well. Moreover, we develop a new estimator of the MSBE that is more accurate than prior methods. Our estimator obtains impressive OMS performance on diverse discrete control tasks, including Atari games. 
\end{abstract}

\vspace{-0.2cm}
\section{Introduction}
\vspace{-0.1cm}
\label{sec:intro}

Offline reinforcement learning (RL)~\citep{Ernst2005, Levine2020} focuses on training an agent solely from a fixed dataset of environment interactions. By not requiring any online interactions, offline RL can be applied to real-world settings, such as autonomous driving \cite{Yu2020} and healthcare \cite{Weltz2022}, where online data collection may be expensive or unsafe but large amounts of previously-logged interactions are available. 

While there has been a recent surge of methods that can train an agent offline \citep{Fu2020, gulcehre2020}, such methods typically tune their hyperparameters using online interactions, which undermines the aim of offline RL. To this end, we focus on the problem of \textit{offline model selection (OMS)}, that is, selecting the best policy from a set of many policies using only logged data. Common OMS approaches are based on off-policy evaluation (OPE) algorithms that estimate the expected returns under a target policy using only offline data \citep{Voloshin2021}. Unfortunately, such estimates are often inaccurate \citep{Fu2021}. As an alternative, many works have explored using empirical Bellman errors to perform OMS, but have found them to be poor predictors of value model accuracy \citep{Irpan2019, Paine2020}. This has led to a belief among many researchers that Bellman errors are not useful for OMS \citep{Geron2019, Fujimoto2022}.

To this end, we propose a new algorithm, \textit{Supervised Bellman Validation (SBV)}, that provides a better proxy for the true Bellman errors than empirical Bellman errors. SBV achieves strong performance on diverse tasks ranging from healthcare problems \cite{Klasnja2015} to Atari games \citep{Bellemare2013}. In contrast, competing baselines suffer from limitations that hinder real-world applicability and perform no better than random chance on certain tasks. In addition to demonstrating the potential utility of Bellman errors in OMS, we also investigate \textit{when} they are effective by exploring the factors most predictive of their performance. Our investigations help explain why Bellman errors have achieved mixed performance in the past, provide guidance on how to achieve better performance with these errors, and highlight several avenues for future work. Finally, we open-source our code at \url{https://github.com/jzitovsky/SBV}. To help others conduct experiments on Atari, our repository includes over 1000 trained Q-functions as well as efficient implementations for several deep OMS algorithms (Appendix \ref{append:baselines}).

\vspace{-0.1cm}
\section{Preliminaries} 
\label{sec:background}

\vspace{-0.05cm}
\subsection{Offline Reinforcement Learning}
\vspace{-0.05cm}

In offline RL, we have a static dataset $\mathcal D=\{(s,a,r,s')\}$ of transitions, where we observe the reward $r$ and next state $s'$ after taking action $a$ on state $s$. We assume the data comes from a Markov decision process (MDP) $\mathcal M=(\mathcal S,\mathcal A,T,R,d_0,\gamma)$ \cite{Putterman1994} with state and action space $\mathcal S$ and $\mathcal A$, transition probabilities $T(s'|s,a)$, rewards $R(s,a,s')=r$, initial state probabilities $d_0(s_0)$ and discount factor $\gamma\in[0,1)$. Throughout we assume that $\mathcal A$ is discrete. Our proposed method has limited applicability in continuous control problems for reasons discussed in Appendix \ref{append:cts}, though we discuss potential extensions to overcome this limitation in Section \ref{sec:conclusions}. We assume that the observed state-action pairs in $\mathcal D$ are identically distributed as $P^\mu(s,a)=d^\mu(s)\mu(a|s)$ where $\mu$ is the behavioral policy and $d^\mu$ is the marginal distribution of states over time points induced by policy $\mu$ and MDP $\mathcal M$ \cite{Levine2020}. 

A \textit{Q-function} is any real-valued function of state-action pairs. One such Q-function is the action-value function for policy $\pi$, $Q^\pi(s,a)=\mathbb E_\pi[\sum_{t=0}^\infty\gamma^t R_t|S_0=s,A_0=a]$, where $\mathbb E_\pi$ denotes expectation over MDP $\mathcal M$ and policy $\pi$ \footnote{Many works define Q-functions akin to action-value functions but also refer to their estimates as Q-functions. To avoid confusion, we define a Q-function as \textit{any} function of the state-action.}. The optimal policy $\pi^*$ is a policy whose action-value function equals the optimal action-value function $Q^*$. It is well-known that $\pi^*=\pi_{Q^*}$ where $\pi_Q(s)=\mbox{argmax}_a Q(s,a)$ is the greedy policy of Q-function $Q$ \cite{Sutton2018}. 

Our paper focuses on the \textbf{offline model selection (OMS)} problem where we have a \textit{candidate set} $\mathcal Q=\{Q_1,...,Q_M\}$, or set of estimates for $Q^*$, and our goal is to choose the ``best" among them based on some criterion. For example, $Q_1,...,Q_M$ can be obtained by running a deep RL (DRL) algorithm \cite{Arulkumaran2017} for $M$ iterations and evaluating the Q-Network after each iteration, or it can be obtained by running $M$ different DRL algorithms to convergence. Moreover, the number of candidates $M$ need not be fixed in advance. For example, if one evaluated all the value models in $\mathcal Q$ and determined that none were adequate, one could then augment $\mathcal Q$ with more Q-functions obtained by running more DRL algorithms and evaluate those Q-functions as well.

\vspace{-0.05cm}
\subsection{Bellman Errors}
\label{sec:bellman_error}
\vspace{-0.05cm}

For any Q-function $Q$, the Bellman operator $\mathcal B^*$ satisfies:
\begin{equation}
\resizebox{\linewidth}{!} 
{
$\mathcal B^*Q(s,a)=\mathbb E\left[R_t+\gamma \max\limits_{a'}Q(S_{t+1},a')|S_t=s,A_t=a\right]$.}\label{eq:Bellman}
\end{equation}

It is known that $Q=\mathcal B^*Q$ if and only if $Q=Q^*$ \cite{Sutton2018}. The function $(\mathcal B^*Q)(s,a)$ is known as the \textit{Bellman backup} of Q-function $Q$ and $(Q-\mathcal B^*Q)(s,a)$ is known as its \textit{Bellman error}. As the Bellman errors are zero uniquely for $Q^*$, a reasonable approach is to assess candidates $Q_m, 1\leq m \leq M$ via their \textbf{mean squared Bellman error (MSBE)}: 
\begin{equation}
\mathbb E_{(s,a)\sim P^\mu}\left[\left(Q_m(s,a)-(\mathcal B^*Q_m)(s,a)\right)^2\right].\label{eq:msbe}
\end{equation}
Unfortunately, directly estimating the MSBE from our dataset $\mathcal D$ is not straightforward. For example, consider the \textbf{empirical mean squared Bellman error (EMSBE)}:
\begin{equation}
\mathbb E_{\mathcal D}\left[\left(Q_m(s,a)-r-\gamma\max_{a'\in\mathcal A}Q_m(s',a')\right)^2\right],\label{eq:emsbe}
\end{equation}
where $\mathbb E_{\mathcal D}$ denotes the empirical expectation over observed transitions $(s,a,r,s')\in\mathcal D$. \textit{Empirical} Bellman errors replace the true Bellman backup with a single sample bootstrapped from the observed dataset. Unless the environment is deterministic, the EMSBE will be biased for the true MSBE \cite{Baird1995, Farahmand2010}.

Fitted Q-Iteration (FQI) \cite{Ernst2005} and the DQN algorithm \cite{Mnih2015} perform updates:
\begin{equation*}
\vspace{-0.2cm}
\resizebox{\linewidth}{!} 
{
    $Q^{(k+1)}\leftarrow \underset{f}{\mbox{argmin}} \mathbb E_{\mathcal D}\left[\left(f(s,a)-r-\gamma\max\limits_{a'}Q^{(k)}(s',a')\right)^2\right].$
}
\end{equation*}

The terms $Q^{(k+1)}(s,a)-r-\gamma\max_{a'}Q^{(k)}(s',a')$ are often referred to as empirical Bellman errors as well, with the true Bellman error being $\epsilon^{(k+1)}=Q^{(k+1)}-\mathcal B^*Q^{(k)}$.  We refer to $\epsilon^{(k+1)}$ as a \textit{fixed-target} Bellman error, as the Bellman backup $\mathcal B^*Q^{(k)}$ remains fixed while the Q-function $Q^{(k+1)}$ is being evaluated. In contrast, our version of the Bellman error evaluates a Q-function by taking the difference between itself and its \textbf{own} Bellman backup, i.e. it is a \textit{variable-target} Bellman error. Unlike variable-target Bellman errors, fixed-target Bellman errors can be reliably replaced by their empirical counterparts, at least when using them for model training. As a result, FQI updates will often do a good job at minimizing the true fixed-target Bellman errors $\epsilon^{(k+1)}$ as well. Differences between these errors as well as between FQI and SBV are further discussed in Appendix \ref{append:fqi}.  Unless otherwise specified, ``Bellman errors" refers to variable-target Bellman errors.

\vspace{-0.1cm}
\section{Related Work}
\label{sec:previous}
\vspace{-0.1cm}

The most popular approach for OMS is to use an \textit{off-policy evaluation (OPE)} algorithm, which estimates the marginal expectation of returns $J(\pi)=\mathbb E_{\pi}[\sum_{t=0}^\infty \gamma^tR_t]$ under policies of interest $\pi\in\{\pi_{Q_1},\pi_{Q_2},...,\pi_{Q_M}\}$ from $\mathcal D$ \cite{Voloshin2021}. For example, importance sampling (IS) estimators such as per-decision IS estimators \cite{Precup2000}, doubly-robust IS estimators \cite{Jiang2016, Thomas2016} and marginal IS estimators \cite{Xie2019, Yang2020} estimate $J(\pi)$ from $\mathcal D$ by using importance weights to adjust for the distribution shift. Fitted Q-Evaluation (FQE) estimates $Q^{\pi}$ with an off-policy RL algorithm \cite{Le2019, Paine2020}. Some methods also perform statistical inference on the value function to aid in OMS \citep{Thomas2015, Shi2021}.

Unfortunately, these approaches often have difficulties with accurately estimating $J(\pi)$ \cite{Fu2021}. For example, per-decision IS usually has prohibitively large estimation variance while FQE introduces its own hyperparameters that cannot be easily tuned offline. Doubly-robust and most marginal IS estimators use function approximation to reduce variance, but at the cost of introducing hyperparameter-tuning difficulties shared by FQE.

In contrast to the previous approaches which are \textit{model-free}, model-based approaches estimate the underlying MDP using density estimation techniques \citep{Zhang2021, Voloshin2021model}. Accurately modelling the MDP in complex and high-dimensional settings can be difficult, and the most well-known OMS experiments involving models are restricted to MDPs with low-dimensional states \cite{Fu2021}. In contrast, while there have been a few works that use models to help train policies in pixel-valued settings \cite{Rafailov2021, schrittwieser2021}, we are unaware of previous attempts to tune hyperparameters offline on Atari from model-based roll-outs, and the difficulty of doing this successfully would likely warrant a paper in and of itself. As our proposed method (which achieves strong performance on Atari) is model-free, it is especially advantageous when estimating a model is difficult. That being said, even on a low-dimensional MDP, our method still outperforms model-based roll-outs (see Table \ref{table:model}).

The poor performance of empirical Bellman errors has led to several proposed alternatives. One such alternative, BErMin \cite{Farahmand2010}, estimates an upper bound on the MSBE with strong theoretical guarantees, and uses regression to estimate the Bellman backup similar to our method. Unlike our method, however, BErMin requires calculating tight excess risk bounds of the regression algorithm, which is often impractical in empirical settings. Furthermore, BErMin uses separate datasets to estimate the Q-functions and Bellman backups, reducing the amount of data we can use to estimate both. Finally, empirical performance was never evaluated. 

Another alternative, BVFT \citep{Xie2021, Zhang2021ps}, takes advantage of several theoretical properties of piecewise-constant projections, including the fact that an $L_2$ piecewise-constant projection of the Bellman operator will still be an $L_\infty$ contraction with the same fixed point under restrictive conditions. Instead of estimating the Bellman error directly, BVFT calculates a related criterion with its own theoretical guarantees. Our experiments on BVFT suggest that our method is more robust (see Figure \ref{figure:bvft}). Lastly, ModBE \cite{Lee2022} compares candidate functional classes by running FQI using one functional class and then using the fixed-target empirical Bellman errors minimized at every iteration to evaluate alternative classes. In contrast to our work, ModBE performs OMS based on fixed-target Bellman errors and can only compare nested model classes. See Appendix \ref{append:relation} for more discussion comparing our method to BErMin, BVFT and ModBE. 

\vspace{-0.1cm}
\section{Supervised Bellman Validation}
\label{sec:sbv}

\vspace{-0.05cm}
\subsection{Methodology}
\label{sec:method}
\vspace{-0.05cm}

To understand the intuition behind our algorithm, consider the case where $Q^*(s,a)$ is actually \textbf{known} for observed state-action pairs $(s,a)\in\mathcal D$ and we wish to evaluate candidates $Q_m$, $1\leq m \leq M$ based on how well they estimate $Q^*$. An obvious criterion in this case would be the mean squared error (MSE): 
\begin{equation}
\mathbb E_{(s,a)\sim P^\mu}\left[\left(Q^*(s,a)-Q_m(s,a)\right)^2\right].\label{eq:q_mse}
\end{equation}
While $P^\mu$ is unknown, we can still estimate the expectation in Equation \ref{eq:q_mse} by randomly partitioning $80\%$ of the trajectories present in $\mathcal D$ into a training set $\mathcal D_T$ and reserving the remaining $20\%$ of trajectories as a validation set $\mathcal D_V$. We would then generate candidates $\mathcal Q=\{Q_1,...,Q_M\}$ by running DRL algorithms on $\mathcal D_T$ with $M$ different hyperparameter configurations, and use $\mathcal D_V$ to estimate the MSE for each $Q_m$ as $\mathbb E_{\mathcal D_V}\left[\left(Q^*(s,a)-Q_m(s,a)\right)^2\right]$. 

Typically, the targets $Q^*(s,a), (s,a)\in\mathcal D$ are not known: this is what separates supervised learning from RL. Instead of using a criterion based on Equation \ref{eq:q_mse}, our algorithm, \textit{Supervised Bellman Validation (SBV)}, uses a surrogate criterion based on the MSBE (Equation \ref{eq:msbe}). The relationship between estimation error and Bellman error is discussed more in Section \ref{sec:theory}. Similar to the supervised learning case, SBV creates a training set $\mathcal D_T$ and a validation set $\mathcal D_V$ by randomly partitioning trajectories from $\mathcal D$, and trains $M$ Q-functions $\mathcal Q=\{Q_1,...,Q_M\}$ on $\mathcal D_T$. 

Note that the MSBE contains \textbf{two} unknown quantities: the population density $P^\mu$, and the $M$ Bellman backup functions $\mathcal B^*Q_m, 1 \leq m \leq M$. We can see from Equation \ref{eq:Bellman} that each $(\mathcal B^*Q_m)(s,a)$ is just a conditional expectation. Moreover, it is well-known that a regression algorithm with an MSE loss function will estimate the conditional expectation of its targets \cite{Hastie2009}. Therefore, the $M$ Bellman backup functions can be estimated by running $M$ regression algorithms on $\mathcal D_T$, with the $m$th such algorithm estimating $\mathcal B^*Q_m$ by fitting function $f$ to minimize:
\begin{equation}
\mathbb E_{\mathcal D_T}\left[\left(r+\gamma\max_{a'}Q_m(s',a')-f(s,a)\right)^2\right].\label{eq:mse_backup}
\end{equation}
We refer to Equation \ref{eq:mse_backup} as the \textit{Bellman backup MSE} of $Q_m$. Denote the fitted models from our regression algorithms as $\widehat{\mathcal B}^*Q_1,...,\widehat{\mathcal B}^*Q_M$. The MSBE for each candidate $Q_m$ can then be estimated as $\mathbb E_{\mathcal D_V}[(Q_m(s,a)-(\widehat{\mathcal B}^*Q_m)(s,a))^2]$.

\begin{algorithm}[t]
  \caption{Supervised Bellman Validation (SBV)}\label{alg:sbv}
  \begin{algorithmic}[1]
    \REQUIRE{Offline dataset $\mathcal D=\{(s,a,r,s')\}$} 
    \REQUIRE{Set of offline RL algorithms \\ $\mathcal H=\{H_1,...,H_M\}$}
    \STATE{Randomly partition trajectories in $\mathcal D$ to training set $\mathcal D_T$ and validation set $\mathcal D_V$}
    \FOR{algorithm $m\in\{1,...,M\}$}
        \STATE{Estimate $Q^*$ as $Q_m$ by running offline RL algorithm $H_m$ on $\mathcal D_T$}
        \STATE{Estimate $\mathcal B^*Q_m$ as $\widehat{\mathcal B}^*Q_m$ by minimizing the Bellman backup MSE of $Q_m$ on $\mathcal D_T$ (Equation \ref{eq:mse_backup})}
        \STATE{Estimate the MSBE of $Q_m$ as \\ $\mathbb E_{\mathcal D_V}[(Q_m(s,a)-(\widehat{\mathcal B}^*Q_m)(s,a))^2]$}
    \ENDFOR
    \STATE{{\bfseries Output:} $Q_{m^*}$ as our estimate of $Q^*$ where $m^*=\mbox{argmin}_{1\leq m \leq M}\mathbb E_{\mathcal D_V}[(Q_m(s,a)-(\widehat{\mathcal B}^*Q_m)(s,a))^2]$}
  \end{algorithmic}
\end{algorithm}

Our algorithm is summarized in Algorithm \ref{alg:sbv}. Here $H_m$ fully specifies an algorithm and its relevant hyperparameters for estimating $Q^*$. For DRL, this would include the training algorithm (e.g. dueling DQN \cite{Wang2016} or QR-DQN \cite{Dabney2018}), the Q-network architecture and the number of training iterations. While the RL algorithm can be tuned via SBV, the regression algorithm employed by SBV to estimate the relevant Bellman backups itself must be tuned. Fortunately, regression algorithms can easily be tuned offline using MSE on a held-out validation set, which in this case would be $\mathcal D_V$. For example, Algorithm \ref{alg:sbv_tune} extends Algorithm \ref{alg:sbv} to tune the regression algorithm, separately for each Bellman backup that is estimated. 

For complex control problems where deep RL is required, it is a good idea to estimate each $\mathcal B^*Q_m, 1 \leq m \leq M$ with a neural network approximator. In such cases, we will refer to this neural network as a \textit{Bellman network}. For computational efficiency, the same Bellman network architecture and training configuration can be used for estimating all Bellman backups. For example, Algorithm \ref{alg:dqn_sbv} provides a computationally-efficient implementation of SBV for tuning the number of training iterations used by DQN.

\vspace{-0.05cm}
\subsection{Theoretical Analysis}
\label{sec:theory}
\vspace{-0.05cm}

We begin with some basic theoretical properties of Bellman errors, empirical Bellman errors and SBV. We assume $|\mathcal S|<\infty$ here: Extensions to uncountable state spaces and relevant mathematical proofs can be found in Appendix \ref{append:proofs}. For any Q-function $Q$, density $P$ of state-action pairs and dataset of transitions $\mathcal{D}=\{(s,a,r,s')\}$, let $||Q||_{P}^2=\mathbb E_{(s,a)\sim P}[Q(s,a)^2]$ and $||Q||_{\mathcal{D}}^2=|\mathcal{D}|^{-1}\sum_{(s,a)\in \mathcal{D}}[Q(s,a)^2]$. Define the empirical Bellman backups for Q-function $Q$ and dataset $\mathcal{D}$ as $(\mathcal B_{\mathcal{D}}Q)(s,a)=r+\gamma\max_{a'}Q(s',a'), \ (s,a,r,s')\in\mathcal{D}$. For example, the MSBE (Equation \ref{eq:msbe}) and the EMSBE (Equation \ref{eq:emsbe}) can be re-written as $||Q_m-\mathcal B^*Q_m||_{P^\mu}^2$ and $||Q_m-\mathcal B_{\mathcal D}Q_m||_{\mathcal D}^2$, respectively. 

The results and proof techniques of our first proposition, Proposition \ref{prop1}, resembles those used by more recent theoretical work (e.g. \citet{Xie2021, Chen2022, Uehara2022}).  Proposition \ref{prop1} states that the candidate Q-function selected by the MSBE is guaranteed to be an accurate estimate of $Q^*$ and have a high-performing greedy policy provided the MSBE of the selected policy is sufficiently small and the observed data covers the state-action space adequately. In other words, the MSBE upper bounds estimation error, lower bounds policy performance and is minimized uniquely at $Q^*$. Moreover, even if the MSBE is unknown and estimated with error, the same results will hold for the estimated MSBE provided the estimation is sufficiently accurate. These properties constitute strong guarantees of the MSBE and justify its utility in OMS.

\begin{proposition}
\label{prop1}
Assume $P^\mu(s,a)\geq \psi$ for some $\psi>0$ and all $(s,a)\in\mathcal S\times\mathcal A$. Let $\hat m(Q_m)$ be an estimate of \\ $||Q_m-\mathcal B^*Q_m||_{P^\mu}$ with absolute estimation error $e(\hat m(Q_m))$ and assume $\hat m(Q_m)\leq\epsilon$ and $e(\hat m(Q_m))\leq\delta$.\\ Then i) $||Q_m-Q^*||_{P^\mu}\leq \frac{1}{\sqrt{\psi}(1-\gamma)}(\epsilon+\delta)$ and \\ ii) $J(\pi^*)-J(\pi_{Q_m})\leq \frac{2}{\psi(1-\gamma)^2}(\epsilon+\delta)$.
\end{proposition}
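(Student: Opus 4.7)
The plan is to first convert the estimator-level assumptions into a clean bound on the true root-MSBE. By the triangle inequality,
\[
\|Q_m - \mathcal{B}^* Q_m\|_{P^\mu} \leq \hat m(Q_m) + e(\hat m(Q_m)) \leq \epsilon + \delta,
\]
so everything downstream reduces to bounding quantities in terms of this single term. The central tool is the coverage assumption $P^\mu(s,a)\geq\psi$, which yields two norm conversions on Q-functions: $\|f\|_\infty \leq \frac{1}{\sqrt\psi}\|f\|_{P^\mu}$ (since $\psi\, f(s,a)^2 \leq \|f\|_{P^\mu}^2$ for every $(s,a)$) and $\|f\|_{P^\mu} \leq \|f\|_\infty$ (since $P^\mu$ is a probability measure).

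For part (i), I would work in sup-norm and then translate back. Using $Q^*=\mathcal{B}^* Q^*$ together with the well-known $\gamma$-contraction of $\mathcal{B}^*$ in $\|\cdot\|_\infty$,
\[
\|Q_m - Q^*\|_\infty \leq \|Q_m - \mathcal{B}^* Q_m\|_\infty + \gamma\|Q_m - Q^*\|_\infty,
\]
which rearranges to $\|Q_m-Q^*\|_\infty \leq (1-\gamma)^{-1}\|Q_m-\mathcal{B}^* Q_m\|_\infty$. Chaining this with the two norm conversions gives
\[
\|Q_m-Q^*\|_{P^\mu} \leq \|Q_m-Q^*\|_\infty \leq \frac{\|Q_m-\mathcal{B}^* Q_m\|_{P^\mu}}{\sqrt\psi(1-\gamma)} \leq \frac{\epsilon+\delta}{\sqrt\psi(1-\gamma)},
\]
which is (i).

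For part (ii), I would invoke the classical policy-suboptimality bound $J(\pi^*)-J(\pi_Q) \leq \frac{2}{1-\gamma}\|Q-Q^*\|_\infty$. I would include a short self-bounding derivation: decompose $V^*(s) - V^{\pi_Q}(s) = (Q^*(s,\pi^*(s))-Q(s,\pi^*(s))) + (Q(s,\pi_Q(s))-Q^{\pi_Q}(s,\pi_Q(s)))$ using greediness of $\pi_Q$ with respect to $Q$, bound the second term by unrolling one step of the Bellman equation for $Q^{\pi_Q}$, and close the recursion. Now I apply one more $1/\sqrt\psi$ conversion to the part (i) bound: $\|Q_m-Q^*\|_\infty \leq \frac{1}{\sqrt\psi}\|Q_m-Q^*\|_{P^\mu} \leq \frac{\epsilon+\delta}{\psi(1-\gamma)}$, which plugged into the suboptimality bound yields $J(\pi^*) - J(\pi_{Q_m}) \leq \frac{2(\epsilon+\delta)}{\psi(1-\gamma)^2}$.

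The main subtlety I anticipate is the promised extension to uncountable $\mathcal{S}$ in the appendix. The sup/$L^2(P^\mu)$ exchange above relies on a pointwise lower bound on $P^\mu$, which is vacuous when $P^\mu$ is a continuous density. I would expect the appendix version to replace $P^\mu(s,a)\geq\psi$ by a Radon-Nikodym-style lower bound relative to a reference measure---so that $\|\cdot\|_\infty$ is replaced by an essential supremum---and to verify measurability of $\max_{a'}Q(\cdot,a')$ (easy here since $\mathcal{A}$ is finite), but the algebraic skeleton of the proof should carry through unchanged.
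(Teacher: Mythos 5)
Your proof is correct, and it takes a genuinely different route from the paper's. The paper proves the general-state-space version (Proposition \ref{prop1_append}) by a purely $L_2(P^\mu)$ argument: Minkowski plus a telescoping expansion over time steps along the ``max-error'' policy, with the distribution-shift ratios $P^\mu_t/P^\mu$ controlled by $B_T/\psi$ (Lemmas B.1.1--B.1.2); the main-text constants then follow with $B_T=1$ in the finite case. You instead convert norms under the pointwise coverage assumption, $\|f\|_\infty\leq\psi^{-1/2}\|f\|_{P^\mu}\,$ and $\|f\|_{P^\mu}\leq\|f\|_\infty$, and run the standard $\gamma$-contraction argument in sup norm together with the classical greedy-policy suboptimality bound $J(\pi^*)-J(\pi_Q)\leq\frac{2}{1-\gamma}\|Q-Q^*\|_\infty$ --- essentially the paper's Proposition \ref{prop0_append} composed with the norm equivalences. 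Both are valid for the statement as given (which assumes $|\mathcal S|<\infty$), and your constants match; in fact, if in part (ii) you bounded $\|Q_m-Q^*\|_\infty$ directly by $\frac{1}{\sqrt{\psi}(1-\gamma)}(\epsilon+\delta)$ rather than routing through part (i), your method yields the strictly tighter constant $\frac{2}{\sqrt{\psi}(1-\gamma)^2}$, since $\psi\leq 1$. What the paper's heavier $L_2$ machinery buys is exactly what you flag at the end: the sup-norm/$L_2$ exchange via a pointwise lower bound on $P^\mu$ breaks down for continuous densities (there $\int f^2\,d\lambda$ no longer dominates $\|f\|_\infty^2$), whereas the density-ratio telescoping survives with constant $\sqrt{B_T/\psi}$ and even under the weakened concentrability-type conditions the appendix discusses ($\max_\pi\|\sqrt{(1-d)P^\pi_t/P^\mu}\|_\infty=o(1/\gamma^t)$). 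One small nit: your displayed decomposition of $V^*(s)-V^{\pi_Q}(s)$ holds as an inequality (using greediness to replace $Q(s,\pi^*(s))$ by $Q(s,\pi_Q(s))$), not as an equality, though this is immaterial since the resulting lemma is standard.
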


Proposition \ref{prop1} also suggests a few reasons why the MSBE has performed poorly in previous literature: (1) The MSBE was not estimated accurately; (2) The behavioral policy did not perform enough exploration and there was not sufficient diversity in the observed state-action pairs; (3) The evaluated Q-functions all had MSBE values that were too high. In Appendix \ref{append:proofs}, we conduct a more thorough theoretical analysis of the MSBE and propose additional factors that could impact its performance. We also discuss how we can relax our assumption that $P^\mu(s,a)\geq \psi$ to a slightly weaker coverage assumption that better resembles those made by previous work \cite{Munos2005}. Note that the bounds on estimation error are tighter than those on policy regret: This implies that Bellman errors are more closely associated with estimation error than with policy performance, and will primarily select high-quality policies by selecting accurate Q-functions. 
 
Let $P^{\mathcal D}(s,a)$ be the proportion of state-action pairs in dataset $\mathcal D$ equal to $(s,a)$. When studying the theoretical performance of the EMSBE and SBV, we focus on the setting where $|\mathcal D|=\infty$ or $P^{\mathcal D}=P^\mu$. An important avenue for future work is to extend our theory to finite-sample settings. Proposition \ref{prop2} is similar to theoretical results derived in previous work (e.g. \citet{Farahmand2010}) and states that the EMSBE is not equal to the true MSBE even with infinite samples unless the environment is deterministic. This implies that the EMSBE is biased, with the degree of bias depending on the amount of noise in the MDP. 

\begin{proposition}
\label{prop2}
Assume that $P^{\mathcal D}=P^\mu$. Then:
\begin{equation*}
\begin{split}
||Q_m-\mathcal B_{\mathcal D}Q_m||^2_{\mathcal D}-||Q_m-\mathcal B^*Q_m||^2_{P^\mu}=\quad\quad&\\\mathbb E_{(S_t,A_t)\sim P^\mu}\left\{\mbox{Var}\left[R_t+\gamma\max_{a'\in\mathcal A}Q_m(S_{t+1},a')|S_t,A_t\right]\right\}.
\end{split}
\end{equation*}
\end{proposition}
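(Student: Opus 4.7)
The plan is to recognize this as a standard bias--variance decomposition once the notation is unpacked. Let me abbreviate the random empirical target as $Y=R_t+\gamma\max_{a'}Q_m(S_{t+1},a')$, viewed as a random variable under $(S_t,A_t,R_t,S_{t+1})\sim P^\mu$ (with $R_t$, $S_{t+1}$ drawn from $R$ and $T$ respectively). By the definition of the Bellman operator in Equation~\ref{eq:Bellman}, $(\mathcal B^*Q_m)(S_t,A_t)=\mathbb E[Y\mid S_t,A_t]$, and by the definition of the empirical Bellman backup, $(\mathcal B_{\mathcal D}Q_m)(s,a)=Y$ evaluated on each transition in $\mathcal D$.

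First I would use the hypothesis $P^{\mathcal D}=P^\mu$ to rewrite the empirical norm as a true expectation, so that
\[
\|Q_m-\mathcal B_{\mathcal D}Q_m\|^2_{\mathcal D}=\mathbb E_{P^\mu}\bigl[(Q_m(S_t,A_t)-Y)^2\bigr],
\]
and correspondingly $\|Q_m-\mathcal B^*Q_m\|^2_{P^\mu}=\mathbb E_{P^\mu}\bigl[(Q_m(S_t,A_t)-\mathbb E[Y\mid S_t,A_t])^2\bigr]$. The key observation is that $Q_m(S_t,A_t)$ is a deterministic function of $(S_t,A_t)$, so when we condition on $(S_t,A_t)$ it acts as a constant $c$ inside the conditional expectation, while $Y$ remains random through $R_t$ and $S_{t+1}$.

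Next I would apply the elementary identity $\mathbb E[(c-Y)^2\mid S_t,A_t]=(c-\mathbb E[Y\mid S_t,A_t])^2+\mathrm{Var}(Y\mid S_t,A_t)$ with $c=Q_m(S_t,A_t)$, obtained by adding and subtracting $\mathbb E[Y\mid S_t,A_t]$ inside the square and noting that the cross term vanishes. Taking the outer expectation over $(S_t,A_t)\sim P^\mu$ and using the tower property gives
\[
\mathbb E_{P^\mu}\bigl[(Q_m-Y)^2\bigr]=\|Q_m-\mathcal B^*Q_m\|^2_{P^\mu}+\mathbb E_{P^\mu}\bigl[\mathrm{Var}(Y\mid S_t,A_t)\bigr],
\]
which rearranges to the claim.

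There is no real obstacle here; this is a routine manipulation. The only subtle point worth stating carefully in the write-up is that the assumption $P^{\mathcal D}=P^\mu$ is being used not only to replace the state-action average but also implicitly to treat the rewards and next states attached to each $(s,a)\in\mathcal D$ as drawn from the true conditional distributions $R(\cdot\mid s,a)$ and $T(\cdot\mid s,a)$; otherwise the conditional variance term would be over the empirical rather than the true conditional law. With that understanding, the bias--variance split completes the proof.
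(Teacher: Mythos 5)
Your proof is correct and follows essentially the same route as the paper's: both rewrite the empirical norm as an expectation over the true transition distribution (using $P^{\mathcal D}=P^\mu$, which the appendix formalizes as the empirical distribution function coinciding with the true one), then add and subtract the conditional mean $(\mathcal B^*Q_m)(S_t,A_t)$ so that the cross term vanishes and the residual term is exactly the conditional variance. Your closing remark about the assumption also covering the conditional law of $(R_t,S_{t+1})$ given $(s,a)$ matches how the paper interprets the hypothesis, so nothing is missing.
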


SBV reduces bias of the EMSBE by using a regression algorithm. The goal of $\widehat{\mathcal B}^* Q_m$ is not to be close to the targets $R_t+\gamma\max_{a'}Q_m(S_{t+1},a')$ from Equation \ref{eq:mse_backup} per se. Instead, we want $\widehat{\mathcal B}^* Q_m$ to be close to $\mathcal B^*Q_m$, or the expectation of these targets conditional on $(S_t,A_t)$. This difference matters because using these targets directly when estimating the MSBE leads to bias, as shown in Proposition \ref{prop2}. As the MSE loss function $\mathbb E[(Y-f(X))^2]$ is minimized at 
$f(X)=\mathbb E[Y|X]$ \cite{Hastie2009}, minimizing Equation \ref{eq:mse_backup} is effective at recovering $\mathcal B^*Q_m$. This is formalized in Proposition \ref{prop3}, which states that SBV recovers the MSBE asymptotically and indicates its potential in reducing bias. 

\begin{proposition}
\label{prop3}
Assume $P^{\mathcal D_V}=P^{\mathcal D_T}=P^\mu$ and $\widehat{\mathcal B}^*Q_m=\mbox{argmin}_{f}||\mathcal B_{\mathcal D_T}Q_m-f||^2_{\mathcal D_T}$. Then $||Q_m-\widehat{\mathcal B}^*Q_m||^2_{\mathcal D_V}=||Q_m-{\mathcal B}^*Q_m||^2_{P^\mu}$.
\end{proposition}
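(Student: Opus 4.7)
The plan is a short three-step argument that chains together: (i) the population-vs-empirical equivalence baked into the assumption $P^{\mathcal D_T}=P^\mu$, (ii) the characterization of the MSE minimizer as a conditional expectation (which is exactly what Equation \ref{eq:Bellman} uses to define $\mathcal B^*Q_m$), and (iii) the analogous equivalence $P^{\mathcal D_V}=P^\mu$ for the validation computation.

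First I would unpack the hypothesis $P^{\mathcal D_T}=P^\mu$. Since $|\mathcal S|<\infty$ and the transitions in $\mathcal D_T$ are drawn i.i.d.\ from the MDP under $\mu$, the statement that the empirical marginal on $(s,a)$ matches $P^\mu$ is effectively saying we are in the population regime, so that for every $(s,a)$ with $P^\mu(s,a)>0$ the empirical conditional distribution of $(r,s')$ given $(s,a)$ also matches the true conditional law $R(s,a,s')T(s'|s,a)$. Consequently, for any function $f$,
\begin{equation*}
\|\mathcal B_{\mathcal D_T}Q_m - f\|_{\mathcal D_T}^2 = \mathbb E_{(S_t,A_t)\sim P^\mu}\!\left[\,\mathbb E\!\left[\left(R_t+\gamma\max_{a'}Q_m(S_{t+1},a')-f(S_t,A_t)\right)^2\bigm|S_t,A_t\right]\right].
\end{equation*}

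Next I would apply the standard fact that the conditional mean minimizes conditional MSE \cite{Hastie2009}: pointwise for each $(s,a)$ in the support of $P^\mu$, the inner conditional expectation is minimized by $f(s,a)=\mathbb E[R_t+\gamma\max_{a'}Q_m(S_{t+1},a')\mid S_t=s,A_t=a]$, which by Equation \ref{eq:Bellman} is exactly $(\mathcal B^*Q_m)(s,a)$. Hence the unrestricted argmin defining $\widehat{\mathcal B}^*Q_m$ satisfies $\widehat{\mathcal B}^*Q_m(s,a)=(\mathcal B^*Q_m)(s,a)$ for all $(s,a)$ in the support of $P^\mu$ (values off support are irrelevant since they receive zero weight under $P^\mu$ and under $P^{\mathcal D_V}=P^\mu$).

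Finally I would substitute this identification into the left-hand side: since $\widehat{\mathcal B}^*Q_m$ agrees with $\mathcal B^*Q_m$ $P^\mu$-almost surely and $P^{\mathcal D_V}=P^\mu$,
\begin{equation*}
\|Q_m - \widehat{\mathcal B}^*Q_m\|_{\mathcal D_V}^2 \;=\; \|Q_m - \mathcal B^*Q_m\|_{\mathcal D_V}^2 \;=\; \|Q_m - \mathcal B^*Q_m\|_{P^\mu}^2,
\end{equation*}
which is the claim. The only subtlety worth flagging is interpretational rather than technical: the statement $P^{\mathcal D_T}=P^\mu$ only constrains the marginal over $(s,a)$, so I would make explicit in the write-up that in this asymptotic/population regime the full joint over $(s,a,r,s')$ also coincides with the MDP-induced joint, since otherwise one cannot conclude that the argmin of the empirical MSE equals the true conditional expectation. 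That reconciliation is the main (minor) obstacle; everything else is a direct application of the MSE-minimization characterization.
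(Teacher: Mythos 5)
Your argument is correct and is essentially the paper's own proof: rewrite the training objective as a population MSE with targets $r+\gamma\max_{a'}Q_m(s',a')$, identify the unrestricted minimizer with the conditional expectation $\mathcal B^*Q_m$ ($P^\mu$-almost surely), and then use $P^{\mathcal D_V}=P^\mu$ to pass from the validation norm to the population norm. The marginal-versus-joint subtlety you flag is resolved in the paper's appendix exactly as you suggest, by phrasing the assumption in terms of the full transition distribution ($\|F^{\mathcal D_T}-F^\mu\|_\infty=0$) rather than only the state-action marginal.
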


\vspace{-0.1cm}
\section{Empirical Results}
\label{sec:experiments}

\vspace{-0.05cm}
\subsection{Case Study: Toy Environment}
\label{sec:toy}
\vspace{-0.05cm}

\begin{figure}[t]
\vspace{-0.1cm}
\includegraphics[width=\linewidth]{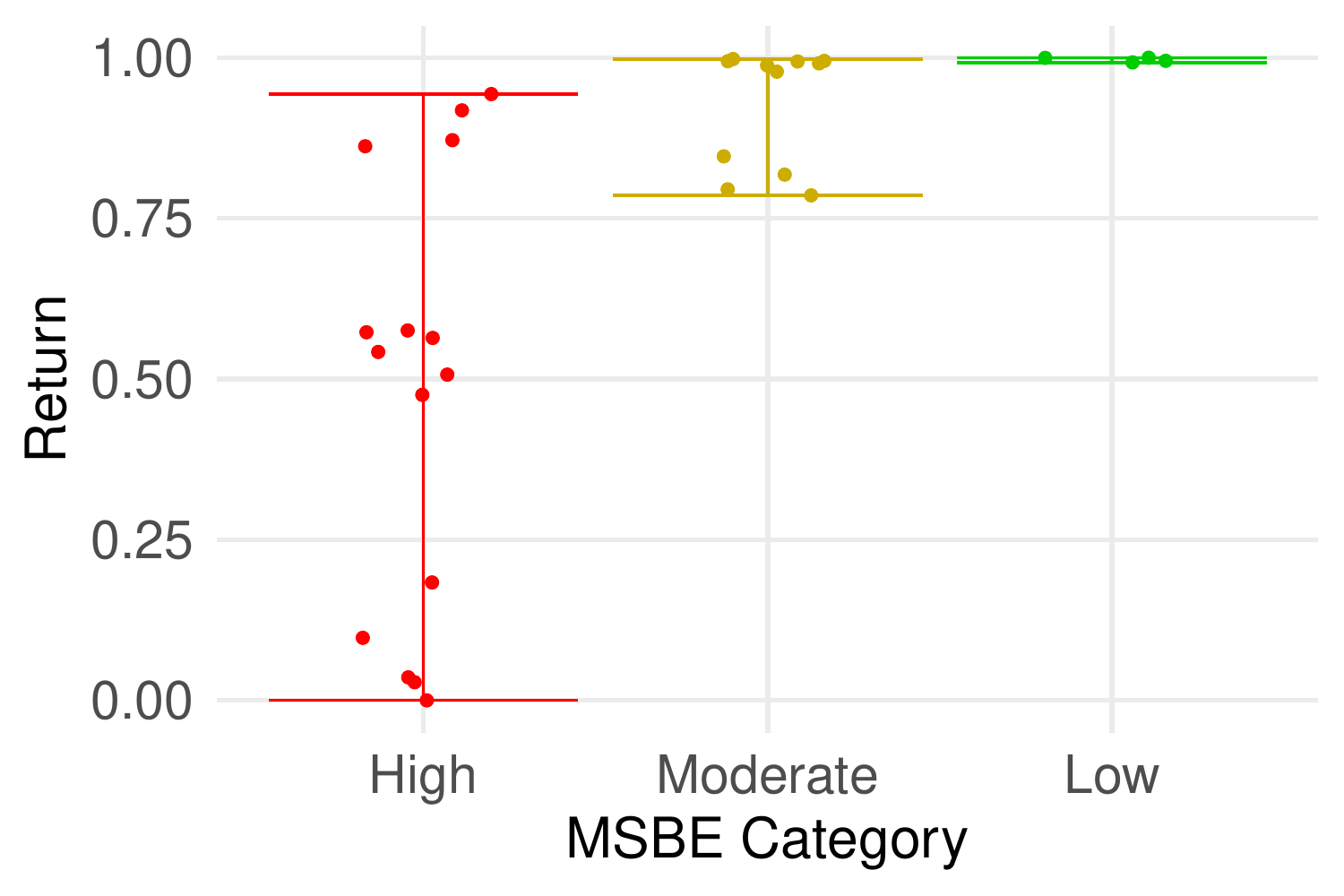}
\centering
\caption{Returns vs. MSBE. Each data point represents an estimate of $Q^*$ and the mean return of its greedy policy. Estimates are grouped by the size of their MSBE. The vertical bars represent the range of observed returns within each category. As the MSBE decreases, this range decreases and returns get more concentrated around the optimal return.
}
\label{figure:msbe}
\vspace{-0.4cm}
\end{figure}

To study the empirical properties of the MSBE (Equation \ref{eq:msbe}), EMSBE (Equation \ref{eq:emsbe}) and SBV algorithm (Algorithm \ref{alg:sbv}), we consider a simple 4-state MDP where the candidate set
consists of $Q^*$ as well as $29$ Q-functions generated by running ridge-regularized polynomial FQI on a small offline dataset (see Appendix \ref{append:toy} for details). In Figure \ref{figure:msbe}, we plot the returns of our various Q-functions and group Q-functions by their MSBE values. MSBE values greater than that of the zero function are considered ``high" while estimates with MSBE values close to zero are considered ``low". We can see that as the MSBE decreases, the floor of the observed return distribution increases and returns get more concentrated around the optimal return. These empirical findings are in-line with Proposition \ref{prop1}, verifying that Bellman errors lower bound the expected return.

We can see from Figure \ref{figure:msbe_extra} that the Spearman correlation between the MSBE and returns is imperfect, but this does not preclude the MSBE from selecting high-performing policies. Because high Spearman correlation is not necessary for OMS, we do not focus on this metric for our experiments. We can also see from Figure \ref{figure:msbe_extra} that among the high MSBE Q-functions, the Q-function with smallest MSBE only has return $0.185$, while the best Q-function still has a return of $0.953$. The issue is that the MSBE values are all too high to be informative. However, once the Q-functions with low MSBE are included, the top Q-functions selected by the MSBE all have returns very close to that of the optimal policy. These results imply that the MSBE will be effective for OMS if our candidate set contains Q-functions with sufficiently low MSBE (again in-line with Proposition \ref{prop1}). 

The noise in the MDP dynamics is controlled by a stochasticity parameter $\phi$, where $\phi=0$ corresponds to a deterministic MDP. Figure \ref{figure:msbe} uses $\phi=0.25$. We then generated offline datasets for different values of $\phi$, and generated our candidate set similar to before. For each candidate $Q_j,1\leq j \leq 30$, SBV estimated $\mathcal B^*Q_j$ using polynomial ridge regression with hyperparameters tuned to minimize Bellman backup MSE on the validation set, as discussed in Algorithm \ref{alg:sbv_tune}. We compared SBV to the EMSBE over the validation set (the EMSBE over the full dataset performed worse). From Figure \ref{figure:noise}, we can see that the EMSBE's performance declines rapidly as noise increases, while the performance of SBV remains stable. Relative to the EMBSE, SBV is more robust to environment noise and reduces bias, in-line with Propositions \ref{prop2} and \ref{prop3}.

\begin{figure}[t]
\vspace{-0.1cm}
\includegraphics[width=\linewidth]{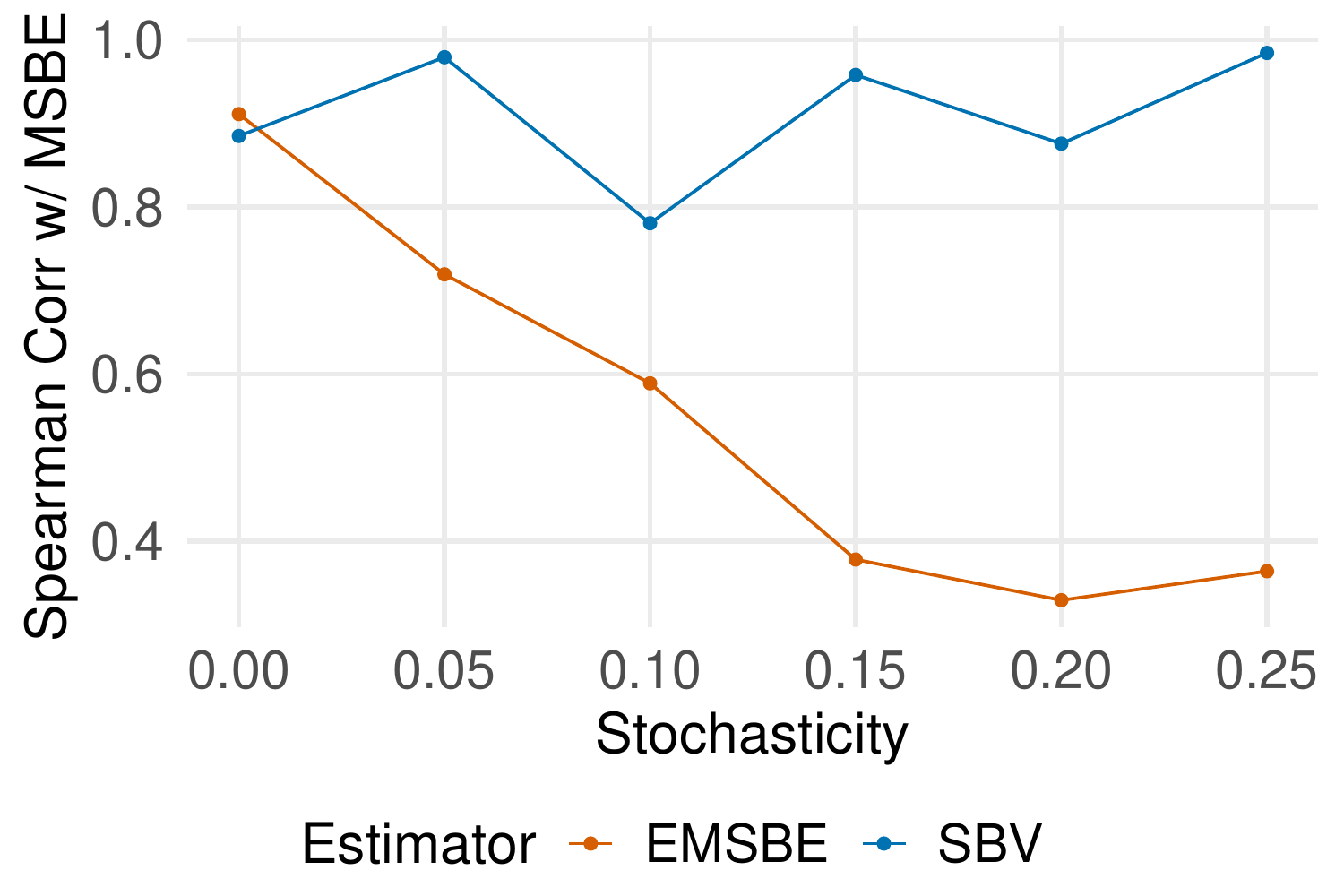}
\centering
\caption{Performance vs. Stochasticity. The environment is deterministic when the stochasticity parameter is zero, and becomes noisier as the stochasticity parameter increases.}
\label{figure:noise}
\vspace{-0.4cm}
\end{figure}

\vspace{-0.05cm}
\subsection{Robotics and Healthcare Environments}
\label{sec:Bike}
\vspace{-0.05cm}

\begin{table*}[b]
\vspace{-0.1cm}
\centering
\captionsetup{margin=1cm}
\caption{Top Policies According to FQE. Note that FQE performance (top-3 policy value) is sensitive to its training algorithm. As FQE cannot tune its hyperparameters offline, this sensitivity precludes it from being a practical OMS algorithm. SBV doesn't have this problem because it can be tuned via validation MSE.}\label{table:fqe}
\begin{tabular}{ llllll } 
\toprule
Dataset & FQE Training Algorithm & Top-3 Policy Value & Top-Ranked Estimator for $Q^*$ & \\
\midrule
\multirow{2}{4em}{\textbf{Bike}} & FQI, $n_{\min}=625,m_{\text{try}}=5$ & 0.239 & FQI, $n_{\min}=1,m_{\text{try}}=1$ \\ 
& FQI, $n_{\min}=5,m_{\text{try}}=3$ & 0.878 & FQI, $n_{\min}=5,m_{\text{try}}=3$  \\
\midrule 
\multirow{2}{4em}{\textbf{mHealth}} & Quadratic LSPI, $\lambda=100$ & 0 & Quadratic LSPI, $\lambda=100$ \\ 
& Quadratic LSPI, $\lambda=0$ & 0.984 & Quadratic LSPI, $\lambda=0$ \\
\bottomrule
\end{tabular}
\vspace{-0.35cm}
\end{table*}

We next assessed the empirical performance of SBV on two well-known discrete control problems: The bicycle balancing problem \cite{Randlov1998} and the mobile health (mHealth) problem \cite{Luckett2020}. These environments were chosen due to their diverse characteristics: the Bicycle MDP has highly nonlinear transition dynamics, sparse rewards and little environmental noise, and is typically associated with larger offline datasets. In contrast, the mHealth MDP has simple transition dynamics, dense rewards and a large amount of environmental noise, and is typically associated with very small offline datasets.

In addition to SBV and validation EMSBE, we also evaluated weighted per-decision importance sampling (WIS) \cite{Precup2000} and Fitted Q-Evaluation (FQE) \cite{Le2019}: WIS is one of the few OPE algorithms that can tune its hyperparameters offline, while FQE has achieved state-of-the-art performance in terms of model-free OMS \cite{Fu2021, Tang2021}. See Appendix \ref{append:baselines} for more discussion of these baselines. As doubly-robust and marginal IS estimators suffer from large variance like WIS or have hyperparameters that cannot be easily tuned offline like FQE, we conjectured that the problems observed from our selected OPE benchmarks would also be observed by these estimators. Limited experiments on BVFT and model-based evaluations were also discussed in Section \ref{sec:previous}, Figure \ref{figure:bvft} and Table \ref{table:model}, though we leave a more comprehensive evaluation to future work.

For the Bicycle control problem, we generated 10 offline datasets consisting of $240$ episodes of $500$ time steps each and our candidate Q-functions were primarily random forest functions fit using FQI, following \citet{Ernst2005}. For the mHealth control problem, we generated 10 offline datasets consisting of $30$ episodes of $25$ time steps each, following \citet{Luckett2020}, and candidate Q-functions were primarily polynomial functions fit by Least Square Policy Iteration (LSPI) \cite{Lagoudakis2003}. When implementing SBV, each Bellman backup function was estimated using a different regression algorithm tuned to minimize validation MSE, in-line with Algorithm \ref{alg:sbv_tune}. Moreover, the true behavioral policy was used when implementing WIS. More details on our environments and experimental setup can be found in Appendix \ref{append:non-atari}. Results are given in Figure \ref{figure:bike}.

\begin{figure}[t]
\vspace{-0.1cm}
\includegraphics[width=\linewidth]{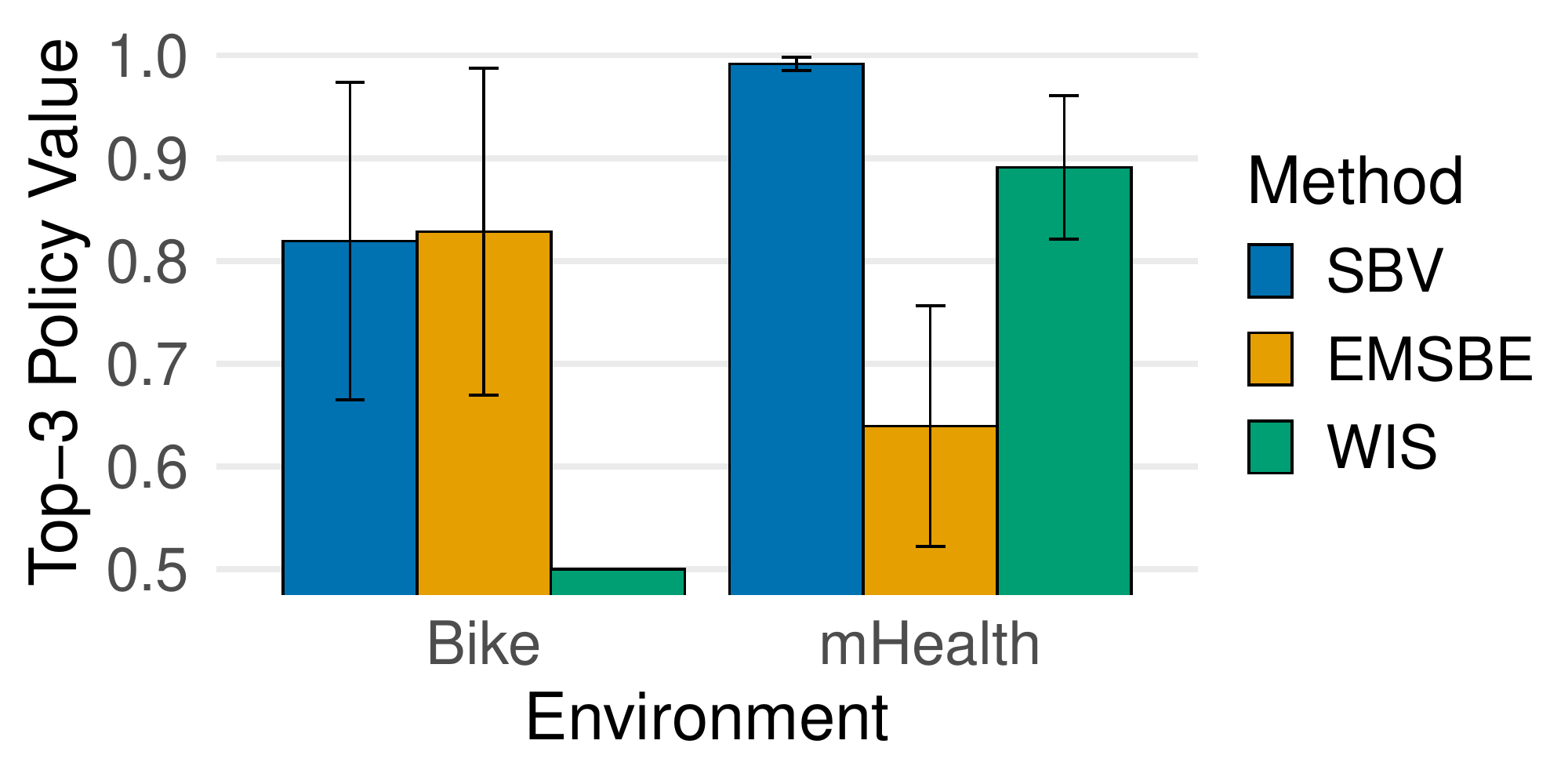}
\centering
\caption{Mean Top-3 Policy Values. For each dataset and method, we calculated the mean policy value of the top-3 policies and standardized to $[0,1]$. Solid bars show the mean and error bars show the std of this metric across datasets. Only SBV performs well on both environments.}
\label{figure:bike}
\vspace{-0.4cm}
\end{figure}

The estimation variance of WIS makes it difficult to account for long-term consequences of actions, as well as rewards that occur far away from the initial state. For the Bicycle datasets, a non-zero reward is usually only observed after nearly 100 time steps. As a result, WIS gives an identical estimate of zero for almost all policies (see Appendices \ref{append:baselines} and \ref{append:bike} for more details). On the other hand, the EMSBE performs well as there is only a small amount of noise in the MDP. For the mHealth datasets, rewards are dense and long-term consequences of actions are less important, but the MDP is noisier. Therefore, WIS performs much better, while EMSBE performs much worse. Only SBV performs well on both environments. 

\begin{table*}[b]
\vspace{-0.1cm}
\centering
\captionsetup{margin=1cm}
\caption{Standardized Top-5 Policy Mean Returns. A mean return of 0\% (100\%) for a dataset implies that the method choose the worst (best) five policies possible on the given dataset. Due to its computational demands, we only report mean returns of FQE for one dataset per game. For other methods, we report the average and range of these mean returns across three datasets per game.
}\label{table:atari}
\begin{tabular}{lllll}
 \toprule
Method& Pong & Breakout & Asterix & Seaquest \\
 \midrule
SBV (Ours) & \textbf{95\% (93-98\%)} & \textbf{81\% (73-90\%)} & \textbf{69\% (62-74\%)} & \textbf{65\% (60-71\%)} \\
EMSBE (Equation \ref{eq:emsbe}) & 87\% (77-98\%) & 64\% (43-77\%) & 60\% (51-67\%) & 47\% (44-52\%)\\
WIS \cite{Precup2000}  & 66\% (45-90\%) & 37\% (34-39\%) & 43\% (37-55\%)& 24\% (13-34\%) \\
FQE \cite{Le2019} & 98\% & 41\% & 53\% & 34\% \\
 \bottomrule
\end{tabular}
\vspace{-0.1cm}
\end{table*}

As SBV only requires a regression algorithm, its hyperparameters can be tuned offline using validation MSE. In contrast, FQE requires an offline RL training algorithm to estimate the action-value function, and tuning this algorithm's hyperparameters offline is not nearly as straightforward. This makes it difficult to compare FQE to competitors, as its performance will depend on the arbitrary choice of what algorithm we use to estimate the action-value function. For example, in Table \ref{table:fqe}, we find that FQE performance varies greatly with the algorithm utilized for estimating the action-value function. We can also see that FQE is biased towards $Q^*$ estimation algorithms similar to its own training algorithm (FQE choose its own training algorithm as the best training algorithm for estimating $Q^*$ in three out of four cases). More details about these training algorithms can be found in Appendix \ref{append:non-atari}. While FQE does perform well with the right training algorithm, we would not need OMS in the first place if we knew in advance which RL training algorithm performed best.

Following previous work \cite{Paine2020, Fu2021}, we also compared performance based on Spearman correlation in Figure \ref{figure:spearman}, and based on max top-$k$ policy value for varying values of $k$ in Figure \ref{figure:bike_oracle}. We chose to focus on mean top-3 policy value here instead of top-1 policy value as the former relies on more than a single Q-function, thus providing a more stable and robust measure of performance. In this case, however, looking at top-1 policy values instead yields similar conclusions (see Figure \ref{figure:bike_oracle}).

\vspace{-0.05cm}
\subsection{High-Dimensional Atari Environments}
\label{sec:atari}
\vspace{-0.05cm}

Finally, we evaluated SBV (Algorithm \ref{alg:sbv}) on 12 offline DQN-Replay datasets \cite{Agarwal2020}, corresponding to three seeds each for four Atari games: Pong, Breakout, Asterix and Seaquest. Atari games have high-dimensional state spaces, making them more challenging than previous environments evaluated so far. We chose to focus on these four games in particular as they have received more attention in recent literature \citep{Kumar2020, kumar2021implicit}.  The performance of DQN is also sensitive to the number of training iterations for most of these games, making OMS more challenging. As in Section \ref{sec:Bike}, we also evaluated validation EMSBE, WIS and FQE.

Following \citet{Agarwal2020}, we performed uniform sub-sampling to obtain 12 training and validation datasets with 10M and 2.5M transitions each, respectively. We used two training configurations for DQN: a \textit{shallow} configuration that uses the ``DQN (Adam)" setup from \citet{Agarwal2020}, and a \textit{deep} configuration that uses a deeper architecture, a slower target update frequency and double Q-learning targets \cite{Hasselt2016}. For each training configuration, we ran DQN for 50 iterations (one iteration = 640k gradient steps) and evaluated the Q-network after each iteration. This resulted in evaluating 100 Q-functions for each Atari dataset. 

Unlike in previous experiments, the same Bellman network training configuration was used by SBV to estimate most Bellman backup functions\footnote{For Pong datasets, we used a simpler Bellman network and only evaluated the shallow Q-networks to speed-up experiments.}, and was tuned offline so as to minimize validation error across Bellman backups and datasets. The Bellman network (Section \ref{sec:method}) incorporates prevalent design choices for image classification such as batch normalization \cite{Ioffe2015}, skip connections \cite{He2016} and squeeze-and-excitation units \cite{Chollet2017}. While the behavioral policy was known in previous datasets, it is unknown for our Atari datasets. Thus, we estimated it with behavioral cloning \cite{Osa2018} using a similar training configuration as that of the Bellman network prior to running WIS. Due to the computational cost of FQE (Appendix \ref{append:wis_atari}) and the difficulty of tuning its hyperparameters offline, we only applied FQE to a single dataset per game using the same Q-network architecture and target update frequency as \citet{Mnih2015}.  See Appendix \ref{append:atari} for more details on our experimental setup. 

\begin{figure*}[t]
\vspace{-0.1cm}
\includegraphics[width=\linewidth]{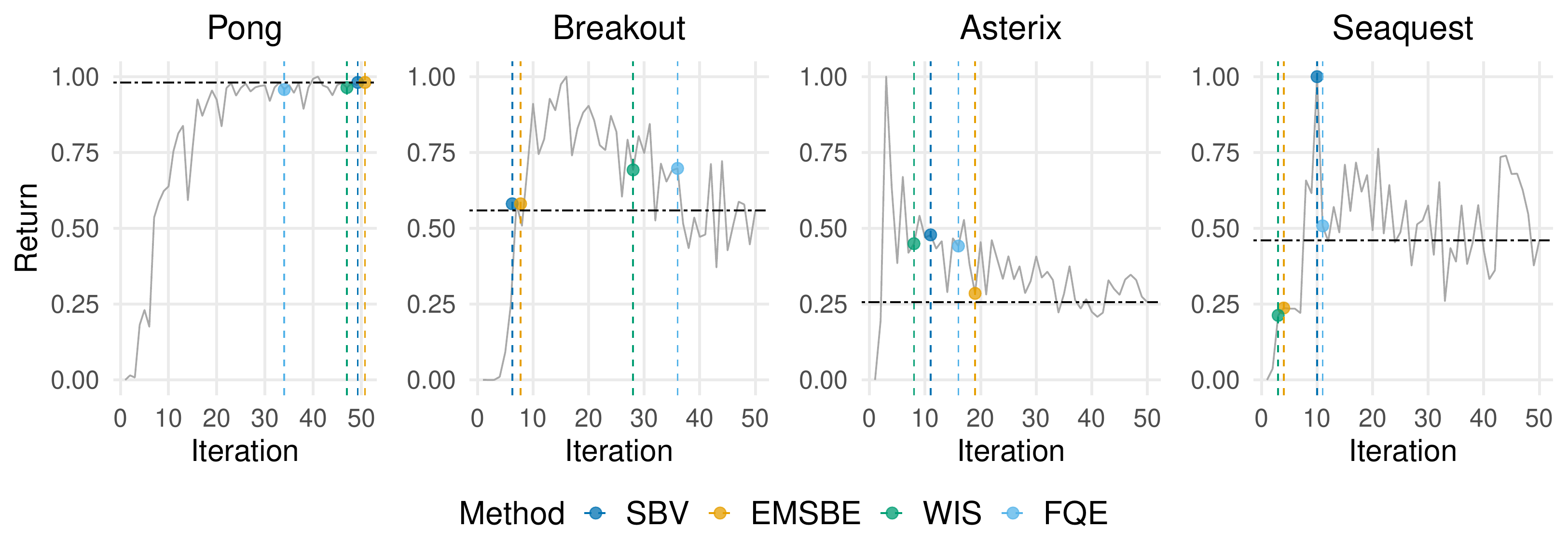}
\centering
\caption{Learning Curves from the Best Configuration for Four Datasets. Returns are standardized to $[0,1]$. The dashed horizontal line represents performance with no early stopping. The vertical lines represent the iterations where training was stopped according to different methods. SBV and FQE performs as well as or better than no early stopping for all games. However, the same cannot be said for WIS and the EMSBE.}
\label{figure:atari}
\vspace{-0.2cm}
\end{figure*}

From Tables \ref{table:atari} and \ref{table:atari_fqe}, we can see that SBV performed comparable to or better than competing methods on every environment with respect to its top-5 selected policies. We also evaluated the ability of each method to perform early stopping in Figures \ref{figure:atari} and \ref{figure:atari_extra}, assuming the optimal training configuration was used. SBV performs as well as or better than no early stopping for all datasets, while the same cannot be said for WIS and the EMSBE. This suggests that SBV is a more robust early stopping procedure. While FQE was more effective in tuning the number of iterations than WIS and EMSBE, it usually ranked shallow Q-functions as superior to deep Q-functions, even though the best-performing Q-functions for Breakout, Asterix and Seaquest were from the deep configuration. This is why overall performance for FQE was poor for these games (see Table \ref{table:atari}).

Compared to Section \ref{sec:Bike} where we only looked at the top-3 policies, we looked at the top-5 policies here as the total number of Q-functions being evaluated was much higher. We also compared performance based on max top-$k$ policy values in Figure \ref{figure:atari_oracle} and obtained similar conclusions. The tricks we employed to speed-up computations involving SBV hindered us from calculating Spearman correlations with policy returns (see Appendix \ref{append:sbv_atari}), though as discussed in Section \ref{sec:toy}, this metric is not critical for OMS anyway.

\vspace{-0.05cm}
\subsection{Ablation Experiments}
\label{sec:ablations}
\vspace{-0.05cm}
Recall that SBV uses the same dataset $\mathcal D_T$ to both estimate the Q-functions $Q_1,...,Q_M$ via offline RL and estimate their Bellman backups $\mathcal B^*Q_1,...,\mathcal B^*Q_M$ via regression. An alternative strategy was proposed by BeRMin~\citep{Farahmand2010} to reduce estimation bias of the Bellman backup estimators. In this alternative strategy, we further partition $\mathcal D_T$ into two training sets $\mathcal D_{T_1}$ and $\mathcal D_{T_2}$, generate Q-functions by running offline RL on $\mathcal D_{T_1}$ and estimate their Bellman backups by running regression on $\mathcal D_{T_2}$.

When using separate partitions for estimating the Q-functions and their Bellman backups, we expect no more than 50\% of the data reserved for Bellman backup estimation, with the rest used for estimating the Q-functions. Thus, we investigated whether training the Bellman network on a dataset independent to the Q-functions and of 50\% size achieves better performance than SBV's trained Bellman network. From Figures \ref{figure:part} and \ref{figure:part_breakout}, we see that SBV consistently yields lower validation error of the Bellman network. While using the same data to estimate both the Q-function and its Bellman backup may increase bias of the estimated backup, this is offset by a reduction in variance from using more data. Moreover, Figure \ref{figure:part} simplifies the comparison by assuming each partitioning scheme generates the same Q-functions. In practice, requiring separate partitions for the Q-functions and Bellman backups will also mean less data for training the Q-functions, which means Q-functions will perform worse as well.

\begin{figure}[!b]
\vspace{-0.4cm}
\includegraphics[width=\linewidth]{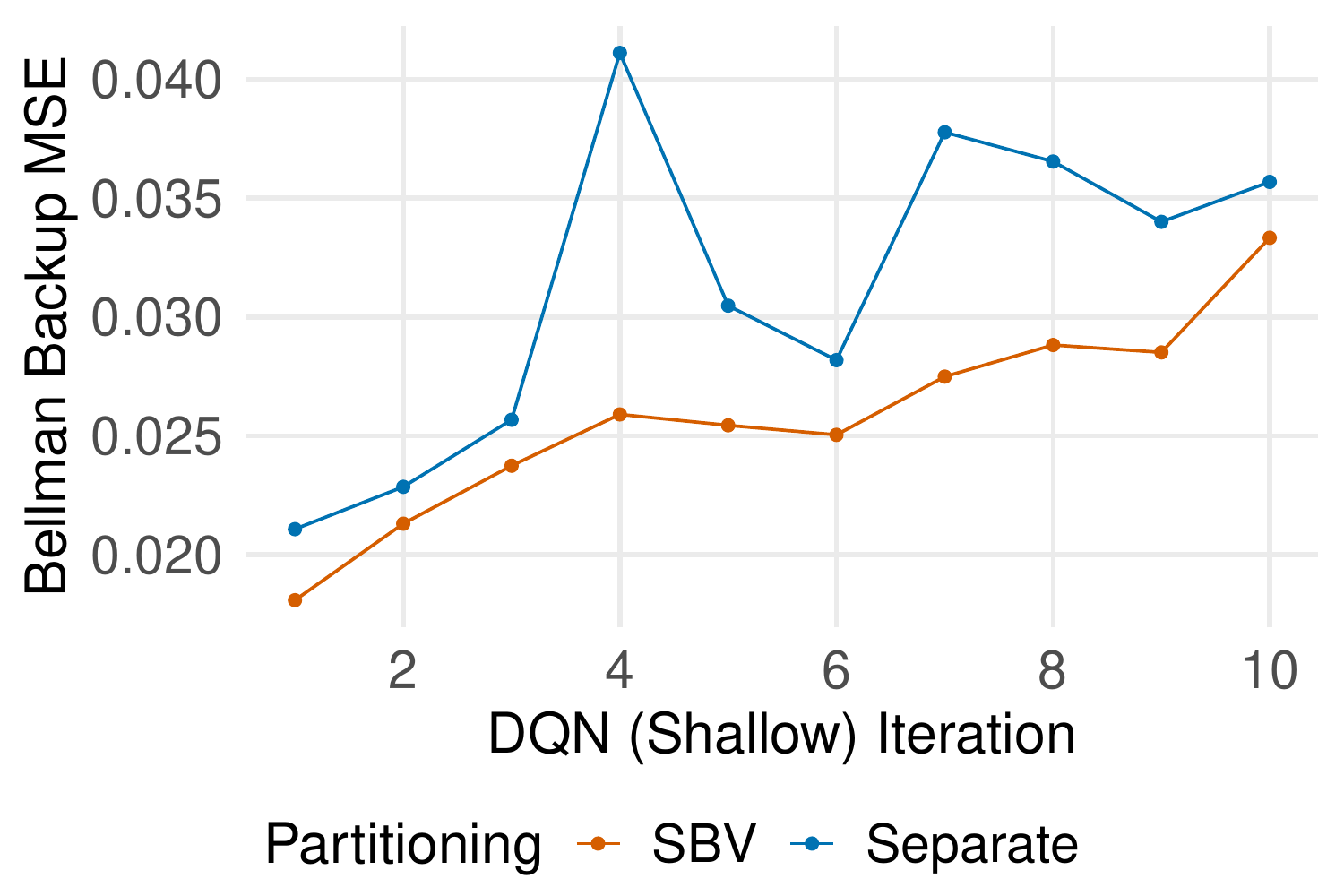}
\centering
\caption{Partitioning Ablation on Seaquest. For the first 10 Q-functions generated by shallow DQN, we estimate their Bellman backups using the same dataset as that used to run DQN and plot their validation MSE in red. We then do the same for a separate dataset of 50\% size in blue.}
\label{figure:part}
\vspace{-0.15cm}
\end{figure}

\begin{table*}[t]
\vspace{-0.1cm}
\centering
\captionsetup{margin=1cm}
\caption{Relationship between Bellman Network and Q-Network Performance. We evaluated policy performance when running DQN using different Q-network architectures, and evaluated validation MSE when using the same architectures as Bellman networks. Architectures that perform better as Bellman networks also perform better as Q-Networks.}\label{table:arch}
\begin{tabular}{lllll}
 \toprule
Architecture& Q-Network Return & Bellman Network MSE  \\
 \midrule
DQN Nature \cite{Mnih2015} & 7169 & 0.097 \\ 
IMAPLA-deep \cite{Espeholt2018} & 8691 & 0.089 \\
Moderate Depth & 18337 & 0.079 \\
Deep (Appendix \ref{append:dqn_deep}) & 22573 & 0.065 \\
REM, 5x data \cite{Agarwal2020} & $8795$ \\
 \bottomrule
\end{tabular}
\vspace{-0.1cm}
\end{table*}

As discussed in Sections \ref{sec:theory} and \ref{sec:toy}, the MSBE may not be effective for OMS unless the candidate set includes Q-functions with small MSBE. This was also observed on Atari. For example, Q-networks trained by the deep configuration yield low Bellman error when early stopping is applied, while Q-networks trained by the shallow configuration have large Bellman error at every iteration. As a result, SBV performs suboptimally when only shallow Q-functions are in the candidate set, selecting Q-functions much worse than the best shallow Q-functions. We can ensure that our candidate set contains Q-functions with small MSBE by exploring a large number of RL training configurations. However, evaluating many training configurations with SBV is computationally demanding, especially if the configurations are also sensitive to the number of training steps. In contrast, by using architectures for the Q-network that performed well as Bellman networks, and by reducing target update frequency, we were able to get Q-functions with low Bellman error without having to explore many RL configurations. 

To showcase this, we consider four Q-network architectures: the original Nature DQN architecture \citep{Mnih2015}, IMPALA-deep \citep{Espeholt2018}, the architecture used by our deep DQN configuration (Appendix \ref{append:dqn_deep})  and a moderately deep architecture that is in between the Nature and deep architectures. For each Q-network architecture, we ran DQN on one of the Seaquest datasets, with other training hyperparameters fixed from the deep DQN configuration and online evaluations used to apply early stopping. These architectures were then used to estimate the Bellman backup of one of the shallow DQN Q-functions via regression. From Table \ref{table:arch}, we can see that better Bellman network architectures also perform better as Q-networks. See Appendix \ref{append:Bellman_q} for further discussion. We also applied a more modern training algorithm, Random Ensemble Mixtures \cite{Agarwal2020}, using 5x as much data, and performance was still worse than DQN with deeper architectures. 

Recent offline RL literature has focused almost exclusively on improving the training algorithm \citep{Prudencio2023}, with the Q-network architecture and other training hyperparameters fixed. However, our results add to a growing body of work suggesting that improving the architecture and other hyperparameters may be quite important \cite{Wu2019, Kumar2021}. These results also suggest that SBV may be useful in developing better Q-network architectures, in addition to performing OMS.

\vspace{-0.1cm}
\section{Discussion and Future Work}
\label{sec:conclusions}
\vspace{-0.1cm}

In this work, we proposed a new algorithm based on the MSBE that was effective at selecting high-performing policies across diverse offline datasets, from small simulated clinical trials to large-scale Atari datasets. Tuning SBV's regression algorithm to minimize validation MSE was critical to achieving robust performance, as it allowed SBV to choose different regressors (e.g. linear models, trees, neural networks) based on what was ideal for a given offline dataset. In addition to demonstrating the potential utility of our proposed algorithm and of the MSBE more generally, we also investigated which factors were most predictive of Bellman error performance and developed guidelines on how to improve this performance in practice. These guidelines allowed us to develop a new Q-network architecture that achieves state-of-the-art performance on some of the Atari datasets. Overall, we believe that our paper challenges current beliefs and will help shape future research in OMS. 

Despite its achievements, our work still has a few limitations we hope will be addressed in future work. First, while implementing SBV on the non-Atari datasets took under 10 minutes, running SBV on Atari took almost one week per dataset with six A100 GPUs. Reducing this computational load would be very helpful. We should point out that while the EMSBE exhibits bias in stochastic environments, it can be computed much faster. In Appendix \ref{append:compute} we compare computational performance between SBV and competing algorithms and discuss when the EMSBE might be preferred over SBV. Second, theoretical guarantees of the MSBE require the observed data to adequately covers the state-action space. While guarantees of many OPE methods require similar assumptions \citep{Janner2019, Le2019, Xie2019}, extending the MSBE to have better guarantees in the face of partial coverage \cite{Uehara2022} could yield a more practical algorithm for narrow or biased datasets \cite{Fu2020}. Third, we have assumed throughout that the candidate set contains Q-functions with low MSBE, and extending SBV to perform well when this condition is violated would make it more applicable in settings where estimating an accurate Q-function is difficult. 

Finally, SBV cannot currently tune actor-critic or policy gradient algorithms for reasons discussed in Appendix \ref{append:cts}. Using SBV to tune FQE and then using FQE to select the policy could overcome this limitation. We also expect the MSBE to more closely correlate with FQE performance than with DQN performance as estimation accuracy is of direct interest with FQE. The main challenge would be combining SBV and FQE without making computation prohibitive.

\vspace{-0.2cm}
\section*{Acknowledgement}
\vspace{-0.1cm}

We thank Google Cloud for $\$20,000$ worth of GCP credits as well as Cameron Voloshin, George Tucker, Bo Dai and anonymous ICML reviewers for their review of the paper.

\bibliography{SBV}
\bibliographystyle{icml2023}

\onecolumn
\twocolumn

\appendix
\renewcommand\thefigure{\thesection.\arabic{figure}}   
\part*{Appendix}
\label{sec:Appendix}

\counterwithin{figure}{section}
\counterwithin{table}{section}
\counterwithin{equation}{section}
\counterwithin{algorithm}{section}

\section{Extended Details of OMS Algorithms}
\label{append:oms}

\subsection{Extended SBV Algorithms}
\label{append:alg}

\setcounter{algorithm}{0}
\begin{algorithm}[ht]
  \caption{SBV with Tuned Regression Algorithm}\label{alg:sbv_tune}
  \begin{algorithmic}[1]
    \REQUIRE{Offline dataset $\mathcal D=\{(s,a,r,s')\}$} 
    \REQUIRE{Set of offline RL algorithms \\ $\mathcal H=\{H_1,...,H_M\}$}
    \REQUIRE{Set of regression algorithms \\ $\mathcal R=\{R_1,...,R_N\}$}
    \STATE{Randomly partition trajectories in $\mathcal D$ to training set $\mathcal D_T$ and validation set $\mathcal D_V$}
    \FOR{RL algorithm $m\in\{1,...,M\}$}
        \STATE{Estimate $Q^*$ as $Q_m$ by running offline RL algorithm $H_m$ on $\mathcal D_T$}
        \FOR{regression algorithm $n\in\{1,...,N\}$}
            \STATE{Estimate $\mathcal B^*Q_m$ as $\widehat{\mathcal B}^*_{n}Q_m$ by running regression algorithm $R_n$ on $\mathcal D_T$ to minimize Bellman backup MSE (Equation \ref{eq:mse_backup})}
            \STATE{Evaluate the error of $R_n$ in estimating $\mathcal B^*Q_m$ as \resizebox{\linewidth}{!} 
            {$\text{Err}_{m,n}=\mathbb E_{\mathcal D_V}\left[(r+\gamma\max_{a'}Q_m(s',a')-\widehat B_n^* Q_m)^2\right]$}}
        \ENDFOR
        \STATE{Estimate $\mathcal B^*Q_m$ using the best regression algorithm as $\widehat{\mathcal B}^*_{n^*(m)}Q_m$ with $n^*(m)=\mbox{argmin}_{1\leq n \leq N}\text{Err}_{m,n}$}
        \STATE{Estimate the MSBE of $Q_m$ as \\ $\mathbb E_{\mathcal D_V}[(Q_m(s,a)-\widehat{\mathcal B}^*_{n^*(m)}Q_m(s,a))^2]$}
    \ENDFOR
    \STATE{{\bfseries Output:} $Q_{m^*}$ as our estimate of $Q^*$ with $m^*=\underset{1\leq m \leq M}{\mbox{argmin}}\mathbb E_{\mathcal D_V}[(Q_m(s,a)-(\widehat{\mathcal B}^*_{n^*(m)}Q_m)(s,a))^2]$}
  \end{algorithmic}
\end{algorithm}

\makeatletter
\setlength{\@fptop}{0pt}
\makeatother
\begin{algorithm}[!ht]
    \caption{Applying Early Stopping to DQN with SBV. Here $Q_{\theta_{k+1}}$ denotes the trained Q-Network after $k+1$ iterations of the DQN algorithm while $\mathcal B_{\phi_{k+1}}$ denotes the \textit{Bellman network} subsequently updated by SBV in order to approximate $\mathcal B^*Q_{\theta_{k+1}}$. }\label{alg:dqn_sbv}
    \begin{algorithmic}[1]
        \REQUIRE{Offline dataset $\mathcal D=\{(s,a,r,s')\}$}
        \STATE{Randomly partition trajectories in $\mathcal D$ to training set $\mathcal D_T$ and validation set $\mathcal D_V$}
        \STATE{Initialize deep Q-network $Q_{\theta_{0}}$ with trainable weights $\theta_0$ and Bellman network $\mathcal B_{\phi_0}$ with trainable weights $\phi_0$}
        \FOR{iteration $k\in\{0,...,K-1\}$}
            \STATE{Update Q-network weights from $\theta_k$ to $\theta_{k+1}$ by running DQN on $\mathcal D_T$ for one iteration}
            \STATE{Update Bellman network weights from $\phi_{k}$ to $\phi_{k+1}$ by running gradient descent with loss function: \resizebox{\linewidth}{!} {$L(\phi)=\mathbb E_{\mathcal D_T}\left[\left(r+\gamma\max_{a'}Q_{\theta_{k+1}}(s',a')-\mathcal B_\phi(s,a)\right)^2\right]$}}
            \STATE{Estimate the MSBE of $Q_{\theta_{k+1}}$ as \\ $\mathbb E_{\mathcal D_V}[(Q_{\theta_{k+1}}(s,a)-\mathcal B_{\phi_{k+1}}(s,a))^2]$}
        \ENDFOR
        \STATE{{\bfseries Output:} $Q_{\theta_{k^*}}$ as our estimator of $Q^*$ where $k^*=\mbox{argmin}_{1\leq k \leq K} \mathbb E_{\mathcal D_V}[(Q_{\theta_{k}}(s,a)-\mathcal B_{\phi_{k}}(s,a))^2]$}
    \end{algorithmic}
\end{algorithm}

\subsection{Extended Details of Model-Free Baselines}
\label{append:baselines}

Computationally- and memory-efficient implementations of Supervised Bellman Validation (SBV), the empirical mean squared Bellman error (EMSBE), weighted per-decision importance sampling (WIS) \cite{Precup2000}, Fitted Q-Evaluation (FQE) \cite{Le2019} and Batch Value Function Tournament (BVFT) \cite{Xie2021} on Atari can be found in our repository \url{https://github.com/jzitovsky/SBV}. The methodologies of validation EMSBE, WIS and FQE is discussed below and implementation details on Atari are discussed in Appendix \ref{append:wis_atari}. SBV is discussed in Section \ref{sec:sbv} and its implementation on Atari is discussed in Appendix \ref{append:sbv_atari}. BVFT is discussed in Appendix \ref{append:bvft} and its implementation is discussed in Figure \ref{figure:bvft}. Also see Sections \ref{sec:background} and \ref{sec:previous} for relevant background and notational definitions. 

We define validation EMSBE as:
\begin{equation}
\mathbb E_{\mathcal D_V}\left[\left(Q_m(s,a)-r-\gamma\max_{a'\in\mathcal A}Q_m(s',a')\right)^2\right].\label{eq:emsbe_val}
\end{equation}
More discussion of the EMSBE can be found in Section \ref{sec:bellman_error}, Section \ref{sec:theory} and Appendix \ref{append:wis_atari}. The WIS estimator from \citet{Precup2000} is defined as:
\begin{equation}
\widehat J_{\mbox{WIS}}(\pi)=\sum_{i=1}^N\frac{\sum_{t=0}^{T}\gamma^tR_t^i\prod_{v=0}^t\frac{\pi(A_v^i|S_v^i)}{\mu(A_v^i|S_v^i)}}{\sum_{t=0}^{T}\gamma^t\prod_{v=0}^t\frac{\pi(A_v^i|S_v^i)}{\mu(A_v^i|S_v^i)}},\label{eq:wis}
\end{equation}
where $S_t^i$ is the state of the $i$th observed trajectory at time step $t$ and similarly for $A_t^i$ and $R_t^i$, and $T$ is some large horizon time of interest. Some works also define weighted IS differently \cite{Thomas2016}. In the event that all trajectories end in a terminal state, we can set $T=T_i$ where $T_i$ is the length of the $i$th observed trajectory and the horizon becomes infinite. It can be shown that as the number of observed trajectories $N\to\infty$, $\widehat J_{\mbox{WIS}}(\pi)$ converges with probability one to a normalized version of $J(\pi)$. In the event that the behavioral policy $\mu$ is unknown, we can estimate it by behavioral cloning (BC) \cite{Osa2018}. As the only potential hyperparameters of WIS relate to those of the BC algorithm, and as a BC model can be evaluated and tuned offline via cross-entropy on a held-out validation set, we say that WIS can easily tune its own hyperparameters offline. 

The main problem with this estimator is the estimation variance: the sample variance of the importance weights $\prod_{v=0}^t\frac{\pi(A_v|S_v)}{\mu(A_v|S_v)}$ increases exponentially with $v$, making it difficult for the WIS estimator to accurately model long-term dependencies between actions and rewards and take into account rewards occurring far after the initial state. 

FQE estimates the action-value function $Q^\pi$ of policy $\pi$ as $\widehat Q^\pi$ using a modified off-policy Q-learning or actor-critic algorithm. For example, we could estimate $Q^\pi$ by modifying FQI or DQN to perform updates:
\vspace{-0.2cm}
\begin{equation*}
\resizebox{\linewidth}{!} 
{
    $Q^{(k+1)}\leftarrow \mbox{argmin}_f \mathbb E_{\mathcal D}\left[\left(f(s,a)-r-\gamma\mathbb E_{a'\sim \pi(\cdot|s')}Q^{(k)}(s',a')\right)^2\right]$
}.
\end{equation*}
$J(\pi)$ is then estimated as: 
\begin{equation}
\widehat J_{\mbox{FQE}}(\pi)=\mathbb E_{s_0\sim \widehat d_0}[\hat Q^\pi(s,\pi(s))],\label{eq:fqe}
\end{equation}
where $\hat d_0$ is the empirical distribution of initial states. There is currently no established or well-known procedure to choose or tune the algorithm used to estimate $Q^\pi$, though there have been a few approaches proposed in very recent work \citep{Zhang2021ps, Miyaguchi2022}.

\onecolumn
\section{Extensions to Infinite State Spaces with Mathematical Proofs}
\label{append:proofs}

\setcounter{theorem}{0}

We begin with some additional preliminaries: Let $\lambda$ be the dominating measure of density $P^\mu$ such that $\mathbb E_{(s,a)\sim P^\mu}[Q(s,a)]=\int_{\mathcal S\times\mathcal A}Q(s,a)P^\mu(s,a)d\lambda$ and let $||f||_\infty$ denote the essential supremum of function $f$ with respect to measure $\lambda$. If the state space is finite, $\lambda$ is equal to the counting measure, $\mathbb E_{(s,a)\sim P^\mu}[Q(s,a)]=\sum_{(s,a)\in\mathcal S\times\mathcal A}[Q(s,a)P^\mu(s,a)]$ and $||f||_\infty=\max_{(s,a)\in\mathcal S\times\mathcal A}|f(s,a)|$. Let $d(s)=d(s,a)=I(s\text{ is terminal})$. Let $P^\mu_T(s,a,r,s')=d^\mu(s)\mu(a|s)T(s'|s,a)$ denote the underlying population distribution of our observed transitions. Let $F^\mu$ denote the (cumulative) distribution function (CDF) of $P^\mu_T$ and $F^{\mathcal D}$ denote the \textit{empirical distribution function (EDF)} of transitions associated with $\mathcal D$. Under general conditions, $||F^{\mathcal D}-F^\mu||_\infty\to 0$ as $|\mathcal D|\to\infty$ with probability one \cite{Kotz2006}. Theoretical results given here imply those present in Section \ref{sec:theory} when the state space is finite.

We make a few notes about the assumptions used by the first proposition. First, when our state space is finite, our assumptions about $d_0$, $d^\mu$ and $T$ always hold. Second, our assumptions on $P^\mu$, $d_0$ and $T$ could be weakened, but this would lead to our derived bounds being less interpretable. Specifically, it is sufficient for $\max_\pi||\sqrt{(1-d)P_t^{\pi}/P^\mu}||_\infty=o(1/\gamma^t)$ and $\max_{\pi,m}||\sqrt{(1-d)m_t^{\pi}/P^\mu}||_\infty=o(1/\gamma^t)$ where $P^\pi_t$ is the distribution of state-action pairs induced from starting at state-action pair $(s_0,a_0)\sim P^\mu$ and following policy $\pi$ for $t$ time steps and $m_t^{\pi}$ is the distribution of state-action pairs induced from starting at state $s_0\sim d_0$, following policy $\pi$ for $t$ time steps and then applying policy $m$ for a final time step. Under this weakened assumption, we still have that $C_Q<C_J<\infty$ by the (Cauchy) ratio test for series convergence, which now depends on $\sum_{t=0}^\infty\gamma^t\max_\pi||\sqrt{(1-d)P_t^{\pi}/P^\mu}||_\infty$ and $\sum_{t=0}^\infty\gamma^t \max_{\pi,m}||\sqrt{(1-d)m_t^{\pi}/P^\mu}||_\infty$.

While our analysis in Section \ref{sec:theory} focused on components unrelated to the MDP such as the behavioral policy and estimation accuracy of the MSBE, our analysis here shows that the MSBE's theoretical performance will also depend on the MDP's ability to control the rate at which distribution shift occurs when following alternative policies to $\mu$. This rate is automatically bounded when the transition and initial state probabilities are bounded, though as discussed above this assumption is not necessary. There may also be a way to add a regularization term to the MSBE to strengthen theoretical guarantees when stochasticity in the behavioral policy, transitions or initial state distribution is more restricted. We leave this to future work. 

\begin{proposition}\label{prop1_append}
Assume $P^\mu(s,a)\geq\psi$ and $d^\mu(s),(1-d)d_0(s),(1-d)(s')T(s'|s,a)\leq B_T$ for some $0<\psi,B_T<\infty$ and all $(s,a,s')\in\mathcal S\times\mathcal A\times\mathcal S$ with probability one. Let $\hat m(Q_m)$ be an estimate of $||Q_m-\mathcal B^*Q_m||_{P^\mu}$ with absolute estimation error $e(\hat m(Q_m))$ and assume that $\hat m(Q_m)\leq\epsilon$ and $e(\hat m(Q_m))\leq\delta$. Then with probability one i) $||Q^*-Q_m||_{P^\mu}\leq C_Q(\epsilon+\delta)$ where $C_Q=\frac{\sqrt{B_T}}{\sqrt{\psi}(1-\gamma)}$ and ii) $J(\pi^*)-J(\pi_{Q_m})\leq C_J(\epsilon+\delta)$ where $C_J=\frac{2B_T}{\psi(1-\gamma)^2}$.
\end{proposition}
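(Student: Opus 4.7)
Both bounds reduce to controlling the true Bellman residual norm in terms of $\epsilon + \delta$: by the reverse triangle inequality, $||Q_m - \mathcal{B}^* Q_m||_{P^\mu} \leq \hat m(Q_m) + e(\hat m(Q_m)) \leq \epsilon + \delta$. So it suffices to establish deterministic comparison inequalities of the form $||Q_m - Q^*||_{P^\mu} \leq C_Q'\,||Q_m - \mathcal{B}^* Q_m||_{P^\mu}$ and similarly for $J(\pi^*) - J(\pi_{Q_m})$, with constants that absorb the concentrability of $P^\mu$ against the occupancies induced by alternative policies.

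For part (i), I would set $\epsilon_m := Q_m - \mathcal{B}^* Q_m$ and use $\mathcal{B}^* Q^* = Q^*$ together with the pointwise inequality $|\max_a f(a) - \max_a g(a)| \leq \max_a |f(a) - g(a)|$ to derive the pointwise recursion $|Q_m - Q^*|(s,a) \leq |\epsilon_m(s,a)| + \gamma\,\mathbb{E}_{s'|s,a}[\max_{a'}|Q_m - Q^*|(s',a')]$. Iterating along a suitable non-stationary greedy policy $\pi$ yields the geometric series $|Q_m - Q^*| \leq \sum_{t=0}^\infty \gamma^t (P^\pi)^t |\epsilon_m|$ pointwise. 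I would then take the $L^2(P^\mu)$ norm and apply Cauchy--Schwarz with change of measure: $||(P^\pi)^t |\epsilon_m|\,||_{P^\mu} \leq \sqrt{||dP_t^\pi/dP^\mu||_\infty}\,||\epsilon_m||_{P^\mu}$. The density assumptions on $d^\mu, d_0, T$ combined with $P^\mu \geq \psi$ bound the Radon--Nikodym derivative uniformly by $B_T/\psi$, and summing the geometric series produces the claimed factor $\frac{\sqrt{B_T}}{\sqrt{\psi}(1-\gamma)}$.

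For part (ii), I would invoke the classical performance-difference decomposition. Fixing $s_0$, write $V^{\pi^*}(s_0) - V^{\pi_{Q_m}}(s_0)$ as the sum of three brackets involving $Q^*$, $Q_m$, and $Q^{\pi_{Q_m}}$; the middle bracket $Q_m(s_0,\pi^*(s_0)) - Q_m(s_0,\pi_{Q_m}(s_0))$ is nonpositive by greediness of $\pi_{Q_m}$ and can be dropped. The first bracket is controlled by part (i) after changing measure from the $\pi^*$-occupancy $d^{\pi^*}$ to $P^\mu$, incurring a factor $\sqrt{B_T/\psi}$. The third bracket is handled via the identity $Q_m - Q^{\pi_{Q_m}} = \sum_{t=0}^\infty \gamma^t (P^{\pi_{Q_m}})^t \epsilon_m$, which follows because $\pi_{Q_m}$ being $Q_m$-greedy forces $\mathcal{B}^{\pi_{Q_m}} Q_m = \mathcal{B}^* Q_m$; another change of measure and geometric sum contribute $\frac{\sqrt{B_T/\psi}}{1-\gamma}$. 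Compounding these factors with the bound from part (i) gives exactly $\frac{2B_T}{\psi(1-\gamma)^2}$.

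The main obstacle is uniformly controlling $||dP_t^\pi/dP^\mu||_\infty$ over all $t \geq 0$ and every policy entering the two decompositions. The density assumptions serve exactly this purpose: a straightforward induction on $t$ using the chain rule of conditional probabilities shows that on non-terminal states the $t$-step forward distribution under any policy has density at most $B_T$ (the induction step integrates against a kernel bounded by $B_T$ and a sub-probability mass bounded by $1$), and combining with $P^\mu \geq \psi$ yields the uniform ratio $B_T/\psi$. The restriction to non-terminal states is harmless because $Q^*$ and the relevant Bellman residuals vanish on terminals, so those contributions drop out of the recursions. If the uniform bounds on $d_0$ and $T$ were relaxed to the summability condition mentioned in the preamble, the same proof goes through with the explicit $B_T/\psi$ replaced by a convergent sum obtained from the ratio test, though the constants become less explicit.
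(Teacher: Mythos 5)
Your proposal is correct, and part (i) is essentially the paper's own argument in a different organization: the paper runs the recursion at the level of $L_2$ norms along the distributions induced by the max-error policy (Minkowski plus Jensen at each step), while you iterate the residual decomposition pointwise and then take norms of the geometric series; both rest on the same ingredients — the reverse-triangle step $\|Q_m-\mathcal B^*Q_m\|_{P^\mu}\leq \epsilon+\delta$, the induction showing the $t$-step forward density is at most $B_T$ on non-terminal pairs, and the resulting ratio bound $B_T/\psi$ against $P^\mu\geq\psi$. Part (ii) is where you genuinely diverge. The paper recurses on $\|V^*-V^{\pi_{Q_m}}\|_{d^0_v}$, collecting two copies of the estimation error per step, proves the intermediate bound $J(\pi^*)-J(\pi_{Q_m})\leq \frac{2\sqrt{B_T/\psi}}{1-\gamma}\|Q^*-Q_m\|_{P^\mu}$, and composes it with part (i). You instead use the one-step three-bracket decomposition at $s_0$, drop the middle bracket by greediness, control the first bracket by part (i) after a single change of measure under $d_0$ (note only the initial distribution $d_0\otimes\pi^*$ enters there, not the full occupancy as you wrote), and handle the third bracket via $Q_m-Q^{\pi_{Q_m}}=\sum_{t\geq0}\gamma^t(P^{\pi_{Q_m}})^t(Q_m-\mathcal B^*Q_m)$, valid since greediness gives $\mathcal B^{\pi_{Q_m}}Q_m=\mathcal B^*Q_m$. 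One arithmetic correction: this route does not give ``exactly'' $\frac{2B_T}{\psi(1-\gamma)^2}$; it gives $\frac{\sqrt{B_T/\psi}\,(1+\sqrt{B_T/\psi})}{1-\gamma}(\epsilon+\delta)$, which is in fact tighter, because $\psi\leq B_T$ (since $P^\mu\leq d^\mu\leq B_T$) and $1-\gamma\leq1$ make it at most $\frac{2B_T}{\psi(1-\gamma)^2}(\epsilon+\delta)$; so the stated bound follows, with only one factor of $\frac{1}{1-\gamma}$ compounding the concentrability rather than two. Your handling of terminal states and the implicit existence of densities for the deterministic policies involved are at the same level of rigor as the paper's own assumptions (A1)--(A4), so I see no genuine gap.
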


We will prove this proposition using a series of lemmas, from which this proposition will be a direct corollary. 

\newtheorem{lemma}{Lemma}[theorem]

\begin{lemma}
If $\hat m(Q_m)\leq\epsilon$ and $e(\hat m(Q_m))\leq\delta$, then $||Q^*-Q_m||_{P^\mu}\leq C(\epsilon+\delta)$ where $C=\frac{\sqrt{B_T}}{\sqrt{\psi}(1-\gamma)}$ with probability one.
\end{lemma}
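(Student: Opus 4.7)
The plan is to apply the triangle inequality to move from the estimator guarantee to a bound on the true $L^2(P^\mu)$ Bellman residual, and then use a standard error-propagation argument for the optimal Bellman operator to convert that residual bound into a bound on $\|Q^* - Q_m\|_{P^\mu}$.

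First, since $|\hat m(Q_m) - \|Q_m - \mathcal B^* Q_m\|_{P^\mu}| \leq \delta$ and $\hat m(Q_m) \leq \epsilon$, the triangle inequality immediately yields $\|Q_m - \mathcal B^* Q_m\|_{P^\mu} \leq \epsilon + \delta$. Writing $f = Q_m - \mathcal B^* Q_m$, the lemma reduces to proving $\|Q^* - Q_m\|_{P^\mu} \leq \frac{\sqrt{B_T/\psi}}{1-\gamma} \|f\|_{P^\mu}$. Let $\pi_m$ denote the greedy policy of $Q_m$. Since $\mathcal B^* Q_m = \mathcal B^{\pi_m} Q_m$ and the policy Bellman operator $\mathcal B^{\pi_m}$ is affine with linear part $\gamma P^{\pi_m}$, a standard fixed-point manipulation gives $Q_m - Q^{\pi_m} = (I - \gamma P^{\pi_m})^{-1} f = \sum_{k=0}^\infty \gamma^k (P^{\pi_m})^k f$.

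Second, I would bound each term of this Neumann series under $\|\cdot\|_{P^\mu}$ by a concentratability argument. Jensen's inequality gives $\|(P^{\pi_m})^k f\|_{P^\mu}^2 \leq \int f^2 \, d\nu_k$, where $\nu_k$ denotes the $k$-step state-action distribution obtained by starting from $P^\mu$ and rolling out $\pi_m$ thereafter. The hypothesis $(1-d(s'))T(s'|s,a) \leq B_T$, together with the analogous bounds on $d_0$ and $d^\mu$, propagates inductively to $(1-d(s))\nu_k(s,a) \leq B_T$ for every $k \geq 0$. Since $f$ vanishes on terminal states and $P^\mu \geq \psi$ everywhere, this gives $\int f^2 \, d\nu_k \leq (B_T/\psi)\|f\|_{P^\mu}^2$, and summing the geometric series yields $\|Q_m - Q^{\pi_m}\|_{P^\mu} \leq \frac{\sqrt{B_T/\psi}}{1-\gamma} \|f\|_{P^\mu}$.

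The main obstacle is that the above controls $\|Q_m - Q^{\pi_m}\|_{P^\mu}$ rather than the desired $\|Q_m - Q^*\|_{P^\mu}$. To bridge this gap I would derive the pointwise sandwich $(I - \gamma P^{\pi^*})^{-1} f \leq Q_m - Q^* \leq (I - \gamma P^{\pi_m})^{-1} f$, which follows from the characterization of $\mathcal B^*$ as the pointwise maximum of the policy Bellman operators $\mathcal B^\pi$: in particular $\mathcal B^* Q^* - \mathcal B^* Q_m$ is sandwiched between $\gamma P^{\pi_m}(Q^* - Q_m)$ and $\gamma P^{\pi^*}(Q^* - Q_m)$, and monotonicity of the resolvent $(I - \gamma P^\pi)^{-1}$ converts these operator inequalities into the sandwich on $Q_m - Q^*$. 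The concentratability argument depends only on the $B_T$-bound on $T$, so it applies uniformly across all policies and yields the same constant $\sqrt{B_T/\psi}/(1-\gamma)$ on both sides of the sandwich. The delicate step is combining these two sides while preserving the tight constant, which requires a careful $L^2$ argument exploiting that at each point the sign of $Q_m - Q^*$ selects exactly one side of the sandwich, rather than the crude triangle inequality that would introduce an extra factor of $\sqrt{2}$.
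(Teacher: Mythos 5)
Your argument is correct and reaches the same constant, but it is a genuinely different route from the paper's. The paper never forms the resolvent $(I-\gamma P^{\pi})^{-1}$: it applies Minkowski to $\|Q_m-Q^*\|\le\|Q_m-\mathcal B^*Q_m\|+\|\mathcal B^*Q_m-\mathcal B^*Q^*\|$, introduces the \emph{max-error policy} $m(s)=\mathrm{argmax}_a|Q_m(s,a)-Q^*(s,a)|$ so that $|\max_{a'}Q_m-\max_{a'}Q^*|$ is absorbed into a single rollout distribution, and unrolls the resulting recursion over $\|\cdot\|_{P^\mu_t}$ norms, using the induction $(1-d)P^\mu_t\le B_T$ together with $P^\mu\ge\psi$ and the geometric series to get $\frac{\sqrt{B_T}}{\sqrt{\psi}(1-\gamma)}\|Q_m-\mathcal B^*Q_m\|_{P^\mu}$; the $\epsilon+\delta$ step at the end is the same as yours. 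Your max-error device is replaced by the two-sided pointwise sandwich $(I-\gamma P^{\pi^*})^{-1}f\le Q_m-Q^*\le(I-\gamma P^{\pi_m})^{-1}f$, which is valid (the operator inequalities you state are correct and the resolvents are monotone), and your concentratability bound per Neumann term is exactly the paper's induction. The one step you leave as a sketch — recovering the constant without a $\sqrt2$ — does go through, and it is worth recording why: partitioning on the sign of $Q_m-Q^*$, the squared error is dominated on $\{Q_m\ge Q^*\}$ by $\bigl(\sum_k\gamma^k(P^{\pi_m})^kf\bigr)^2$ and on the complement by $\bigl(\sum_k\gamma^k(P^{\pi^*})^kf\bigr)^2$; after Cauchy--Schwarz over the series and Jensen per term, the two sign-restricted $k$-step rollout measures (one under $\pi_m$, one under $\pi^*$) sum to a measure of total mass one, so the same induction gives density bound $B_T$ for the \emph{mixture} and hence $\frac{B_T}{\psi(1-\gamma)^2}\|f\|_{P^\mu}^2$ with no doubling. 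What each approach buys: the paper's max-error policy collapses the two-sidedness of the max into one rollout, giving a short one-line recursion; your version is more modular (standard resolvent identities, pointwise sandwich, and a bound that holds uniformly over policies), at the cost of the extra bookkeeping above. Both rely on the same implicit convention that the Bellman error vanishes on terminal states, and note that the $d_0$ bound you invoke is not needed for this lemma (it enters only in the regret lemma).
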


\begin{proof}
For any density function $P$ of state-action pairs, we have by Minkowski's inequality:

\begin{equation*}
||Q_m-Q^*||_{P}\leq ||Q_m-\mathcal B^*Q_m||_{P}+||\mathcal B^*Q_m-Q^*||_{P} =  ||Q_m-\mathcal B^*Q_m||_{P}+||\mathcal B^*Q_m-\mathcal B^*Q^*||_{P}. 
\end{equation*}

 As for the first term from the last line, we can use importance sampling to obtain the identity $||Q_m-\mathcal B^*Q_m||_{P}=\left|\left|(Q_m-\mathcal B^*Q_m)\sqrt{P/P^\mu}\right|\right|_{P^\mu}$. As to the second term, let $m(s)=\text{argmax}_{a}|Q_m(s,a)-Q^*(s,a)|$ denote the \textit{max-error} policy of $Q_m$. Let $P^\mu_v(s',a')$ be the marginal density of state-action pairs wrt dominating measure $\lambda$ induced from sampling $(s_0,a_0)\sim P^\mu$ and sampling $s_{t+1}\sim T(\cdot|s_t,a_t)$ and $a_{t+1}\sim m(\cdot|s_{t+1})$ for $v$ time steps (we assume this density always exists).  Observe that for any $v\geq 0$:

\begin{align*}
||\mathcal B^*Q_m-\mathcal B^*Q^*||_{P^\mu_v}^2=&\mathbb E_{(s,a)\sim P^\mu_v}\left\{\left(\mathbb E_{s'\sim T(\cdot|s,a)}\left[r+\gamma\max_{a'}Q_m(s',a')\right]-\mathbb E_{s'\sim T(\cdot|s,a)}\left[r+\gamma\max_{a'}Q^*(s',a')\right]\right)^2\right\} \\
=&\gamma^2\mathbb E_{(s,a)\sim P^\mu_v}\left\{\left(\mathbb E_{s'\sim T(\cdot|s,a)}\left[\max_{a'}Q_m(s',a')-\max_{a'}Q^*(s',a')\right]\right)^2\right\} \\
\leq&\gamma^2\mathbb E_{(s,a)\sim P^\mu_v}\left\{\left(\mathbb E_{s'\sim T(\cdot|s,a),a'\sim m(\cdot|s')}\left|Q_m(s',a')-Q^*(s',a')\right|\right)^2\right\} \\
\leq& \gamma^2\mathbb E_{(s,a)\sim P^\mu_v}\left\{\mathbb E_{s'\sim T(\cdot|s,a),a'\sim m(\cdot|s')}\left[\left(Q_m(s',a')-Q^*(s',a')\right)^2\right]\right\} \\
=& \gamma^2\mathbb E_{s',a'\sim P_{v+1}^\mu}\left[\left(Q_m(s',a')-Q^*(s',a')\right)^2\right] \\
\leq& \gamma^2\left(||Q_m-\mathcal B^*Q_m||_{P_{v+1}^\mu}+||\mathcal B^*Q_m-\mathcal B^*Q^*||_{P_{v+1}^\mu}\right)^2,  \\
\end{align*}

where the inequality on the third-to-last line comes from Jensen's inequality. Putting this all together, we have for any $N\in\mathbb{N}$:

\begin{align*}
||Q_m-Q^*||_{P^\mu} \leq& ||Q_m-\mathcal B^*Q_m||_{P^\mu_{0}}+||\mathcal B^*Q_m-\mathcal B^*Q^*||_{P^\mu_0} \\
\leq& ||Q_m-\mathcal B^*Q_m||_{P^\mu_{0}}+\gamma\left(||Q_m-\mathcal B^*Q_m||_{P_{1}^\mu}+||\mathcal B^*Q_m-\mathcal B^*Q^*||_{P_{1}^\mu}\right) \\
\leq&
\resizebox{0.85\linewidth}{!} 
{$||Q_m-\mathcal B^*Q_m||_{P^\mu_0}+\gamma \left|\left|(Q_m-\mathcal B^*Q_m)\sqrt{P_1^\mu/P_0^\mu}\right|\right|_{P^\mu_{0}}+\gamma^2\left(||Q_m-\mathcal B^*Q_m||_{P_{2}^\mu}+||\mathcal B^*Q_m-\mathcal B^*Q^*||_{P_{2}^\mu}\right)$} \\
\leq& 
||Q_m-Q^*||_{P^\mu}+\gamma \left|\left|(Q_m-\mathcal B^*Q_m)\sqrt{P_1^\mu/P_0^\mu}\right|\right|_{P^\mu_{0}}+\gamma^2\left|\left|(Q_m-\mathcal B^*Q_m)\sqrt{P_2^\mu/P_0^\mu}\right|\right|_{P^\mu_{0}}&\\&+\gamma^3\left(||Q_m-\mathcal B^*Q_m||_{P_{3}^\mu}+||\mathcal B^*Q_m-\mathcal B^*Q^*||_{P_{3}^\mu}\right) \\
\vdots & \\
\leq& \sum_{t=0}^N\gamma^t\left|\left|(Q_m-\mathcal B^*Q_m)\sqrt{P_t^\mu/P_0^\mu}\right|\right|_{P^\mu_{0}}+\gamma^{N+1}\left(||Q_m-\mathcal B^*Q_m||_{P_{N+1}^\mu}+||\mathcal B^*Q_m-\mathcal B^*Q^*||_{P_{N+1}^\mu}\right). \\
\end{align*}

We assume that the following assumptions hold almost surely with respect to $P^\mu$: (A1) $Q_m=(1-d)Q_m$, $|Q_m|\leq B_Q$ and $|r|\leq B_R$ for some $B_Q,B_R<\infty$; (A2) $P_0^\mu\geq \psi$ for some $\psi>0$; (A3) $d^\mu(s), (1-d(s'))T(s'|s,a)\leq B_T$ for some $B_T<\infty$ and all $(s,a,s')\in\mathcal S\times\mathcal A\times\mathcal S$. The first assumption on $Q_m$ is arbitrary, and assuming that the reward function is bounded is standard. Under condition (A1), we have that $||Q_m-\mathcal B^*Q_m||_\infty,||\mathcal B^*Q_m-\mathcal B^*Q^*||_\infty<\infty$, which means that $\lim_{N\to\infty}\gamma^{N+1}\left(||Q_m-\mathcal B^*Q_m||_{P_{N+1}^\mu}+||\mathcal B^*Q_m-\mathcal B^*Q^*||_{P_{N+1}^\mu}\right)=0$. Moreover, by condition (A3), $(1-d)P^\mu\leq d^\mu\leq B_T$ and we have by mathematical induction that for $t\geq 1$:
$$
(1-d)P^\mu_t(s',a')=\int_{\mathcal S\times\mathcal A}(1-d)(s',a')m(a'|s')T(s'|s,a)P^\mu_{t-1}(s,a)d\lambda(s,a)\leq B_T\int_{\mathcal S\times\mathcal A}P^\mu_{t-1}(s,a)d\lambda=B_T.
$$
Therefore, under conditions (A1)-(A3), we have for any $t\geq 0$:

\begin{align*}
\left|\left|(Q_m-\mathcal B^*Q_m)\sqrt{P_t^\mu/P_0^\mu}\right|\right|_{P^\mu_{0}}^2=& \mathbb E_{s,a\sim P^\mu_0}\left[\left(Q_m(s,a)-(\mathcal B^*Q_m)(s,a)\right)^2\frac{P^\mu_t(s,a)}{P^\mu_0(s,a)}\right] \\
=& \mathbb E_{s,a\sim P^\mu_0}\left[\left(Q_m(s,a)-(\mathcal B^*Q_m)(s,a)\right)^2\frac{(1-d(s,a))P^\mu_t(s,a)}{P^\mu_0(s,a)}\right] \\
\leq& \frac{B_T}{\psi}||Q_m-\mathcal B^*Q_m||_{P_0^\mu}^2.
\end{align*}

Therefore:

\begin{align*}
||Q_m-Q^*||_{P^\mu} \leq& \resizebox{0.85\linewidth}{!} 
{$\lim_{N\to\infty}\left\{\sum_{t=0}^N\gamma^t\left|\left|(Q_m-\mathcal B^*Q_m)\sqrt{P_t^\mu/P_0^\mu}\right|\right|_{P^\mu_{0}}+\gamma^{N+1}\left(||Q_m-\mathcal B^*Q_m||_{P_{N+1}^\mu}+||\mathcal B^*Q_m-\mathcal B^*Q^*||_{P_{N+1}^\mu}\right)\right\}$}
\\
=&\sum_{t=0}^\infty\gamma^t\left|\left|(Q_m-\mathcal B^*Q_m)\sqrt{P_t^\mu/P_0^\mu}\right|\right|_{P^\mu_{0}} \\
\leq&\frac{||Q_m-\mathcal B^*Q_m||_{P^\mu_{0}}\sqrt{B_T}}{\sqrt{\psi}}\sum_{t=0}^\infty\gamma^t \\
\leq&\left[\hat m(Q_m)+e(\hat m(Q_m))\right]\frac{\sqrt{B_T}}{\sqrt{\psi}(1-\gamma)}.
\end{align*}
\end{proof}

\begin{lemma}
If $||Q^*-Q_m||_{P^\mu}\leq\epsilon$, it holds that $J(\pi^*)-J(\pi_{Q_m})\leq C\epsilon$ with probability one where $C=\frac{2\sqrt{B_T}}{\sqrt{\psi}(1-\gamma)}$.
\end{lemma}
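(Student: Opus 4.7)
The plan is to establish a pointwise greedy-policy regret bound, unroll it along rollouts of $\pi_{Q_m}$ to get a geometric series in $|Q^*-Q_m|$, and finish by Cauchy--Schwarz followed by a change of measure to $P^\mu$ using the coverage hypothesis $P^\mu\geq\psi$ and the boundedness on $d_0,T$ that drove the previous lemma. Without loss of generality take $\pi^*$ deterministic.

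I would start from $V^*(s) - V^{\pi_{Q_m}}(s) = Q^*(s,\pi^*(s)) - Q^{\pi_{Q_m}}(s,\pi_{Q_m}(s))$, insert $\pm Q_m(s,\pi^*(s))$ and $\pm Q_m(s,\pi_{Q_m}(s))$, and discard the cross term $Q_m(s,\pi^*(s)) - Q_m(s,\pi_{Q_m}(s)) \leq 0$, which is nonpositive because $\pi_{Q_m}$ is greedy for $Q_m$. Then using the identity $Q^*(s,a) - Q^{\pi_{Q_m}}(s,a) = \gamma\,\mathbb E_{s'\sim T(\cdot|s,a)}[V^*(s') - V^{\pi_{Q_m}}(s')]$ on the residual produces the Bellman-style recursion
$$V^*(s) - V^{\pi_{Q_m}}(s) \leq |Q^*-Q_m|(s,\pi^*(s)) + |Q^*-Q_m|(s,\pi_{Q_m}(s)) + \gamma\,\mathbb E_{s'\sim T(\cdot|s,\pi_{Q_m}(s))}\bigl[V^*(s') - V^{\pi_{Q_m}}(s')\bigr].$$
Unrolling from $s_0\sim d_0$ and letting the bounded $\gamma^{N+1}$ tail vanish as $N\to\infty$ yields
$$J(\pi^*) - J(\pi_{Q_m}) \leq \sum_{t=0}^\infty \gamma^t\Bigl\{\mathbb E_{m_t^{\pi_{Q_m},\pi^*}}|Q^*-Q_m| + \mathbb E_{m_t^{\pi_{Q_m},\pi_{Q_m}}}|Q^*-Q_m|\Bigr\},$$
where $m_t^{\pi,\tilde\pi}(s,a) = \rho_t^{\pi}(s)\,\mathbf{1}\{a=\tilde\pi(s)\}$ and $\rho_t^{\pi}$ is the state density at time $t$ induced by following $\pi$ from $d_0$ --- a mild specialization of the appendix preamble's notation.

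Each term is then handled by Cauchy--Schwarz, $\mathbb E_{m_t}|Q^*-Q_m| \leq ||Q^*-Q_m||_{m_t}$, followed by a change of measure to $P^\mu$: the one-step induction used in the previous lemma gives $(1-d)\rho_t^{\pi_{Q_m}}\leq B_T$ for every $t\geq 0$, hence $(1-d)m_t^{\pi_{Q_m},\tilde\pi}\leq B_T$ for any deterministic $\tilde\pi$; since $Q^*-Q_m$ vanishes on terminal states under assumption (A1) and $P^\mu\geq\psi$ elsewhere, this gives $||Q^*-Q_m||_{m_t}^2 \leq (B_T/\psi)\,||Q^*-Q_m||_{P^\mu}^2 \leq (B_T/\psi)\,\epsilon^2$. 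Summing $\sum_{t=0}^\infty\gamma^t = (1-\gamma)^{-1}$ and multiplying by the factor of $2$ from the two terms produces the claimed constant $\frac{2\sqrt{B_T}}{\sqrt{\psi}(1-\gamma)}$. The only non-mechanical step is this change of measure: the rollout densities $m_t$ start from $d_0$ and evolve under $T$ (not under $\mu$), so they need not be dominated by $P^\mu$ on terminal states where $P^\mu$ may place no mass. Keeping the $(1-d)$ factor attached throughout --- so that $P^\mu\geq\psi$ is only invoked where the integrand is nonzero --- is the same device used in the previous lemma and sidesteps the issue.
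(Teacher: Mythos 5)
Your proof is correct and follows essentially the same route as the paper's: insert $\pm Q_m$, use greediness of $\pi_{Q_m}$ to discard the cross term, apply $Q^*(s,a)-Q^{\pi_{Q_m}}(s,a)=\gamma\,\mathbb E_{s'\sim T(\cdot|s,a)}[V^*(s')-V^{\pi_{Q_m}}(s')]$ to get a recursion unrolled along greedy-policy rollouts, and change measure to $P^\mu$ via $(1-d)\times(\text{rollout density})\leq B_T$ and $P^\mu\geq\psi$, with the $(1-d)$ factor kept attached exactly as the paper does. The only difference is bookkeeping: you run the recursion pointwise in $L_1$ and pass to $L_2$ by Cauchy--Schwarz at the end, keeping two rollout-action distributions, whereas the paper runs it directly in squared $L_2$ norms using the max-error policy together with the $(a+c)^2\leq(a+b+c)^2$ trick, Minkowski and Jensen; both yield the same constant $\frac{2\sqrt{B_T}}{\sqrt{\psi}(1-\gamma)}$.
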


\begin{proof}
Let $\pi_m=\pi_{Q_m}$ and $m(a|s)$ be the max-error policy of $Q_m$ defined in the proof for the previous lemma. Let $d^{0}_{v}(s)$ be the marginal density of states induced from sampling $s_0\sim d_0$ and sampling $a_{t}\sim \pi_m(\cdot|s_{t})$ and $s_{t+1}\sim T(\cdot|s_t,a_t)$ for $v$ time steps, and  let $m_v(s,a)$ be the density of state-action pairs wrt dominating measure $\lambda$ defined as $d^0_v(s)m(a|s)$ (we assume this density always exists). Observe that for any $v\geq 0$:

\begin{align*}
||V^*-V^{\pi_{Q_m}}||_{d^0_v}^2=& \mathbb E_{s\sim d^0_v}\left[\left(Q^*(s,\pi^*(s))-Q^{\pi_{Q_m}}(s,\pi_m(s))\right)^2\right]\\
=& \mathbb E_{s\sim d^0_v}\left[\left(Q^*(s,\pi^*(s))-Q^*(s,\pi_m(s))+Q^*(s,\pi_m(s))-Q^{\pi_{Q_m}}(s,\pi_m(s))\right)^2\right] \\
=& \mathbb E_{s\sim d^0_v}\left[\left(Q^*(s,\pi^*(s))-Q^*(s,\pi_m(s))+\gamma\mathbb E_{s'\sim T(\cdot|s,\pi_m(s))}[V^*(s')-V^{Q_m}(s')]\right)^2\right]\\
\leq&  \mathbb E_{s\sim d^0_v}\left[\left(Q^*(s,\pi^*(s))-Q_m(s,\pi^*(s))+Q_m(s,\pi_m(s))-Q^*(s,\pi_m(s))\right.\right.\\&\left.\left. +\gamma\mathbb E_{s'\sim T(\cdot|s,\pi_m(s))}[V^*(s')-V^{Q_m}(s')]\right)^2\right].
\end{align*}

Note that for any $s\in\mathcal S$, it holds that $a(s)=Q^*(s,\pi^*(s))-Q^{*}(s,\pi_m(s))=\max_aQ^*(s,a)-Q^{*}(s,\pi_m(s))\geq0$, $b(s)=-Q_m(s,\pi^*(s))+Q_m(s,\pi_m(s))=-Q_m(s,\pi^*(s))+\max_aQ_m(s,a)\geq0$ and 
$c(s)=\gamma\mathbb E_{s'\sim T(\cdot|s,\pi_m(s))}[V^*(s')-V^{Q_m}(s')]=\gamma\mathbb E_{s'\sim T(\cdot|s,\pi_m(s))}[\max_\pi V^\pi(s')-V^{Q_m}(s')]\geq0$. Then $(a(s)+c(s))^2\leq (a(s)+b(s)+c(s))^2$ for any $s\in \mathcal S$, which implies the inequality on the final line. Then by Minkowski's inequality and Jensen's inequality:

\begin{align*}
||V^*-V^{\pi_{Q_m}}||_{d^0_v} \leq& \sqrt{\mathbb E_{s\sim d^0_v}\left[\left(Q^*(s,\pi^*(s))-Q_m(s,\pi^*(s))\right)^2\right]} + \sqrt{\mathbb E_{s\sim d^0_v}\left[\left(Q_m(s,\pi_m(s))-Q^*(s,\pi_m(s))\right)^2\right]}\\&+\sqrt{\mathbb E_{s\sim d^0_v}\left[\left(\gamma\mathbb E_{s'\sim T(\cdot|s,\pi_m(s))}[V^*(s')-V^{Q_m}(s')]\right)^2\right]} \\
\leq& 2\sqrt{\mathbb E_{s\sim d^0_v}\mathbb E_{a\sim m(\cdot|s)}\left[(Q^*(s,a)-Q_m(s,a))^2\right]}+\gamma\sqrt{\mathbb E_{s\sim d^0_v}\mathbb E_{s'\sim T(\cdot|s,\pi_m(s))}\left(V^*(s')-V^{Q_m}(s')\right)^2} \\
=& 2\sqrt{\mathbb E_{(s,a)\sim P^\mu}\left[(Q^*(s,a)-Q_m(s,a))^2\frac{m_v(s,a)}{P^\mu(s,a)}\right]}+\gamma\sqrt{\mathbb E_{s\sim d^0_{v+1}}\left(V^*(s')-V^{Q_m}(s')\right)^2} \\
=& 2||(Q^*-Q_m)\sqrt{m_v/P^\mu}||_{P^\mu}+\gamma ||V^*-V^{Q_m}||_{d^0_{v+1}}.
\end{align*}

Therefore, for any $N\in \mathbb{N}$:

\begin{align*}
||V^*-V^{\pi_m}||_{d_0}\leq& 2||(Q^*-Q_m)\sqrt{m_0/P^\mu}||_{P^\mu}+\gamma ||V^*-V^{Q_m}||_{d^0_{1}}\\
\leq& 2||(Q^*-Q_m)\sqrt{m_0/P^\mu}||_{P^\mu} + 2\gamma ||(Q^*-Q_m)\sqrt{m_1/P^\mu}||_{P^\mu}+\gamma^2 ||V^*-V^{Q_m}||_{d^0_{2}}\\
\vdots& \\
\leq& \sum_{t=0}^N2\gamma^t||(Q^*-Q_m)\sqrt{m_t/P^\mu}||_{P^\mu}+\gamma^{N+1}||V^*-V^{\pi_m}||_{d_{N+1}^\mu}.
\end{align*}

We make the same assumptions as Lemma B.2.1 and additionally assume that $d_0(s)\leq B_T$ for all $s\in\mathcal S$ with probability one (A4). Under condition (A1), we have that $||V^*-V^{\pi_m}|_\infty<\infty$, which means that $\lim_{N\to\infty}\gamma^{N+1}||V^*-V^{\pi_m}||_{d_{N+1}^\mu}=0$. Moreover, under conditions (A3) and (A4), $(1-d)m_0(s,a)\leq d_0(s)\leq B_T$ and for $t\geq 1$, we have by mathematical induction that:
$$
(1-d)m_t(s',a')=\int_{\mathcal S\times\mathcal A}(1-d)(s',a')m(a'|s')T(s'|s,a)\pi_m(a|s)d^{\mu}_{t-1}(s)\lambda(s,a)\leq B_T\int_{\mathcal S\times\mathcal A}d^{\mu}_{t-1}(s)\pi_m(a|s)\lambda=B_T.
$$

Therefore, under conditions (A1)-(A4), we have for any $t>0$ $||(Q^*-Q_m)\sqrt{m_t/P^\mu}||_{P^\mu}^2\leq \frac{B_T}{\psi}||Q^*-Q_m||_{P^\mu}^2$. Putting this all together, we have:
$$
J(\pi^*)-J(\pi_m)=||V^*-V^{\pi_m}||_{d^0}\leq \frac{2\sqrt{B_T}}{\sqrt{\psi}(1-\gamma)}||Q^*-Q_m||_{P^\mu}.
$$
\end{proof}

\begin{proposition}\label{prop2_append}
Assume that $||F^{\mathcal D}-F^\mu||_\infty=0$. Then:
\begin{equation*}
\begin{split}
||Q_m-\mathcal B_{\mathcal D}Q_m||^2_{\mathcal D}-||Q_m-\mathcal B^*Q_m||^2_{P^\mu}=\quad\quad&\\\mathbb E_{(S_t,A_t)\sim P^\mu}\left\{\mbox{Var}\left[R_t+\gamma\max_{a'\in\mathcal A}Q_m(S_{t+1},a')|S_t,A_t\right]\right\}.
\end{split}
\end{equation*}
\end{proposition}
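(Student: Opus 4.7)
The plan is to reduce the claim to a standard conditional bias-variance decomposition by first exploiting the assumption on the empirical CDF. Since $\|F^{\mathcal D}-F^\mu\|_\infty=0$, the empirical expectation $\mathbb E_{\mathcal D}$ over transitions agrees with the population expectation under $P^\mu_T(s,a,r,s')=d^\mu(s)\mu(a|s)T(s'|s,a)R(s,a,s')$. Consequently the EMSBE term on the left can be rewritten as
\[
\|Q_m-\mathcal B_{\mathcal D}Q_m\|_{\mathcal D}^2
= \mathbb E_{(S_t,A_t,R_t,S_{t+1})\sim P^\mu_T}\!\left[\bigl(Q_m(S_t,A_t)-R_t-\gamma\max_{a'}Q_m(S_{t+1},a')\bigr)^2\right].
\]
This is the only place the assumption is used; everything afterward is purely probabilistic.

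Next I would introduce the shorthand $X=(S_t,A_t)$ and $Y=R_t+\gamma\max_{a'\in\mathcal A}Q_m(S_{t+1},a')$. By the definition of the Bellman operator in Equation~\ref{eq:Bellman}, $(\mathcal B^*Q_m)(X)=\mathbb E[Y\mid X]$. Hence the MSBE term is $\mathbb E[(Q_m(X)-\mathbb E[Y\mid X])^2]$ and the EMSBE term (after the reduction above) is $\mathbb E[(Q_m(X)-Y)^2]$. The claim is therefore equivalent to
\[
\mathbb E\bigl[(Q_m(X)-Y)^2\bigr]-\mathbb E\bigl[(Q_m(X)-\mathbb E[Y\mid X])^2\bigr]=\mathbb E\bigl[\mathrm{Var}(Y\mid X)\bigr].
\]

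The core step is to condition on $X$ inside the first expectation and invoke the standard identity that, for a random variable $Y$ and any $X$-measurable constant $c(X)$, $\mathbb E[(c(X)-Y)^2\mid X]=(c(X)-\mathbb E[Y\mid X])^2+\mathrm{Var}(Y\mid X)$. Applying this with $c(X)=Q_m(X)$, taking the outer expectation by the tower property, and subtracting the MSBE term gives exactly $\mathbb E[\mathrm{Var}(Y\mid X)]$, which matches the right-hand side.

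I do not expect any real obstacle: the only subtlety is making sure the assumption $\|F^{\mathcal D}-F^\mu\|_\infty=0$ is correctly converted into an equality of expectations for measurable functions of $(s,a,r,s')$ (which follows from the fact that CDF equality implies distributional equality, and hence equality of expectations for bounded measurable functionals under the mild assumption that $Q_m$ and $R$ are bounded, as used in Appendix~\ref{append:proofs}). Everything else is the one-line conditional bias-variance decomposition.
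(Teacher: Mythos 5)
Your proposal is correct and takes essentially the same route as the paper: the paper also first uses $\|F^{\mathcal D}-F^\mu\|_\infty=0$ to pass to the population expectation over $P^\mu_T$, then adds and subtracts $\mathcal B^*Q_m(s,a)$ inside the square and shows the cross term vanishes by conditioning on $(s,a)$ — which is exactly the conditional bias--variance identity you invoke in packaged form. (Minor notational slip only: $P^\mu_T(s,a,r,s')$ should not carry the factor $R(s,a,s')$; it is just $d^\mu(s)\mu(a|s)T(s'|s,a)$, with $r$ determined by $(s,a,s')$.)
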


\begin{proof}
Suppose $||F^{\mathcal D}-F^{\mathcal \mu}||_\infty=0$. Note $\mathbb E_{(s,a,r,s')\sim P^\mu_T}[f(s,a,r,s')]=\mathbb E_{(s,a)\sim P^\mu}\mathbb E_{s'\sim T(\cdot|s,a)}[f(s,a,R(s,a,s'),s')]$ for any function of transitions $f$. Then:

\begin{align*}
||\mathcal B_{\mathcal D}Q_m-Q_m||^2_{\mathcal D}=& \mathbb E_{(s,a,r,s')\sim P^\mu}\left[\left(r+\gamma\max_{a'}Q_m(s',a')-Q_m(s,a)\right)^2\right] \\
=& \mathbb E_{(s,a,r,s')\sim P^\mu_T}\left[\left(r+\gamma\max_{a'}Q_m(s',a')-\mathcal B^*Q_m(s,a)+\mathcal B^*Q_m(s,a)-Q_m(s,a)\right)^2\right] \\
=& ||\mathcal B^*Q_m-Q_m||^2_{P^\mu}\\& + \mathbb E_{(s,a)\sim P^\mu}\mathbb E_{s'\sim T(\cdot|s,a)}\left[\left(R(s,a,s')+\gamma\max_{a'}Q_m(s',a')-\mathcal B^*Q_m(s,a)\right)^2\right] \\+& 2\mathbb E_{(s,a)\sim P^\mu}\mathbb E_{s'\sim T(\cdot|s,a)}\left[\left(\mathcal B^*Q_m(s,a)-Q_m(s,a)\right)\left(R(s,a,s')+\gamma\max_{a'}Q_m(s',a')-\mathcal B^*Q_m(s,a\right)\right]. 
\end{align*}

Note that $\mathbb E_{s'\sim T(\cdot|s,a)}\left[\left(R(s,a,s')+\gamma\max_{a'}Q_m(s',a')-\mathcal B^*Q_m(s,a)\right)^2\right]$ is equal to $\text{Var}\left[R_t+\gamma\max_{a'}Q_m(S_{t+1},a')|S_t=s,A_t=a\right]$.

Moreover:

\begin{align*}
&\mathbb E_{(s,a)\sim P^\mu}\left\{\mathbb E_{s'\sim T(\cdot|s,a)}\left[\left(\mathcal B^*Q_m(s,a)-Q_m(s,a)\right)\left(R(s,a,s')+\gamma\max_{a'}Q_m(s',a')-\mathcal B^*Q_m(s,a\right)\right]\right\} \\
=& \mathbb E_{(s,a)\sim P^\mu}\left\{\left(\mathcal B^*Q_m(s,a)-Q_m(s,a)\right)\mathbb E_{s'\sim T(\cdot|s,a)}\left[\left(R(s,a,s')+\gamma\max_{a'}Q_m(s',a')-\mathcal B^*Q_m(s,a\right)\right]\right\} \\
=& \mathbb E_{(s,a)\sim P^\mu}\left\{\left(\mathcal B^*Q_m(s,a)-Q_m(s,a)\right)\left(\mathcal B^*Q_m(s,a)-\mathcal B^*Q_m(s,a)\right)\right\}=0. \\
\end{align*}

Therefore:

$$
||Q_m-\mathcal B_{\mathcal D}Q_m||^2_{\mathcal D}=||Q_m-\mathcal B^*Q_m||^2_{P^\mu}+\mathbb E_{(S_t,A_t)\sim P^\mu}\left\{\text{Var}\left[R_t+\gamma\max_{a'}Q_m(S_{t+1},a')|S_t,A_t\right]\right\}.
$$

This concludes the proof.
\end{proof}

\begin{proposition}\label{prop3_append}
Assume that $||F^{\mathcal D_V}-F^\mu||_\infty=||F^{\mathcal D_T}-F^\mu||_\infty=0$ and $\widehat{\mathcal B}^*Q_m=\mbox{arginf}_{f}||\mathcal B_{\mathcal D_T}Q_m-f||^2_{\mathcal D_T}$. Then $||Q_m-\widehat{\mathcal B}^*Q_m||^2_{\mathcal D_V}=||Q_m-{\mathcal B}^*Q_m||^2_{P^\mu}$.
\end{proposition}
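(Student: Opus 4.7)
The plan is to reduce the claim to the standard fact that the $L_2$ projection of a random variable onto functions of a conditioning variable is the conditional expectation, and to use the hypothesis $\|F^{\mathcal D_T}-F^\mu\|_\infty = \|F^{\mathcal D_V}-F^\mu\|_\infty = 0$ to replace empirical expectations by population expectations in both the regression step and the validation step. Concretely, the assumption on the EDF means that for any bounded measurable function $g$ of a transition $(s,a,r,s')$, we have $\mathbb E_{\mathcal D_T}[g(s,a,r,s')] = \mathbb E_{\mathcal D_V}[g(s,a,r,s')] = \mathbb E_{(s,a,r,s')\sim P^\mu_T}[g(s,a,r,s')]$.

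First I would rewrite the regression objective at its argmin using this identity:
\begin{equation*}
\|\mathcal B_{\mathcal D_T}Q_m - f\|_{\mathcal D_T}^2 \;=\; \mathbb E_{(s,a,r,s')\sim P^\mu_T}\!\left[\left(r+\gamma\max_{a'}Q_m(s',a') - f(s,a)\right)^2\right].
\end{equation*}
By the standard MSE-projection argument (which is how Proposition~\ref{prop2_append} decomposes the expression, and which is stated in the main text as ``$\mathbb E[(Y-f(X))^2]$ is minimized at $f(X)=\mathbb E[Y|X]$''), the unique $P^\mu$-a.e. minimizer over all measurable $f$ is
\begin{equation*}
\widehat{\mathcal B}^*Q_m(s,a) \;=\; \mathbb E\!\left[R_t + \gamma\max_{a'}Q_m(S_{t+1},a')\;\middle|\;S_t=s,\,A_t=a\right] \;=\; \mathcal B^*Q_m(s,a),
\end{equation*}
where the equality holds $P^\mu$-almost everywhere on $\mathcal S\times\mathcal A$.

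Second, since $\widehat{\mathcal B}^*Q_m = \mathcal B^*Q_m$ holds $P^\mu$-a.e., the validation objective satisfies
\begin{equation*}
\|Q_m-\widehat{\mathcal B}^*Q_m\|_{\mathcal D_V}^2 \;=\; \mathbb E_{(s,a)\sim P^\mu}\!\left[(Q_m(s,a)-\widehat{\mathcal B}^*Q_m(s,a))^2\right] \;=\; \|Q_m-\mathcal B^*Q_m\|_{P^\mu}^2,
\end{equation*}
where the first equality uses $\|F^{\mathcal D_V}-F^\mu\|_\infty = 0$ (so that $\mathbb E_{\mathcal D_V}$ coincides with $\mathbb E_{P^\mu}$ on the marginal of $(s,a)$) and the second uses the $P^\mu$-a.e. identification of $\widehat{\mathcal B}^*Q_m$ with $\mathcal B^*Q_m$.

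The main subtlety — and what I expect to be the only nontrivial point — is that the argmin over ``all $f$'' is only unique up to $P^\mu$-null sets, so $\widehat{\mathcal B}^*Q_m$ is a \emph{representative} of an equivalence class rather than a pointwise-defined object. This does not affect either the training MSE or the validation MSE, because both are integrals against $P^\mu$ under the hypothesis on the EDFs, and any two representatives agree $P^\mu$-a.e. If one wanted to avoid working with equivalence classes, one could restrict the argmin to a class of functions rich enough to contain $\mathcal B^*Q_m$ (e.g.\ all bounded measurable functions, noting $\mathcal B^*Q_m$ is bounded under the standard assumptions used in Proposition~\ref{prop1_append}) and observe that $\mathcal B^*Q_m$ itself attains the infimum, at which point the validation identity follows immediately.
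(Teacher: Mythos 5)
Your proposal is correct and follows essentially the same route as the paper: use $\|F^{\mathcal D_T}-F^\mu\|_\infty=0$ to identify the training objective with the population MSE, invoke the fact that the MSE minimizer is the conditional expectation so that $\widehat{\mathcal B}^*Q_m=\mathcal B^*Q_m$ holds $P^\mu$-a.e., and then use $\|F^{\mathcal D_V}-F^\mu\|_\infty=0$ to equate the validation norm with the $P^\mu$ norm. Your remark on the a.e.-uniqueness of the minimizer matches the paper's own caveat about the exceptional $P^\mu$-null set, so nothing is missing.
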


\begin{proof}
 As $||F^{\mathcal D_T}-F^\mu||_\infty=0$, we have:

\begin{align*}
||\mathcal B_{\mathcal D_T}Q_m-f||^2_{\mathcal D_T}=&\mathbb E_{(s,a,r,s')\sim P^\mu}\left[\left(r+\gamma\max_{a'}Q_m(s',a')-f(s,a)\right)^2\right] \\
=& \mathbb E_{(s,a)\sim P^\mu}\mathbb E_{s'\sim T(\cdot|s,a)}\left[\left(R(s,a,s')+\gamma\max_{a'}Q_m(s',a')-f(s,a)\right)^2\right]. \\
\end{align*}

This is just a population MSE loss function with targets $y=R(s,a,s')+\gamma\max_{a'}Q_m(s',a'),s'\sim T(\cdot|s,a)$ and covariates $x=(s,a)\sim P^\mu$. It is well-known that the function $f$ minimizing this expectation satisfies $f(x)=\mathbb E[Y|X=x]=\mathcal B^*Q_m$ except possibly for some set $\mathcal X^C$ such that $P^\mu(x\in\mathcal X^C)=0$ (see for example \citet{Hastie2009}). We thus have that $\widehat{\mathcal B}^*Q_m=\mathcal B^*Q_m$ almost surely with respect to $P^\mu$. Then it is easy to see that $||Q_m-\widehat{\mathcal B}^*Q_m||^2_{P^\mu}=||Q_m-\mathcal B^*Q_m||^2_{P^\mu}$. Finally, as $||F^{\mathcal D_V}-F^\mu||_\infty=0$, $||Q_m-\widehat{\mathcal B}^*Q_m||^2_{\mathcal D_V}=||Q_m-\widehat{\mathcal B}^*Q_m||^2_{P^\mu}$. This concludes the proof.
\end{proof}

Our last proposition is an analogue of Proposition \ref{prop1_append} in $L_\infty$ space. Because the Bellman operator is an $L_\infty$ contraction, our derived bounds based on the $L_\infty$ norm of the Bellman error is much tighter than those based on the $L_2$ norm of the Bellman error. 

\begin{proposition}\label{prop0_append}
Let $\widehat{\mathcal B}^*Q_m$ be an estimate of $\mathcal B^*Q_m$ and assume that $||Q_m-\widehat{\mathcal B}^*Q_m||_\infty\leq \epsilon$ and $||\widehat{\mathcal B}^*Q_m-\mathcal B^*Q_m||_\infty\leq \delta$. Then i) $||Q_m-Q^*||_\infty\leq \frac{1}{1-\gamma}(\epsilon+\delta)$ and ii) $||V^{\pi^*}-V^{\pi_{Q_m}}||_\infty\leq \frac{2}{(1-\gamma)^2}(\epsilon+\delta)$.
\end{proposition}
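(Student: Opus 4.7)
The plan is to exploit the two classical $L_\infty$ facts about the Bellman operator that make the infinity-norm analysis much cleaner than the $L_2$ analysis of Proposition \ref{prop1_append}: (a) $\mathcal{B}^*$ is a $\gamma$-contraction in $\|\cdot\|_\infty$ with unique fixed point $Q^*$, and (b) the greedy-policy performance loss is bounded by twice the value-function error (the Singh--Yee / Williams--Baird style bound). Everything else is triangle inequality.

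First, I would combine the two assumptions via the triangle inequality to control the \emph{true} (variable-target) Bellman error of $Q_m$:
\begin{equation*}
\|Q_m - \mathcal{B}^*Q_m\|_\infty \;\leq\; \|Q_m - \widehat{\mathcal{B}}^*Q_m\|_\infty + \|\widehat{\mathcal{B}}^*Q_m - \mathcal{B}^*Q_m\|_\infty \;\leq\; \epsilon + \delta.
\end{equation*}
Then for part (i), write $Q_m - Q^* = (Q_m - \mathcal{B}^*Q_m) + (\mathcal{B}^*Q_m - \mathcal{B}^*Q^*)$ using $Q^* = \mathcal{B}^*Q^*$, apply the triangle inequality, and use the $\gamma$-contraction to obtain
\begin{equation*}
\|Q_m - Q^*\|_\infty \;\leq\; (\epsilon+\delta) + \gamma \|Q_m - Q^*\|_\infty,
\end{equation*}
which rearranges to the claimed bound $\|Q_m - Q^*\|_\infty \leq (\epsilon+\delta)/(1-\gamma)$. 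The contraction step itself is a one-line pointwise argument: $|\max_{a'} Q_m(s',a') - \max_{a'} Q^*(s',a')| \leq \max_{a'}|Q_m(s',a')-Q^*(s',a')| \leq \|Q_m-Q^*\|_\infty$, then take expectation over $T(\cdot|s,a)$ and multiply by $\gamma$.

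For part (ii), I would invoke the standard policy-loss-to-value-error inequality $\|V^{\pi^*} - V^{\pi_Q}\|_\infty \leq \frac{2}{1-\gamma}\|Q - Q^*\|_\infty$, applied with $Q = Q_m$. This can be proved in essentially the same manner as Lemma B.2.2 but with suprema in place of $P^\mu$-expectations: decompose $V^*(s) - V^{\pi_{Q_m}}(s) = [Q^*(s,\pi^*(s)) - Q_m(s,\pi^*(s))] + [Q_m(s,\pi_m(s)) - Q^*(s,\pi_m(s))] + \gamma \mathbb{E}_{s'\sim T(\cdot|s,\pi_m(s))}[V^*(s') - V^{\pi_{Q_m}}(s')]$ where the middle bracket is nonnegative by greediness of $\pi_m = \pi_{Q_m}$ with respect to $Q_m$, bound the first two terms by $2\|Q_m - Q^*\|_\infty$, and iterate to pick up the geometric series $\sum_{t=0}^\infty \gamma^t = 1/(1-\gamma)$. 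Composing with part (i) immediately gives $\|V^{\pi^*} - V^{\pi_{Q_m}}\|_\infty \leq \frac{2(\epsilon+\delta)}{(1-\gamma)^2}$.

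There is essentially no serious obstacle here; the only mildly delicate point is verifying that the greedy policy $\pi_{Q_m}$ satisfies $Q_m(s,\pi_m(s)) \geq Q_m(s,\pi^*(s))$ pointwise (used to drop the nonnegative middle term without changing sign in the policy-loss recursion), which is immediate from the definition $\pi_{Q_m}(s) = \operatorname{argmax}_a Q_m(s,a)$. Compared to Proposition \ref{prop1_append}, no coverage constant $\psi$ or boundedness constant $B_T$ appears, because contractivity in $\|\cdot\|_\infty$ avoids the need for importance-ratio bookkeeping between $P_t^\mu$ and $P^\mu$; this is precisely the tightness gain advertised in the proposition's preamble.
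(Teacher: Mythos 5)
Your proof is correct and follows essentially the same route as the paper's: triangle inequality plus the fixed-point property and $\gamma$-contraction of $\mathcal B^*$ to get $\|Q_m-Q^*\|_\infty\leq\frac{1}{1-\gamma}(\epsilon+\delta)$, then the standard $\|V^{\pi^*}-V^{\pi_{Q_m}}\|_\infty\leq\frac{2}{1-\gamma}\|Q_m-Q^*\|_\infty$ bound for part (ii). The only cosmetic difference is that the paper simply cites Lemma 1.11 of \citet{Agarwal2022} for that last inequality, whereas you sketch its (correct, Singh--Yee style) proof directly.
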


\begin{proof}
Suppose $||Q_m-\widehat{\mathcal B}^*Q_m||_\infty\leq \epsilon$ and  $||\widehat{\mathcal B}^*Q_m-\mathcal B^*Q_m||_\infty\leq \delta$. Then:

\begin{align*}
||Q_m-Q^*||_\infty =& ||Q_m-\mathcal B^*Q_m+\mathcal B^*Q_m-Q^*||_\infty \\
\leq& ||Q_m-\mathcal B^*Q_m||_\infty +||\mathcal B^*Q_m-Q^*||_\infty &&(\text{Subadditivity of $L_\infty$ norm})\\
=& ||Q_m-\mathcal B^*Q_m||_\infty + ||\mathcal B^*Q_m-\mathcal B^*Q^*||_\infty &&(Q^* \text{ is a fixed point of }\mathcal B^*) \\
\leq& ||Q_m-\mathcal B^*Q_m||_\infty + \gamma||Q_m-Q^*||_\infty &&(\mathcal B^* \text{ is a } \gamma\text{-contraction}) \\
\Rightarrow ||Q_m-Q^*||_\infty\leq& \frac{1}{1-\gamma}||Q_m-\mathcal B^*Q_m||_\infty \\
=& \frac{1}{1-\gamma}||Q_m-\widehat{\mathcal B}^*Q_m+\widehat{\mathcal B}^*Q_m-\mathcal B^*Q_m||_\infty \\
\leq& \frac{1}{1-\gamma}\left(||Q_m-\widehat{\mathcal B}^*Q_m||_\infty+||\widehat{\mathcal B}^*Q_m-\mathcal B^*Q_m||_\infty\right) &&(\text{Subadditivity of $L_\infty$ norm}) \\
\leq& \frac{1}{1-\gamma}\left(\epsilon+\delta\right).
\end{align*}

Moreover, it was proven in Lemma 1.11 of \citet{Agarwal2022} that $||V^*-V^{\pi_{Q_m}}||_\infty\leq \frac{2}{1-\gamma}||Q_m-Q^*||_\infty$. This concludes the proof. 
\end{proof}

\twocolumn

\section{Extended Details of non-Atari Experiments}
\label{append:non-atari}

\subsection{Extended Details of Toy Experiments}
\label{append:toy}

For our toy environment, we consider a stochastic MDP with four continuous states and a binary action $(\mathcal A=\{0,1\})$. Let $S_{t,j}$ denote the $j$th component of the observed state at time step $t$. The states are initialized from standard normal distributions and evolve as described below:

\begin{align*}
S_{t+1,1}=&\sqrt{x}S_{t,1}+(A_t-0.5)+\epsilon_{t},\\
S_{t+1,j}=& \sqrt{4x-2}S_{t,j}+\epsilon_{t,j}, \quad  j\in\{2,3,4\}, \\
\epsilon_{t}\sim& N(0,0.75-x), \epsilon_{t,j}\sim N(0,3-4x).
\end{align*}

The variable $x\in[0.5,0.75]$ controls the stochasticity of the environment: the MDP is deterministic when $x=0.75$, and entropy in the transition probabilities increase as $x$ decreases. We define the stochasticity variable as $\phi=0.75-x$, so that $\phi$ corresponds to the signal-to-noise ratio of $R_t$ and is zero when the MDP is deterministic. The reward is taken as the first component of the next state ($R_t=S_{t+1,1}$).  This MDP was constructed to be simple and interpretable. For example, it is easy to see that for any choice of $x$, the first and second univariate moments of the state remain constant over time $t$, $\pi^*(s)=1$ for all states $s$, $Q^*$ is a linear function, and  $S_{t+1,j},j\in\{2,3,4\}$ is independent of both the action and reward. 

The observed datasets used in our experiments consisted of $N=25$ trajectories of length $T=25$ each, which we split into training datasets of $N=20$ trajectories and validation datasets of $N=5$ trajectories. The behavioral policy was a completely random policy. We chose a discount factor of $\gamma=0.9$ during training. Our candidate set $\mathcal Q$ primarily consisted of estimates fit by FQI with polynomial ridge regression used as the regression algorithm. These estimates varied in the polynomial degree $d$ and $L_2$ weight penalty parameter $\lambda$. Also included in $\mathcal Q$ was the true optimal action-value function $Q^*$. For each $Q_m\in\mathcal Q$, SBV estimated $\mathcal B^*Q_m$ by training a polynomial ridge regression algorithm on the training set with degree-penalty combination $(d,\lambda)$ tuned to minimize error on the validation set. 

To estimate the true MSBE $||\mathcal B^*Q_m-Q_m||_{P^\mu}^2$, we approximated $\mathcal B^*Q_m$ using a KNN algorithm with a test set $\mathcal D_{T_1}$ of $2000$ trajectories each with $100$ time steps and generated using behavioral policy $\mu$, $k=100$ neighbors, and $S_{t,1}$ multiplied by two prior to calculating Euclidean distances (recall that $S_{t,k},2\leq k \leq 4$ are independent of $S_{t+1,1}=R_t$ and $A_{t-1}$ and thus should have diminished importance). We then approximated $P^\mu$ using an independent test set $\mathcal D_{T_2}$ of $1000$ trajectories each with $25$ time steps. We approximated $Q^*$ via linear regression with covariates $(A_0,S_{0,1})$ and outcomes $\sum_{t=0}^{100}\gamma^tR_t$, on a test set $\mathcal D_O$ generated by running optimal policy $\pi^*\equiv 1$ on $10,000$ simulated patients. Finally, the returns $\mathbb E_{\pi_{Q_j}}[\sum_{t=0}^\infty \gamma^tR_t]$ were estimated by running each greedy policy $\pi_{Q_j}$ on $1000$ simulated patients for $100$ time steps. Similar to what is often done on Atari experiments \cite{Agarwal2020}, we used a larger discount factor of $\gamma=1$ for evaluation, so that returns correspond to the expected sum of rewards over a trajectory.

\subsection{Extended Details of Bicycle Experiments}
\label{append:bike}

In the bicycle balancing control problem \cite{Randlov1998}, the MDP relates to a bicycle in a simulated physical environment which moves at constant speed on a horizontal plane. There are four relevant observed continuous state variables: the angle from vertical to bicycle $\omega$, its instantaneous rate of change $\dot\omega$, the displacement angle of the handlebars from normal $\theta$ and its instantaneous rate of change $\dot\theta$. While previous descriptions of the MDP include three additional states, these states are independent to the first four states and rewards, thereby being irrelevant to the control problem of interest, and two are usually considered hidden. A terminal state is reached when $|\omega|>12$, at which point the bicycle has fallen down. The actions are the torque $T\in\{-2,0,2\}$ applied to the handlebars and the displacement $d\in\{-0.02,0,0.02\}$ of the rider. The noise in the system is a uniformly distributed term in $[-0.02,0.02]$ added to the displacement. The reward function $R(s_t,a_t,s_{t+1})=-1I(|\omega_t|>12)$ corresponds to an optimal control problem that teaches an agent to balance the bicycle for as long as possible. 

The Bicycle MDP consists of highly nonlinear transition dynamics with sparse rewards and long-term consequences of taking certain actions. However, there is little noise in the environment, and the observed data typically consists of over $1000$ episodes. For our experiments, we chose a discount factor of $\gamma=0.99$ and a completely random behavioral policy $\mu(a|s)=1/9$. The environment, as well as the method it was created to benchmark, is well-known and has been extensively studied in the offline RL literature \cite{Lagoudakis2003, Ernst2005, Sutton2018}. More details about the MDP can be found in \citet{Randlov1998} and \citet{Ernst2005}. The value of a policy is measured as the expected number of time steps before a terminal state is reached when applying the policy online. 

For the Bicycle MDP, we generated 10 offline datasets consisting of $240$ episodes of $500$ time steps each, with $160$ episodes partitioned for training and the remaining $80$ episodes reserved for validation. Following \citet{Ernst2005}, Our candidate set $\mathcal Q$ were primarily fit by FQI with random forests used as the regression algorithm, with different hyperparameters for the number of training iterations $K$, the minimum node size $n_{\min}$ of the trees and the number of covariates $m_{\text{try}}$ randomly sampled to be considered at each split during tree growing. Also included in $\mathcal Q$ was the zero Q-function $Q\equiv 0$ corresponding to the behavioral policy (ties are broken randomly when calculating greedy policies) and polynomial Q-functions fit by minimizing EMSBE on the training set.  For each Bellman backup function $\mathcal B^*Q_m, Q_m\in\mathcal Q$ of interest, SBV used a separate random forest regression algorithm with the hyperparameters tuned to minimize MSE on the validation set. 

To maximize sample-efficiency when implementing WIS and FQE, we used the full dataset when calculating the sums in Equation \ref{eq:wis} as well as estimating the action-value function $Q^\pi$ and initial state distribution $d_0$ in Equation  \ref{eq:fqe}. Note that when $\mu$ is a completely random policy and $\pi$ is a deterministic policy, $\Pr_{\mu}(\prod_{v=0}^t\frac{\pi(A_v|S_v)}{\mu(A_v|S_v)}\neq 0)=|\mathcal A|^{-v}$. Moreover, for the Bicycle datasets a non-zero reward is observed only when a terminal state is reached and is usually only observed after nearly 100 zero rewards.
As the Bicycle datasets have under $250$ observed trajectories, it is easy from Equation \ref{eq:wis} why WIS would yield $\widehat J_{\mbox{WIS}}(\pi)=0$ for most policies $\pi$. As WIS gave identical ranks to all policies with standardized policy values ranging from $0$ to $1$, we assigned it a mean top-3 policy value of $0.5$ (corresponding roughly to random chance) for constructing Figure \ref{figure:bike}.

\subsection{Extended Details of mHealth Experiments}
\label{append:mhealth}

In the mHealth control problem \cite{Luckett2020}, the MDP relates to a disease process that evolves over time for patients in a simulated micro-randomized clinical trial \cite{Klasnja2015}. There are two observed continuous state variables, a binary action (indicating whether to apply treatment or do nothing) and a continuous reward function which trades-off the burden/cost of applying treatment with its effectiveness. The MDP has quadratic transition dynamics, dense rewards and short-term consequences of taking actions. However, the transitions are much noisier than those of the Bicycle MDP, reflecting the stochasticity typically observed in human health outcomes and behavior \cite{Kazdin2000}. Moreover, the observed data typically consist of under $100$ episodes, reflecting the smaller sample sizes that typically plague clinical trials and micro-randomized studies. All episodes in the observed datasets follow a completely random behavioral policy $\mu(a|s)=1/2$. Following \citet{Luckett2020}, an initial burn-in period of $50$ time steps were applied prior to both data generation and online policy evaluation to ensure stationary. The environment, as well as the method it was created to benchmark, is well-known and has been extensively studied in healthcare RL \citep{Zhu2020, Tsiatis2021, Liao2021, Yu2021, Liao2022, Shi2022}. More details can be found in \citet{Luckett2020}. The value of a policy is measured as the expected reward when applying the policy online for $100$ time steps. 

For the mHealth MDP, observed datasets consisted of $30$ episodes of $25$ time steps each, with $24$ episodes partitioned for training and the remaining $6$ episodes reserved for validation. Following \cite{Luckett2020}, Our candidate Q-functions $\mathcal Q$ were primarily polynomial functions fit by ridge-penalized least square policy iteration (LSPI) \cite{Lagoudakis2003}, with different hyperparameters for the degree of the polynomial $d$ and the $L_2$ weight/ridge penalty parameter $\lambda$. Also included in $\mathcal Q$ was polynomial Q-functions fit by minimizing EMSBE on the training set, and random forest Q-functions fit by FQI and varying in the hyperparameters $n_{\min}$ and $m_{\text{try}}$ (see Appendix \ref{append:bike} for details on these parameters).  

For each Bellman backup function $\mathcal B^*Q_m, Q_m\in\mathcal Q$ of interest, SBV used a separate regression algorithm tuned to minimize MSE on the validation set. The space of possible regression algorithms considered included random forest algorithms with different hyperparameters for $n_{\min}$ and $m_{\text{try}}$, and polynomial ridge regression with different hyperparameters for $d$ and $\lambda$. To maximize sample-efficiency when implementing WIS and FQE, we used the full dataset when calculating the sums in Equation \ref{eq:wis} as well as estimating the action-value function $Q^\pi$ and initial state distribution $d_0$ in Equation  \ref{eq:fqe}.

The scripts present in our repository \url{https://github.com/jzitovsky/SBV} can reproduce all the results from our paper. For those interested, these scripts contain information on the exact grid of hyperparameters used to tune the $Q^*$ estimation algorithm and SBV for the toy, mHealth and bicycle datasets. All scripts have some element of multi-threading. Non-Atari experiments were conducted using 2.50 GHz Intel CPU cores from our university's computing cluster. The most computationally-intensive experiments were from the bicycle environment: Here implementing tree-based FQI for all configurations in our hyperparameter grid was the primary computational bottleneck, and took approximately three hours per dataset with 30 CPUs.

\section{Extended Details of Atari Experiments}
\label{append:atari}

\subsection{Q-Learning Configurations for Atari}
\label{append:dqn_deep}

There were two training configurations explored when running Q-learning on Asterix, Seaquest and Breakout: a shallow and a deep configuration. For Pong, only the shallow configuration was used to speed-up our experiments. The shallow configuration uses the standard Nature DQN architecture and training algorithm \cite{Mnih2015} but with an Adam optimizer instead of an RMSProp optimizer. Most of the details of this configuration can be found in \citet{Agarwal2020} (they call this configuration ``Offline DQN (Adam)") and thus we do not repeat them here. We did however change the batch size from 32 to 128 to speed-up training. We should also mention that our definition of ``training iteration" differs from that of \citet{Agarwal2020} in that it involves more weight updates to the Q-network overall (640,000 training steps instead of 250,000 training steps) as well as more frequent re-loading of data from disk into memory to reduce the correlation of the mini-batches sampled for training. These changes apply to both the shallow and deep configurations. The scripts we wrote to run DQN with these configurations made heavy use of the Dopamine library \cite{dopamine}.

The deep configuration uses a deeper and more complex Q-Network architecture that features 16 convolution layers as well as max pooling layers, leaky ReLU activation layers \cite{Geron2019} and skip connections \cite{He2016}. The architecture here is just a high-performing architecture for the Bellman network that doesn't use batch normalization \cite{Ioffe2015}, according to validation Bellman backup MSE. While architectures that used batch normalization performed better as Bellman networks, they made DQN training prohibitively slow. We tweaked the learning rate and \textit{target update frequency} (as defined in \citet{Agarwal2020}) to 2.5e-5 and 32,000, respectively, so as to improve training stability.  Finally, we used double Q-learning targets \cite{Hasselt2016} to reduce overestimation bias, though we found that this modification had a minor effect relative to the other modifications discussed. 

The network graph of the deep configuration architecture can be found in Figure \ref{figure:dqn_deep}. We use Leaky ReLU activation layers in place of the standard ReLU activation layers to mitigate issues with vanishing gradients and dying ReLUs \cite{Geron2019}. A pre-processing layer first divides each element of the input state $s \in\mathcal S$ by $255$ to scale the elements to the $[0,1]$ range where $\mathcal S=\{0,1,...,255\}^{84\times 84 \times 4}$. After the pre-processing layer, the network starts with two convolutional layers and a pooling layer, with the first convolutional layer having a kernel size of $5\times 5$ and both convolutional layers having a filter size of $48$. After this are three units which we will call \textit{(DQN) convolutional stacks}. Each DQN convolutional stack consists of four convolution layers, all of which have the same number of filters, with a pooling layer inserted after the first two convolution layers. Moreover, the input to the stack is connected to the output of the pooling layer of the stack via a skip connection similar to other ResNet architectures \cite{Chollet2017}, and the input to the third convolution layer is connected to the output of the final convolution layer of the stack via a skip connection as well. The three convolution layers have filter sizes $72$, $144$ and $248$, in ascending order. After the three convolution stacks is a global average pooling layer and a dense layer with units equal to the number of possible actions $|\mathcal A|$. The output of this layer gives the Q-network estimates $Q_\theta(s,a),a\in\mathcal A$ of the true optimal action-values $Q^*(s,a),a\in\mathcal A$ for the given input state $s\in\mathcal S$.

\setcounter{figure}{0}
\begin{figure*}[t]
\centering
\includegraphics[width=\linewidth]{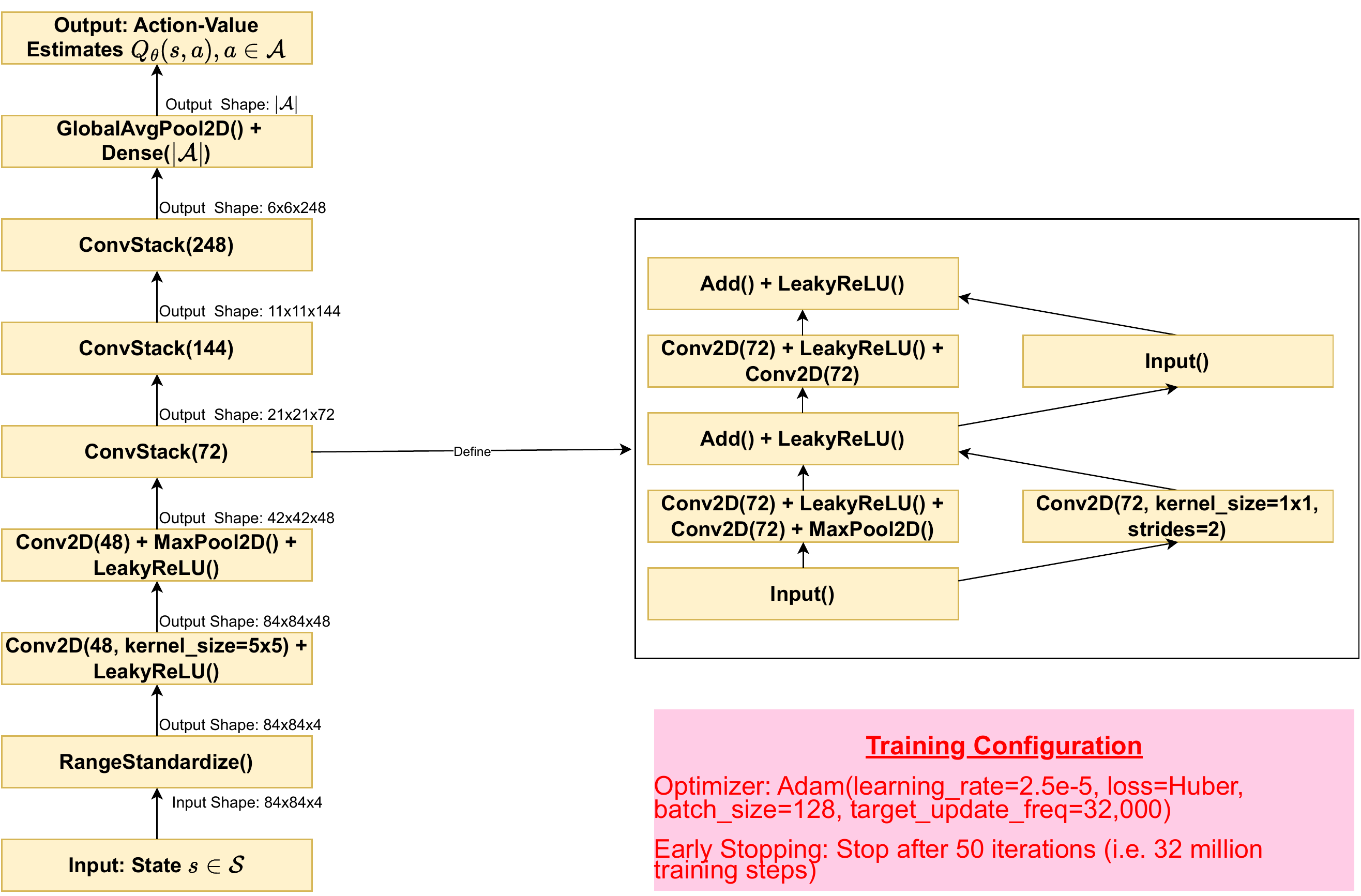}
\caption{Network Graph of the Deep Configuration of DQN. Unlabeled arrows represent feed-forward connections. Unless otherwise specified, all layers use the default parameters specified by TensorFlow v2.5.0 \cite{tf2015}, with the following exceptions:  1) convolutional layers use 3x3 kernels, zero padding (padding=``SAME") and a He uniform weight initialization \cite{He2015}; 2) max pooling layers use 3x3 kernels, a vertical and horizontal stride of 2 and apply zero padding (padding=``SAME").}
\label{figure:dqn_deep}
\vspace{-0.3cm}
\end{figure*}

\subsection{Implementation Details of SBV for Atari}
\label{append:sbv_atari}

We used two training configurations for the Bellman network: A simpler configuration for Pong and a more complex one for the other three games. The more complex architecture was tuned to minimize validation MSE across the Breakout, Asterix and Seaquest datasets and performs significantly better (in terms of validation MSE) than previously-used architectures and configurations such as Nature \cite{Mnih2015} and IMPALA \cite{Espeholt2018}. In theory, we could have used a different training configuration for every dataset, or even for every Bellman backup function, but we avoided doing this to simplify the experiments. The complex Bellman network configuration is more reminiscent of the deep neural networks applied to modern image classification problems such as ImageNet \cite{ILSVRC15} than the small networks typically employed in deep RL and includes commonly-used design choices in deep learning, such as batch normalization \cite{Ioffe2015} and residual connections \cite{He2016}, to maximize generalization.  The simpler configuration applied to Pong was utilized because it achieved almost the same performance on Pong as the more complex configuration while requiring significantly less computational resources. The more complex configuration is described below. Details of the simpler configuration are not as important and are not described here.

An illustration of the full network graph can be found in Figure \ref{figure:Bellman_net}. As is standard in supervised deep convolutional neural networks, we follow every convolutional layer with a batch normalization layer except for those preceding pooling layers: these convolutional layers are followed first by a pooling layer and then by a batch normalization layer.  We use Leaky ReLU activation layers in place of the standard ReLU activation layers to mitigate vanishing gradient and dying ReLU problems \cite{Geron2019}.

A pre-processing layer first scales each pixel value present in the input state $s\in\mathcal S$ to the $[0,1]$ range, as is standard for image-based neural networks. After the pre-processing layer, the network starts with three convolutional layers and a pooling layer. After this are four units which we will call \textit{(Bellman) convolutional stacks}. Each Bellman convolutional stack consists of three convolutional layers, a pooling layer and a squeeze-and-excitation (SE) block \cite{Hu2018}. The SE block in this case consists of a global average pooling layer, a hidden dense bottleneck layer with $1/4$th as many hidden neurons as the input width (+ batch norm + leaky ReLU), and an output layer with softplus activation. The input signal is connected to the output of the SE block via a skip connection similar to other SE-ResNet architectures \cite{Hu2018}. After the four convolutional stacks are two convolutional layers that use depthwise separable convolutions \cite{Chollet2017} followed by an SE block. A skip connection connects the input of the first depthwise separable layer to the output of this final SE block. 

The number of feature maps per layer increases with depth, increasing from $48$ feature maps for the first convolutional layer to $240$ for the final depthwise separable layers.   Following the final SE block is a global average pooling to reduce the number of parameters and a dense layer with units equal to the number of possible actions $|\mathcal A|$. The output of the network $\mathcal B_\phi(s,a)$ is the dot product between the output of this dense layer and a one-hot transformation of the inputted action $a\in\mathcal A$, and is an estimate of the Bellman backup $(\mathcal B^*Q_m)(s,a)$ where $Q_m$ is a candidate Q-function that we are trying to evaluate.

\begin{figure*}[htbp]
\centering
\includegraphics[width=\linewidth]{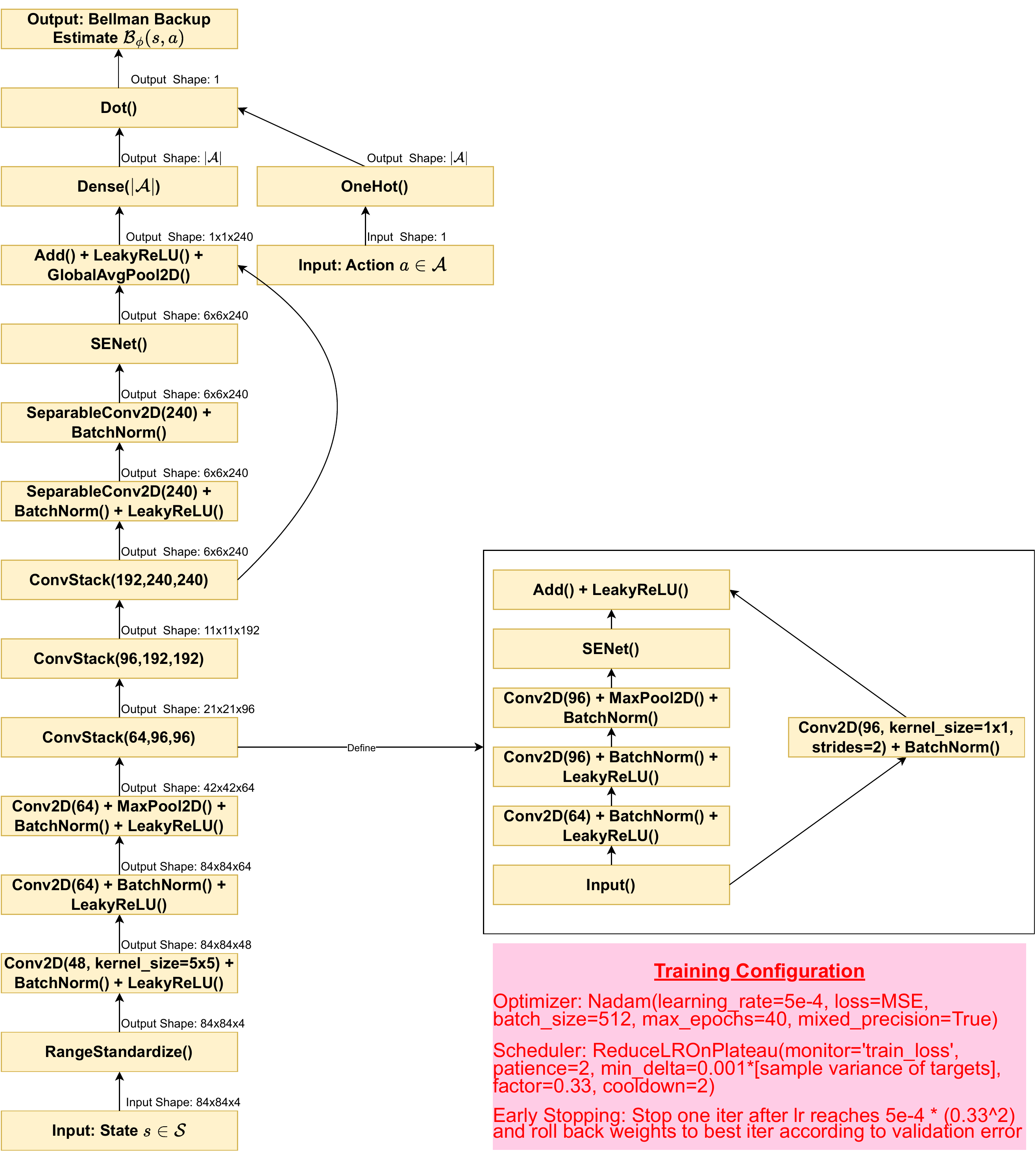}
\caption{Network Graph of Bellman Network. Unlabeled arrows represent feed-forward connections. Unless otherwise specified, all layers use the default parameters specified by TensorFlow v2.5.0 \cite{tf2015}, with the following exceptions:  1) convolutional layers use 3x3 kernels, zero padding (padding=``SAME"), no bias and a He uniform weight initialization \cite{He2015}; 2) max pooling layers use 3x3 kernels, a vertical and horizontal stride of 2 and apply zero padding (padding=``SAME"); 3) SENet() uses an identical architecture to the squeeze-and-excitation units in \citet{Hu2018}, except the bottleneck layer uses a reduction factor of 4 instead of 16 and a leaky ReLU activation instead of the standard ReLU, and the output layer uses a softplus activation instead of sigmoid.}
\label{figure:Bellman_net}
\vspace{-0.3cm}
\end{figure*}

The network weights are trained on $\mathcal D_T$ to minimize MSE via the NAdam optimization algorithm \cite{Dozat2016}. To speed up computations, we enabled the mixed precision feature of TensorFlow \cite{tf2015} for the computations and used a batch size of $512$. For the learning rate scheduler, we used an initial learning rate of $5\times 10^{-4}$ and multiplied the learning rate by a factor of $0.33$ whenever training loss did not improve by at least $0.001\times\text{var}(\mathcal B_{\mathcal D_T}Q_m(s_t,a_t)))$ over $2$ consecutive iterations where $\text{var}(\mathcal B_{\mathcal D_T}Q_m(s_t,a_t))$ is the sample variance of the targets within $\mathcal D_T$ being used to train $\mathcal B_\phi$. Training was terminated after the first iteration with the learning rate equal to $5\times 10^{-4}\times 0.33^2$. The performance of the network after each epoch was evaluated by calculating the MSE from validation set $\mathcal D_V$, and only the weights from the iteration with the lowest validation loss were saved. 

For Asterix, Seaquest and Pong, rather than loading all the data into memory at once, we sharded the data into $100$ separate files and performed a training epoch by repeatedly loading $10$ different files into memory. This reduced memory costs substantially, and was found to not significantly degrade computational or generalization performance on most games. An exception was for Breakout, where we noticed modest gains in generalization error by loading the full data into memory all at once when conducting training.  

Denote $Q_{\theta_k}$ as the trained Q-network after the $k$th iteration for a particular DQN  configuration. We wish to estimate the Bellman backup functions $\mathcal B^*Q_{\theta_k}, 1\leq k \leq 50$. Let $\mathcal B_{\phi_k}$ denote the trained Bellman network for estimating the $\mathcal B^*Q_{\theta_k}$. We perform two additional tricks to speed up computations. First, for most iterations, we initialize the trained weights of $\mathcal B_{\phi_{k+1}}$ as $\phi_k$ before conducting training, similar to Algorithm \ref{alg:dqn_sbv}. This can be thought of as a kind of transfer learning and greatly speeds up computation. We do, however, occasionally re-randomize initial weights for an iteration via He and Xavier initialization schemes \citep{Glorot2010, He2015} to avoid overfitting. The number of consecutive iterations $I$ where weights are transferred from previous iterations before re-randomizing the weights is a hyperparameter that we tuned to minimize validation MSE. In our case, we set $I=5$ for Pong, $I=3$ for Seaquest and Breakout, and $I=1$ for Asterix. 

The second trick we apply is stopping training once the top-5 policies according to SBV fail to change for five consecutive iterations. By stopping after it is clear that the optimal iteration has been found, we avoid wasting time and resources evaluating Q-functions that aren't going to be selected anyway. As calculating Spearman correlations with policy returns would require running SBV on all 100 Q-functions for all 12 datasets, this trick meant that Spearman correlations between SBV and policy returns would not be available. Given that Spearman correlation is not the most important metric anyway for reasons discussed in previous sections, we felt that calculating these Spearman correlations was not necessary for the Atari games and their exclusion was worth the improved compute. 

\subsection{Other Implementation Details for Atari}
\label{append:wis_atari}

As WIS requires knowledge of the unknown behavioral policy $\mu$, we estimated it using a neural network approximator $\mu_\beta$ with trainable parameter vector $\beta$ which we dub our \textit{propensity network}. $\mu_\beta$ was trained on $\mathcal D_T$ so as to maximize log-likelihood $\sum_{(a,s)\in \mathcal D}[\log \mu_\theta(a|s)]$ while its log-likelihood on $\mathcal D_V$ was used to tune hyperparameters and evaluate performance. We ended up using the same configuration for the propensity network as that of the Bellman network (Figure \ref{figure:Bellman_net}), except with the loss function changed to categorical cross-entropy, the output activation layer changed to softmax, and the initial learning rate occasionally reduced to avoid divergence. We found that the Bellman network architecture achieved higher validation log-likelihood than other custom architectures explored as well as architectures used in previous literature such as IMPALA \cite{Espeholt2018}. From our propensity network $\mu_\beta$, we calculated Equation \ref{eq:wis} on $\mathcal D_V$ to avoid bias issues with using the same dataset to train and evaluate the Q-network and propensity network \cite{Nie2022}. We found that not using such data partitioning yielded worse performance. The code repository associated with \citet{Kostrikov2020} helped write our scripts for implementing WIS. When implementing FQE (Equation \ref{eq:fqe}), we estimated the action-value function on $\mathcal D_T$ by modifying the shallow DQN training algorithm to perform policy evaluation instead of optimization, and estimated the initial state distribution using $\mathcal D_V$. Similar to SBV, the neural network for estimating the value of a DQN configuration after a specific iteration was oftentimes initialized using the trained neural network from the previous iteration to speed-up computation. 

For generating all tables and figures related to Atari, the value of a policy is measured as the expected sum of rewards from the beginning of a game to its termination, or the expectation of returns $\mathbb E_\pi[\sum_{t\geq 0}\gamma^tR_t]$ observed from applying the policy online but with $\gamma=1$. For those interested in more details related to our Atari experiments, including the SBV implementation on Pong and our ablation experiments in Section \ref{sec:ablations}, please see the relevant scripts in our repository \url{https://github.com/jzitovsky/SBV}. 

\subsection{Computational Performance on Atari}
\label{append:compute}

Atari experiments were conducted using a mix of A100 and V100 GPUs from both our university's computing cluster and GCP virtual machines. Running the deep DQN configuration for 50 iterations took around six days with a single A100 GPU. However, it is worth noting that the best Q-function was almost always obtained in under 25 iterations. SBV and FQE both had to fit a separate neural network for every Q-function being evaluated, making computation quite intensive. With four A100s and four V100s (or with six A100s), running SBV on a single dataset would usually take around a week, while running FQE on the same dataset would usually take around two weeks. SBV had a tendency to output a very large criterion when evaluating Q-functions trained using a large number of iterations. This allowed us to ``early stop" SBV after its criterion became too large and avoid evaluating all Q-functions without changing the top-5 Q-functions selected by SBV (see Section \ref{append:sbv_atari}). On the other hand, FQE did not share this tendency, making such an early stopping procedure difficult to apply to FQE. The ability to avoid evaluating all Q-functions with SBV is one reason why it required less computational resources than FQE. Another reason why SBV required less compute is that it was stopped training the Bellman network when validation loss stopped improving. No similar validation metric exists for FQE, and with no ability to cease neural network training early, FQE ended up using many more epochs than SBV. Finally, while SBV used a much larger and deeper architecture than FQE, this was offset by a larger batch size and not having to keep track of a separate target network during training. 

We can easily derive and compare computational complexity of SBV to other baselines. Let $V$ and $T$ be the number of transitions in the validation and training set respectively and $M$ be the number of Q-functions we wish to evaluate. Suppose DQN approximates the value function, SBV approximates the backups and FQE approximates the value function with a deep neural network, and each such network has $P$ parameters and is trained for $K$ epochs. The number of operations for evaluating all Q-functions is $O(TPKM)$ for FQE and SBV vs. $O(VPM)$ for EMSBE. While FQE and SBV have similar computational complexity in theory, SBV is faster to execute in practice for reasons discussed above. However, the EMSBE is much faster than SBV both in theory and in practice, with computation times negligible compared to FQE and SBV. EMSBE does not fit a regression algorithm like SBV, but rather only calculates a sample mean, and can be computed almost instantly. For noisy MDPs, the EMSBE can select arbitrarily poor policies (see Figure \ref{figure:noise}), and in these cases its computational performance is secondary. However, for low-noise MDPs, the EMSBE will only be slightly biased, and because it is faster it may be preferred for datasets where SBV would be slow to implement. Moreover, in deterministic MDPs, the validation EMSBE is unbiased and would likely perform better than SBV due to not incurring any function approximation error. 

An important diagnostic when running SBV is to compare the Bellman backup MSE to the EMSBE on the validation set $\mathcal D_V$. It is easy to see that the EMSBE upper bounds the Bellman backup MSE when the Bellman network is close to the true Bellman backup. Therefore, if validation MSE of the regression estimator is larger than validation EMSBE, this means that the Bellman network has room for improvement and it may be worth changing the architecture or other training hyperparameters. Also recall from Proposition \ref{prop2} that the EMSBE is positively biased for the true MSBE. Thus, if the SBV estimate of the MSBE is larger than the EMSBE, this would also indicate a problem.

\onecolumn
\section{The Relationship between SBV and Other Algorithms}
\label{append:relation}

For the subsections below, we assume readers have already read Section \ref{sec:sbv}.

\subsection{The Relationship between SBV and FQI}
\label{append:fqi}

For this section, we assume readers are already familiar with Fitted Q-Iteration \cite{Ernst2005}. Recall from Proposition \ref{prop3} that the expected value of the (variable-target) EMSBE (Equation \ref{eq:emsbe}) is approximately equal to the (variable-target) MSBE (Equation \ref{eq:msbe}) plus a bias term related to the variance of the targets. Therefore, the EMSBE will favor Q-functions with less-variable targets more than the MSBE, making the EMSBE a poor proxy for the MSBE for model selection. Likewise, the loss function minimized by FQI at every iteration can be considered a fixed-target EMSBE, and this will also be biased for the corresponding fixed-target MSBE with bias proportional to the target variance. The main difference, however, is that the targets do not change when evaluating different Q-functions to perform an FQI update. This means that FQI updates will not be affected by the bias in the fixed-target empirical Bellman errors, and minimizing over the fixed-target EMSBE will give a similar solution as minimizing over the fixed-target MSBE. Therefore, the fixed-target EMSBE will still be a good proxy for the fixed-target MSBE for model training. 

To fix ideas, suppose we had an estimate $\widehat Q^*$ for $Q^*$ obtained by running FQI for $K$ iterations over functional class $\mathcal F$. In other words, we iteratively update a Q-function as $Q^{(k+1)}\leftarrow\text{argmin}_{f\in\mathcal F}\mathbb E_{\mathcal D_T}\left[\left(r+\gamma\max_{a'}Q^{(k)}(s',a')-f(s,a)\right)^2\right]$ for iterations $k=0,1,...,K-1$, and then set $\widehat Q^*=Q^{(K)}$. It can be shown that:\footnote{This approximation holds exactly for transitions in $\mathcal D_V$, but for transitions in $\mathcal D_T$, the fact that $Q^{(k)}$ was trained on $\mathcal D_T$ introduces an additional bias term that depends on the complexity of $\mathcal F$, the iteration number $k$ and the sample size \cite{Antos2007fqi}. For simplicity, we assume that the sample size is sufficiently large relative to the complexity of $\mathcal F$ such that this approximation is accurate.}
\begin{align*}
\mathbb E\left[\left(r+\gamma\max_{a'}Q^{(k)}(s',a')-Q^{(k+1)}(s,a)\right)^2\right]\approx& ||\mathcal B^*Q^{(k)}-Q^{(k+1)}||^2_{P^\mu}\\
&+\mathbb E_{(S_t,A_t)\sim P^\mu}\left[\text{Var}\left(R_t+\gamma\max_{a'}Q^{(k)}(S_{t+1},a')|S_t,A_t\right)\right].
\end{align*}
It is easy to see that $\text{argmin}_{f\in\mathcal F}\mathbb E_{\mathcal D_T}\left[\left(r+\gamma\max_{a'}Q^{(k)}(s',a')-f(s,a)\right)^2\right]\approx \text{argmin}_f ||\mathcal B^*Q^{(k)}-f||^2_{P^\mu}$. This is because the targets $r+\gamma \max_{a'}Q^{(k)}(s',a')$ are not affected by the minimizing function $f$ and thus the bias term $\mathbb E_{(S_t,A_t)\sim P^\mu}\left[\text{Var}\left(R_t+\gamma\max_{a'}Q^{(k)}(S_{t+1},a')|S_t,A_t\right)\right]$ does not affect the optimization. Therefore, when applying FQI updates, the empirical fixed-target Bellman errors will be a good substitute for the true fixed-target Bellman errors.

To understand why SBV is still needed to tune hyperparameters of FQI, suppose we now wished to evaluate the quality of $\widehat Q^*$ in estimating $Q^*$. This would make sense, for example, if we wished to compare $\widehat Q^*$ to other proposed estimates of $Q^*$. We could use SBV to evaluate $\widehat Q^*$ by estimating its MSBE $||\widehat Q^*-\mathcal B^*\widehat Q^*||^2_{P^\mu}$, as it can be shown that the MSBE upper bounds estimation error $||\widehat Q^*-Q^*||^2_{P^\mu}$ (Proposition \ref{prop1}). However, instead of SBV, suppose that we were to assess $\widehat Q^*$ using the empirical loss functions minimized throughout FQI but applied to the validation set, $\sum_{k=0}^{K-1}\mathbb E_{\mathcal D_V}\left[\left(r+\gamma\max_{a'}Q^{(k)}(s',a')-Q^{(k+1)}(s,a)\right)^2\right]$. It can be shown that $\sum_k ||\mathcal B^*Q^{(k)}-Q^{(k+1)}||^2_{P^\mu}$ upper bounds estimation error of FQI in a manner similar to the MSBE \cite{Munos2005}, and to the untrained eye, $\sum_k\mathbb E_{\mathcal D_V}\left[\left(r+\gamma\max_{a'}Q^{(k)}(s',a')-Q^{(k+1)}(s,a)\right)^2\right]$ may seem like a reasonable proxy for $\sum_k ||\mathcal B^*Q^{(k)}-Q^{(k+1)}||^2_{P^\mu}$. 

The problem here is that when using $\sum_k\mathbb E_{\mathcal D_V}\left[\left(r+\gamma\max_{a'}Q^{(k)}(s',a')-Q^{(k+1)}(s,a)\right)^2\right]$ to evaluate $\widehat Q^*$, this term will depend not only on $\sum_k ||\mathcal B^*Q^{(k)}-Q^{(k+1)}||^2_{P^\mu}$ but also on the variance of the targets $\sum_k \mathbb E_{(S_t,A_t)\sim P^\mu}\left[\text{Var}\left(R_t+\gamma\max_{a'}Q^{(k)}(S_{t+1},a')|S_t,A_t\right)\right]$. Even if $\sum_k ||\mathcal B^*Q^{(k)}-Q^{(k+1)}||^2_{P^\mu}=0$ (indicating that $\widehat Q^*\approx Q^*$), our proposed criterion may still be very large because the targets have high variance. This also means that when evaluating two candidates $Q_1$ and $Q_2$ estimated by FQI, such a procedure may choose $Q_1$ over $Q_2$ even though $Q_2$ yields a smaller value of $\sum_k ||\mathcal B^*Q^{(k)}-Q^{(k+1)}||^2_{P^\mu}$, because the targets observed throughout training for $Q_1$ have smaller variance.

Therefore, while the errors used by FQI are effective for model training, they are less effective for model evaluation and selection, necessitating the use of SBV. This also highlights the key difference between fixed-target Bellman errors estimated by FQI and variable-target Bellman errors estimated by SBV: empirical fixed-target Bellman errors are good proxies for the true fixed-target Bellman errors with FQI because the targets are treated as fixed in the relevant optimization problems. However, the empirical variable-target Bellman errors used by the EMSBE (Equation \ref{eq:emsbe}) are not good proxies for the true variable-target Bellman errors, because the targets are are not fixed when evaluating different Q-functions, but rather will change depending on the Q-function being evaluated. 

Now let's suppose we were to run a $K+1$th iteration of FQI and assess $\widehat Q^*$ using $\mathbb E_{\mathcal D_V}[(Q^{(K)}(s,a)-Q^{(K+1)}(s,a))^2]$. This is equivalent to our SBV estimate $\mathbb E_{\mathcal D_V}[(\widehat Q^*(s,a)-\widehat{\mathcal B}^*\widehat Q^*(s,a))^2]$ when $\widehat{\mathcal B}^*\widehat Q^*(s,a)$ is estimated by minimizing the Bellman backup MSE (Equation \ref{eq:mse_backup}) over the same functional class $\mathcal F$ used for FQI. The problem with using this quantity to assess $\widehat Q^*$ is that it assumes $Q^{(K+1)}\approx \mathcal B^*\widehat Q^*(s,a)$. SBV makes no such assumption. Instead, $\mathcal B^*\widehat Q^*(s,a)$ is estimated by minimizing Bellman backup MSE over functional class $\mathcal G$, where $\mathcal G\neq \mathcal F$ in general and is tuned to minimize validation error $\mathbb E_{\mathcal D_V}[(r+\gamma\max_{a'}\widehat Q^*(s',a')-\widehat{\mathcal B}^*\widehat Q^*(s,a))^2]$, thus ensuring that $\widehat{\mathcal B}^*\widehat Q^*\approx \mathcal B^*\widehat Q^*$. It is also worth noting that SBV can be used to evaluate estimates of $Q^*$ generated by training algorithms other than FQI, such as Least Square Policy Iteration \citep{Lagoudakis2003}. 

See also Appendix \ref{append:Bellman_q}.

\subsection{The Relationship between SBV and BErMin}
\label{append:bermin}

For this section, we assume readers are already familiar with BErMin \cite{Farahmand2010}. SBV first splits the observed data $\mathcal D$ into a training set $\mathcal D_T$ and a validation set $\mathcal D_V$. We then obtain a set of $M$ estimates of $Q^*$, $\mathcal Q=\{Q_1,...,Q_M\}$ by running $M$ different offline RL algorithms on $\mathcal D_T$, and estimate their Bellman backups $\mathcal B^*Q_1,...,\mathcal B^*Q_M$ as $\widehat{\mathcal B}^*Q_1,...,\widehat{\mathcal B}^*Q_M$ by running $M$ regression algorithms on $\mathcal D_T$. Finally, we perform offline model selection using assessment criterion $\text{SBV}(Q_m)=||Q_m-\widehat{\mathcal B}^*Q_m||_{\mathcal D_V}^2$, which aims to be an accurate point estimator for $\text{MSBE}(Q_m)=||Q_m-\mathcal B^*Q_m||^2_{P^\mu}$. While our theoretical results are restricted to infinite-data settings, we demonstrated strong empirical performance on a diverse set of challenging problems.

In contrast, BErMin splits the observed data $\mathcal D$ into two independent training sets $\mathcal D_{T_1}$ and $\mathcal D_{T_2}$ and a validation set $\mathcal D_{V}$. As before, a set of $M$ estimates of $Q^*$, $\mathcal Q=\{Q_1,...,Q_M\}$ are obtained by running $M$ different offline RL algorithms on $\mathcal D_{T_1}$. Now, however, their Bellman backups $\mathcal B^*Q_1,...,\mathcal B^*Q_M$ are estimated as $\widehat{\mathcal B}^*Q_1,...,\widehat{\mathcal B}^*Q_M$ by running $M$ regression algorithms on a separate training set $\mathcal D_{T_2}$. In other words, BErMin assumes the data used to generate the Q-functions is independent of that used to estimate their Bellman backups. Moreover, we perform offline model selection using assessment criterion $\text{BErMin}(Q_m)=\frac{1}{(1-a)^2}||Q_m-\widehat{\mathcal B}^*Q_m||_{\mathcal D_{V}}^2+\tilde b_m$, where $||\mathcal B^*Q_m-\widehat{\mathcal B}^*Q_m||^2_{P^\mu}\leq \tilde b_m$ with probability $1-\delta/2M$ and $a\in(0,1)$ is a tuning parameter determining the relative weight of $\tilde b_m$ relative to $||Q_m-\widehat{\mathcal B}^*Q_m||_{\mathcal D_{V}}$. $\text{BErMin}(Q_m)$ aims to be an accurate upper bound for $\text{MSBE}(Q_m)=||Q_m-\mathcal B^*Q_m||^2_{P^\mu}$. BErMin also extend this assessment criterion to deal with the case where $M$ is very large (potentially infinite) and where the user can give greater prior weight to certain Q-functions over others (e.g. based on complexity). For simplicity our description is restricted to the case where $M$ is finite and small and where no Q-functions are favored to others a priori. With $Q_{\widehat m}$ the Q-function selected by BErMin, it can be shown that $\text{MSBE}(Q_{\widehat m})\leq 4(1+a)(1-a)^{-2}\min_{1\leq m \leq M}\left[2\text{MSBE}(Q_m)+3\tilde b_k\right]+O(n^{-1}\log(\delta^{-1}))$ with probability $1-\delta$. Therefore, assuming $\max_{1\leq m \leq M}\tilde b_m\to 0$ at a sufficiently fast rate with dataset size (i.e. our upper bound becomes tighter and our regression algorithm becomes more accurate with sample size), BErMin is theoretically guaranteed to select a Q-function with MSBE close to the Q-function from $\mathcal Q$ with smallest MSBE, up to a constant (see also Theorem 3 of \citet{Farahmand2010}). 

BErMin thus has much stronger finite-sample theoretical guarantees than what was derived for our method. However, to our knowledge, BErMin has never been successfully applied to deep RL benchmarks like Atari, despite being published in 2011. We believe there are a few reasons for this. First, BErMin requires specification of tight probabilistic bounds on the excess risk of the regression algorithm, or on $||\mathcal B^*Q_m-\widehat{\mathcal B}^*Q_m||^2_{P^\mu}$. While an upper bound on the generalization MSE $\mathbb E_{(s,a)\sim P^\mu,s'\sim T(\cdot|s,a)}\left[\left(R(s,a,s')+\gamma\max_{a'}Q_m(s',a')-Q_m(s,a)\right)^2\right]$ can be easily obtained using a held-out validation set, bounding the excess risk $||\mathcal B^*Q_m-\widehat{\mathcal B}^*Q_m||^2_{P^\mu}$ is more difficult as it involves the unknown Bellman backup $\mathcal B^*Q_m$. For example, even if we wanted to apply BErMin to Atari, it is not clear how we would derive excess risk bounds for our trained neural network given in Figure \ref{figure:Bellman_net} that are tight enough to be useful without making overly restrictive assumptions. To the best of our knowledge, finite-sample excess risk bounds for neural networks that have been derived in previous work make assumptions on the dependence between observations, class of architectures considered and function class that the true Bellman backup belongs to that would make such bounds inapplicable to our settings. Most such bounds are also likely far too loose to be useful for our observed sample size. 

Moreover, BerMin requires further partitioning the training set $\mathcal D_T$ into separate datasets $\mathcal D_{T_1}$ and $\mathcal D_{T_2}$ to estimate the Q-functions and Bellman backups, respectively, which essentially halves the amount of data that can be used for both. In contrast, SBV uses the same data $\mathcal D_T$ to estimate the Q-functions and backups, and this leads to both estimators performing better, as they can use more data (see Section \ref{sec:ablations}). Finally, applying SBV (or BErMin) to deep RL settings requires building and tuning a regression neural network to minimize validation MSE, and this was non-trivial for Atari. For example, the typical DQN \cite{Mnih2015} and Impala-CNN \cite{Espeholt2018} architectures were ineffective at achieving small MSE and we needed components common in deep learning but not in deep RL, such as batch normalization, leaky ReLU activations and squeeze-and-excitation units (see Appendix \ref{append:sbv_atari} for details).

\subsection{The Relationship between SBV and BVFT}
\label{append:bvft}

For this section, we assume readers are already familiar with BVFT \cite{Xie2021}. For each candidate $Q_m\in\mathcal Q$, SBV uses assessment criterion $\text{SBV}(Q_m)=||Q_m-\widehat{\mathcal B}^*Q_m||_{\mathcal D_V}^2$ where $\widehat{\mathcal B}^*Q_m$ uses a regression algorithm trained on $\mathcal D_T$ to solve $\text{argmin}_{f}\sum_{(s,a,r,s')\in\mathcal D_T}(r+\gamma\max_{a'}Q_m(s',a')-f(s,a))^2$ and tuned to minimize validation error $\sum_{(s,a,r,s')\in\mathcal D_V}(r+\gamma\max_{a'}Q_m(s',a')-\widehat{\mathcal B}^*Q_m(s,a))^2$. The goal is for $\widehat{\mathcal B}^*Q_m$ to be as accurate an estimator for $\mathcal B^*Q_m$ as possible, so that $||Q_m-\widehat{\mathcal B}^*Q_m||_{\mathcal D_V}^2$ will be an accurate estimator of the true MSBE $||Q_m-\mathcal B^*Q_m||^2_{P^\mu}$.  In contrast, BVFT uses assessment criterion $\text{BVFT}(Q_m)=\max_{Q\in\mathcal Q}||Q_m-\widehat{\mathcal B}^*_{\mathcal G(\bar Q_m^{(\epsilon)},\bar Q^{(\epsilon)})}Q_m||^2_{\mathcal D_V}$ where $\bar Q_m^{(\epsilon)}$ discretizes $Q_m$ with resolution $\epsilon$, $\widehat{\mathcal B}^*_{\mathcal G(\bar Q_m^{(\epsilon)},\bar Q^{(\epsilon)})}Q_m$ is a weighted $L_2$ projection of $\mathcal B^*Q_m$ onto functional class $\mathcal G(\bar Q_m^{(\epsilon)},\bar Q^{(\epsilon)})$ with weights given by the empirical distribution of $\mathcal D_V$, and $\mathcal G(\bar Q_m^{(\epsilon)},\bar Q^{(\epsilon)})$ is the smallest piecewise-constant functional class containing both $\bar Q_m^{(\epsilon)}$ and $\bar Q^{(\epsilon)}$. For details, please see \citet{Xie2021}.

Note that the projected Bellman backups in BVFT are restricted to projections onto piecewise-constant functional classes that depend on the resolution hyperparameter $\epsilon$ and the candidate set $\mathcal Q$. In contrast, while SBV can use piecewise-constant projections, it can also use neural network projections, elastic-net penalization, tree-based ensembles, or any other training scheme to estimate the Bellman backup, depending on what maximizes accuracy. Moroever, while SBV uses a single projection of the Bellman error (namely the most accurate that can be found) to assess $Q_m$, BVFT takes the maximum of multiple projections, which could induce overestimation issues when the number of projections is large. These characteristics make SBV a more direct and principled estimator of the MSBE compared to BVFT. 

However, this does not mean that SBV is better than BVFT per se. Indeed, \citet{Xie2021} showed that a weighted $L_2$ projection of the Bellman operator onto a piecewise-constant functional class with weights based on $P^\mu$ will still be a $\gamma$-contraction in $L_\infty$ norm and will still have a unique fixed point at $Q^*$, even though it may be very different from the real Bellman operator, so long as this piecewise-constant functional class contains $Q^*$ and $P^\mu(s,a)>0$ for all $(s,a)\in\mathcal S\times\mathcal A$. Therefore, it can be seen that if $Q^*\in\mathcal Q$ and $Q^*$ does not change following discretization (because it is piecewise-constant), asymptotically $\text{BVFT}(Q_m)$ will give a value of zero if $Q_m=Q^*$ and will give a value greater than zero if $Q_m\neq Q^*$. In such cases, BVFT will correctly recover the true optimal value function regardless of whether or not BVFT outputs accurate estimates of the MSBE for all candidates. Moreover, provided the discretization resolution $\epsilon$ declines with sample size at the right rate, BVFT can still be proven to perform well even if $Q^*$ is not piecewise-constant. Finally, piecewise-constant projections are stable and allow for the asymptotic guarantees of BVFT to easily extend to finite-sample settings. As a result, \citet{Xie2021} were able to derive state-of-the-art finite-sample guarantees for BVFT. 

While the primary goal of SBV is to estimate the MSBE as accurately as possible, it is more useful to think of BVFT as outputting a related quantity that is different from the MSBE but still has its own theoretical guarantees.

\subsection{The Relationship between SBV and ModBE}
\label{append:modbe}

For this section, we assume readers are already familiar with ModBE \cite{Lee2022}. To best explain the differences between SBV and ModBE, we consider a finite-horizon non-discounted MDP where the only hyperparameters of interest are the model class (e.g. not the optimizer or any penalty terms) and we wish to select between two nested model classes $\mathcal F_1\subset\mathcal F_2$ for running FQI. Let $f_j^k$ be the $k$th time-step Q-function obtained by running finite-horizon FQI using model class $\mathcal F_j$ \cite{Lee2022}. SBV compares:
\begin{equation}
\sum_k \mathbb E_{\mathcal D_V}\left[\left(f^k_1(s,a)-\widehat{\mathcal B}^*f^{k+1}_1(s,a)\right)^2\right] \text{ vs. } \sum_k E_{\mathcal D_V}\left[\left(f^k_2(s,a)-\widehat{\mathcal B}^*f^{k+1}_2(s,a)\right)^2\right],\label{eq:sbv_compare1}
\end{equation}
where $\widehat{\mathcal B}^*f^{k+1}_1(s,a)=\text{argmin}_{g\in\mathcal G_1}\sum_{(s,a,r,s')\in\mathcal D_T}(g(s,a)-r-\max_{a'}f_1^{k+1}(s',a'))^2$ and $\widehat{\mathcal B}^*f^{k+1}_2(s,a)=\text{argmin}_{g\in\mathcal G_2}\sum_{(s,a,r,s')\in\mathcal D_T}(g(s,a)-r-\max_{a'}f_2^{k+1}(s',a'))^2$. $\widehat{\mathcal B}^*f^{k+1}_1(s,a)$ estimates $\mathcal B^* f^{k+1}_1(s,a)$ using functional class $\mathcal G_1$ tuned to minimize error on validation set $\mathcal D_V$, i.e. to maximize accuracy of $\mathcal B^* f^{k+1}_1(s,a)$, and similarly $\mathcal G_2$ is tuned to maximize accuracy of $\mathcal B^* f^{k+1}_2(s,a)$. The regression model classes $\mathcal G_1$ and $\mathcal G_2$ can be completely different than the FQI model classes $\mathcal F_1$ and $\mathcal F_2$. The ultimate objective of our goal is to compare $\sum_k ||f^k_1-\mathcal B^*f^{k+1}_1||_{P^\mu}^2 \text{ vs. } \sum_k ||f^k_2-\mathcal B^*f^{k+1}_2||_{P^\mu}^2$.  

In contrast, ModBE compares:
\begin{equation}
\sum_k\mathbb E_{\mathcal D_V}\left[\left(f^k_1(s,a)-r-\max_{a'}f^{k+1}_1(s',a')\right)^2\right] \text{ vs. } \sum_k\mathbb E_{\mathcal D_V}\left[\left(g^k_2(s,a)-r-\max_{a'}f^{k+1}_1(s',a')\right)^2\right]+P,\label{eq:modbe_compare1}
\end{equation}
where $g^k_2=\text{argmin}_{g\in \mathcal F_2}\sum_{(s,a,r,s')\in\mathcal D_T}(g(s,a)-r-\max_{a'}f_1^{k+1}(s',a'))^2$, $P$ is a penalty constant that depends on $|\mathcal F_1|,|\mathcal F_2|,|\mathcal D|$ and the horizon length, among other terms. Unlike Equation \ref{eq:sbv_compare1}, Equation \ref{eq:modbe_compare1} is not comparing the MSBE between $f_1^k$ and $f_2^k$ because $f_2^k$ is not even calculated. Instead, it is measuring the ability of $\mathcal F_2$ to model with greater accuracy the same intermediate targets as those obtained by running FQI with model class $\mathcal F_1$. These are fixed-target empirical Bellman errors as we defined in Section \ref{sec:bellman_error}.  If the right-hand-sum sum of Equation \ref{eq:modbe_compare1} is smaller than the left-hand sum, this suggests $\mathcal F_1$ is not \textit{complete} \citep{Munos2008, Chen2019} and we should choose the more complex model class $\mathcal F_2$. Otherwise, we should stick to $\mathcal F_1$, since it yields smaller estimation variance. 

Let $\widehat{\mathcal B}^*_{\mathcal F} f_1^{k+1}$ approximate $\mathcal B^* f_1^{k+1}$ using an $L_2$ weighted projection onto function class $\mathcal F$ with weights given by the empirical distribution of $\mathcal D_T$. We can also compare ModBE and SBV by noting that $f^k_1=\widehat{\mathcal B}^*_{\mathcal F_1} f_1^{k+1}$ and $f^k_2=\widehat{\mathcal B}^*_{\mathcal F_2} f_2^{k+1}$. The comparison for ModBE is then:
\begin{equation}
\begin{split}
\sum_k\mathbb E_{\mathcal D_V}\left[\left(\widehat{\mathcal B}^*_{\mathcal F_1} f_1^{k+1}(s,a)-r-\max_{a'}f^{k+1}_1(s',a')\right)^2\right] \text{ vs. }  \\ \sum_k\mathbb E_{\mathcal D_V}\left[\left(\widehat{\mathcal B}^*_{\mathcal F_2} f_1^{k+1}(s,a)-r-\max_{a'}f^{k+1}_1(s',a')\right)^2\right]+P.\label{eq:modbe_compare2}
\end{split}
\end{equation}
In contrast, SBV compares:
\begin{equation}
\sum_k \mathbb E_{\mathcal D_V}\sum_k \mathbb E_{\mathcal D_V}\left[\left(f^k_1(s,a)-\widehat{\mathcal B}_{\mathcal G_1}^*f^{k+1}_1(s,a)\right)^2\right] \text{ vs. } \sum_k E_{\mathcal D_V}\left[\left(f^k_2(s,a)-\widehat{\mathcal B}^*_{\mathcal G_2}f^{k+1}_2(s,a)\right)^2\right].\label{eq:sbv_compare2}
\end{equation}
ModBE compares estimation accuracy for $\mathcal B^*f_1^{k+1}$ using model classes $\mathcal F_1$ and $\mathcal F_2$ (averaged over $k$), and chooses $\mathcal F_2$ if accuracy is better. Meanwhile for SBV, we can think of $\mathcal G_1$ as being derived by conducting similar comparisons in equation \ref{eq:modbe_compare2} but without the penalty term and over many possible functional classes $\mathcal G_{1,1}, \mathcal G_{1,2}, ..., \mathcal G_{1,N}$, and choosing that which is smallest. In other words, $\mathcal G_1=\min_{1\leq n \leq N}L(\mathcal G_{1,n},f_1)$ where $L(\mathcal G_{1,n},f_1)=\sum_k\mathbb E_{\mathcal D_V}\left[\left(\widehat{\mathcal B}^*_{\mathcal G_{1,n}} f_1^{k+1}(s,a)-r-\max_{a'}f^{k+1}_1(s',a')\right)^2\right]$. Likewise, $\mathcal G_2$ would be derived as $\mathcal G_2=\min_{1\leq n \leq N}L(\mathcal G_{2,n},f_2)$ with potentially different candidates $\mathcal G_{2,1}, \mathcal G_{2,2}, ..., \mathcal G_{2,N}$. In this case, SBV performs similar comparisons as ModBE. However, unlike in ModBE, the model classes $\mathcal G_{1,1}, \mathcal G_{1,2}, ..., \mathcal G_{1,N}$ compared can be different from $\mathcal F_1$ and $\mathcal F_2$. Moreover, SBV doesn't ultimately choose the function class for FQI based on the Bellman backup accuracy comparisons $L(\mathcal G_{1,n},f_1)$ and $L(\mathcal G_{2,n},f_2)$. Rather, finding the function classes $\mathcal G_1$ and $\mathcal G_2$ that maximizes Bellman backup estimation accuracy is a proxy for estimating the true MSBE accurately, and we ultimately compare estimated MSBE (Equation \ref{eq:sbv_compare2}) between $f_1$ and $f_2$. 


\twocolumn
\section{Additional Notes}
\label{append:additional_notes}

\subsection{Bellman Errors and Continuous Control Tasks}
\label{append:cts}

Continuous control problems are typically solved by deep policy gradient algorithms \cite{Schulman2015} or deep actor-critic algorithms \cite{Lillicrap2016}. Bellman errors measure the error of an estimated action-value function, and as policy gradient algorithms do not estimate action-value functions, there are no Bellman errors that can be computed. In contrast, deep actor-critic algorithms require two neural network models: a Q-network (or critic) $Q_\theta$ and a policy model (or actor) $\pi_\phi$. The $\gamma$-contraction properties of the Bellman operator and policy iteration theory \cite{Busoniu2010} suggests that we would need to ensure that (1) $Q_\theta\approx \mathcal B^{\pi_\phi}Q_\theta$ and (2) $Q_\theta(s,\pi_\phi(s))\approx \max_a Q_\theta(s,a)$, where $\mathcal B^\pi Q(s,a)$ is equal to: 
\begin{equation*}
\mathbb E[R_t+\gamma \mathbb E_{a'\sim \pi(\cdot|S_{t+1})}Q(S_{t+1},a')|S_t=s,A_t=a].
\end{equation*}
Indeed, even if the first condition is satisfied completely, this would only relate to the critic $Q_\theta$ accurately estimating $Q^{\pi_\phi}$, and the actor $\pi_\phi$ could still be arbitrarily sub-optimal. While SBV could help ensure the first condition, extending SBV to deal with both conditions is less trivial and left for future work. We should also note that there are a few deep off-policy RL algorithms for continuous control settings that use stochastic optimization to avoid specification of an actor network \cite{Kalashnikov2018, Khan2021}, and SBV would be directly applicable to such algorithms.

\subsection{Using Bellman Networks to Improve DQN}
\label{append:Bellman_q}

In the Atari setting, we found that by using architectures for the Q-network that performed as well as Bellman networks, and by greatly reducing target update frequency, we were able to achieve Q-functions with low Bellman error without having to explore a large number of different RL training configurations. For example, the Q-network architecture utilized for our deep DQN configuration is just an architecture that performed well as a Bellman network according to validation MSE on the four analyzed Atari environments (see Appendix \ref{append:dqn_deep} for more details). We also found a similar relationship for other environments: In the mHealth environment for example, quadratic functions usually performed best both for estimating $Q^*$ and Bellman backups of the candidate Q-functions. Our results indicate that SBV should perform well on remaining Atari environments as well, provided the Bellman network validation error is further reduced and the Q-network architecture is further improved using a similar strategy. 

We further compare SBV and DQN to provide some insight into why this phenomenon might be occurring. Recall that DQN performs updates $Q^{(k+1)}\leftarrow \mbox{argmin}_{f\in\mathcal F} \mathbb E_{\mathcal D_T}\left[(r+\gamma\max_{a'}Q^{(k)}(s',a')-f(s,a))^2\right]$, where $\mathcal F$ is a functional class corresponding to a pre-specified neural network architecture and the minimization is performed using a small number of gradient descent steps determined by the target update frequency. Intuitively, a good choice of $\mathcal F$ is one that can accurately estimate $\mathbb E[R_t+\gamma\max_{a'}Q(S_{t+1},a')|S_t,A_t]$ for all $Q=Q^{(1)},Q^{(2)},...\in\mathcal F$. This is related to an important condition for FQI theoretical guarantees known as \textit{completeness} \cite{Munos2008, Chen2019}. In contrast, SBV uses estimators $\widehat{\mathcal B}^*Q_m=\mbox{argmin}_{f\in\mathcal F} \mathbb E_{\mathcal D_T}\left[(r+\gamma\max_{a'}Q_m(s',a')-f(s,a))^2\right]$ for $Q_m\in\mathcal Q$ where minimization is over a large number of gradient descent steps determined by prediction error on $\mathcal D_V$. Intuitively, a good choice of $\mathcal F$ is one that can accurately estimate $\mathbb E[R_t+\gamma\max_{a'}Q(S_{t+1},a')|S_t,A_t]$ for all $Q=Q_1,...,Q_M\in\mathcal Q$. While these problems are not identical, they are similar: In both cases, we are essentially trying to solve regression problems on the same dataset using the same covariates, with targets given by the reward plus a complex neural-network function of the next state. We would thus expect some relationship between the optimal function approximator or training algorithm for each task. 

\newpage
\section{Additional Tables and Figures}
\label{append:additional_figures}

\setcounter{figure}{0}
\setcounter{table}{0}

\begin{figure}[ht]
\centering
\includegraphics[width=\linewidth]{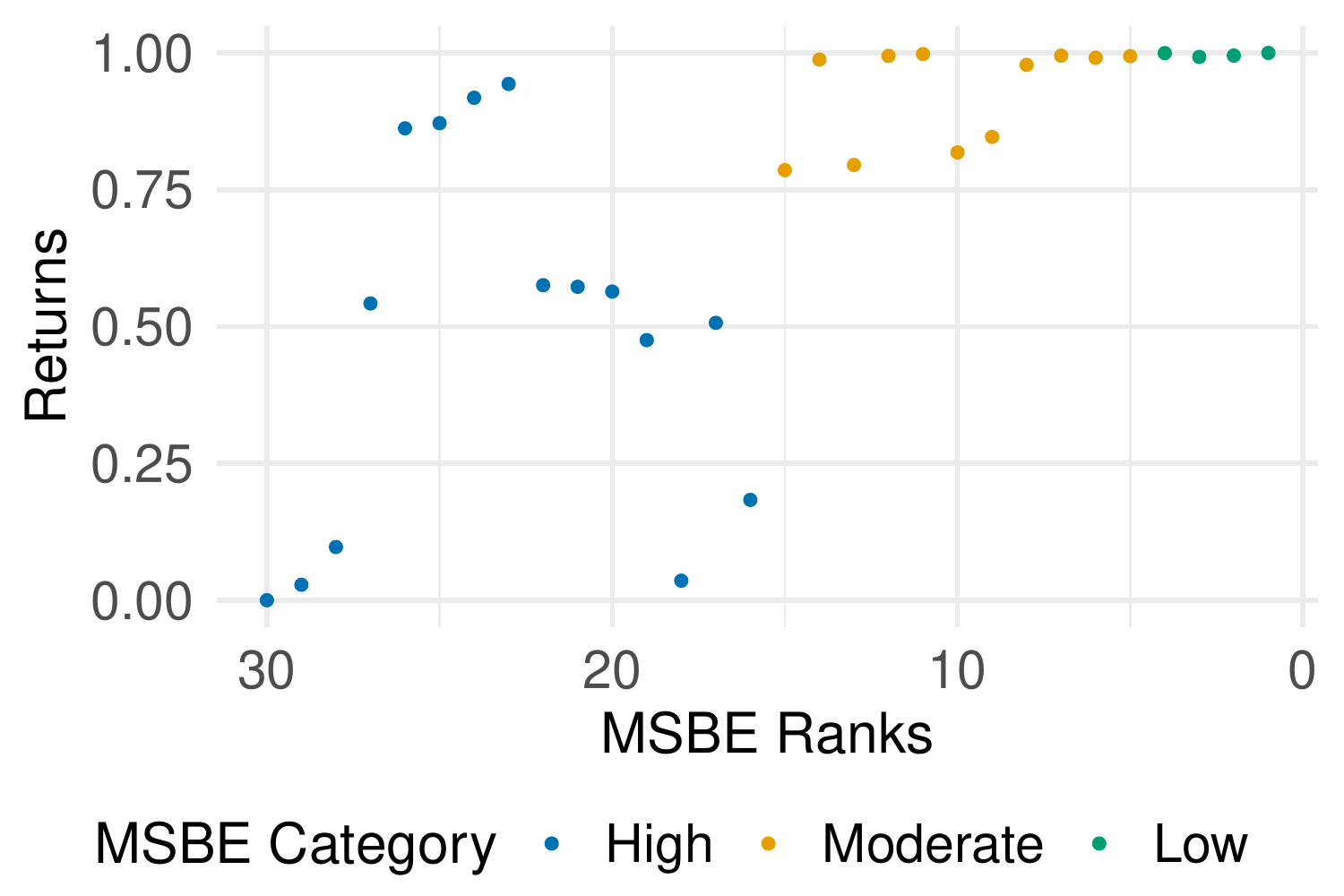} 
\caption{Returns vs. MSBE Ranks. An MSBE rank of $30$ means the estimate has the highest observed MSBE, while a rank of zero means the estimate has the lowest MSBE. MSBE values are grouped similarly to Figure \ref{figure:msbe}, though we use a more color blind-friendly scheme here. Note that even though the relationship between MSBE and returns is not perfectly monotonic, the top policies according to the MSBE (i.e. those categorized as having ``low" MSBE) still perform very well.}\label{figure:msbe_extra}
\end{figure}

\begin{figure}[!ht]
\centering
\includegraphics[width=\linewidth]{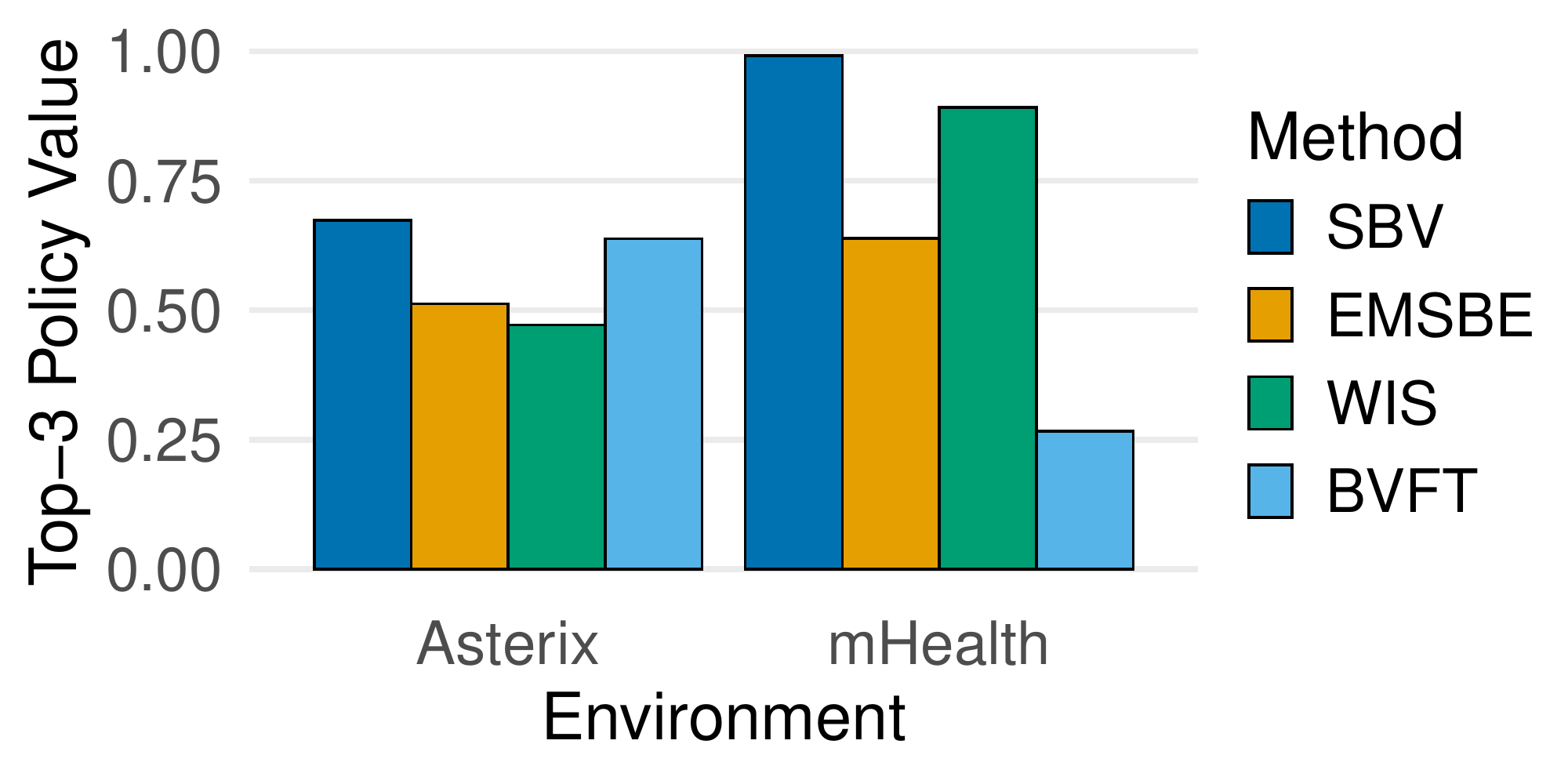} 
\caption{BVFT Performance on Two Environments. We tuned the discretization resolution of BVFT as in \citet{Zhang2021ps}. We report the standardized top-3 policy value averaged over datasets similar to Figure \ref{figure:bike}. While BVFT performs comparably to SBV on Asterix datasets, it performs worse than random chance on mHealth datasets (see Sections \ref{sec:Bike} and \ref{sec:atari} for more details on these datasets and metrics). We admit that we only became aware of a practical implementation of BVFT \cite{Zhang2021ps} very recently. Hence, we did not have time to evaluate BVFT as thoroughly as other OMS algorithms.
}\label{figure:bvft}
\end{figure}

\begin{figure}[ht]
\centering
\includegraphics[width=\linewidth]{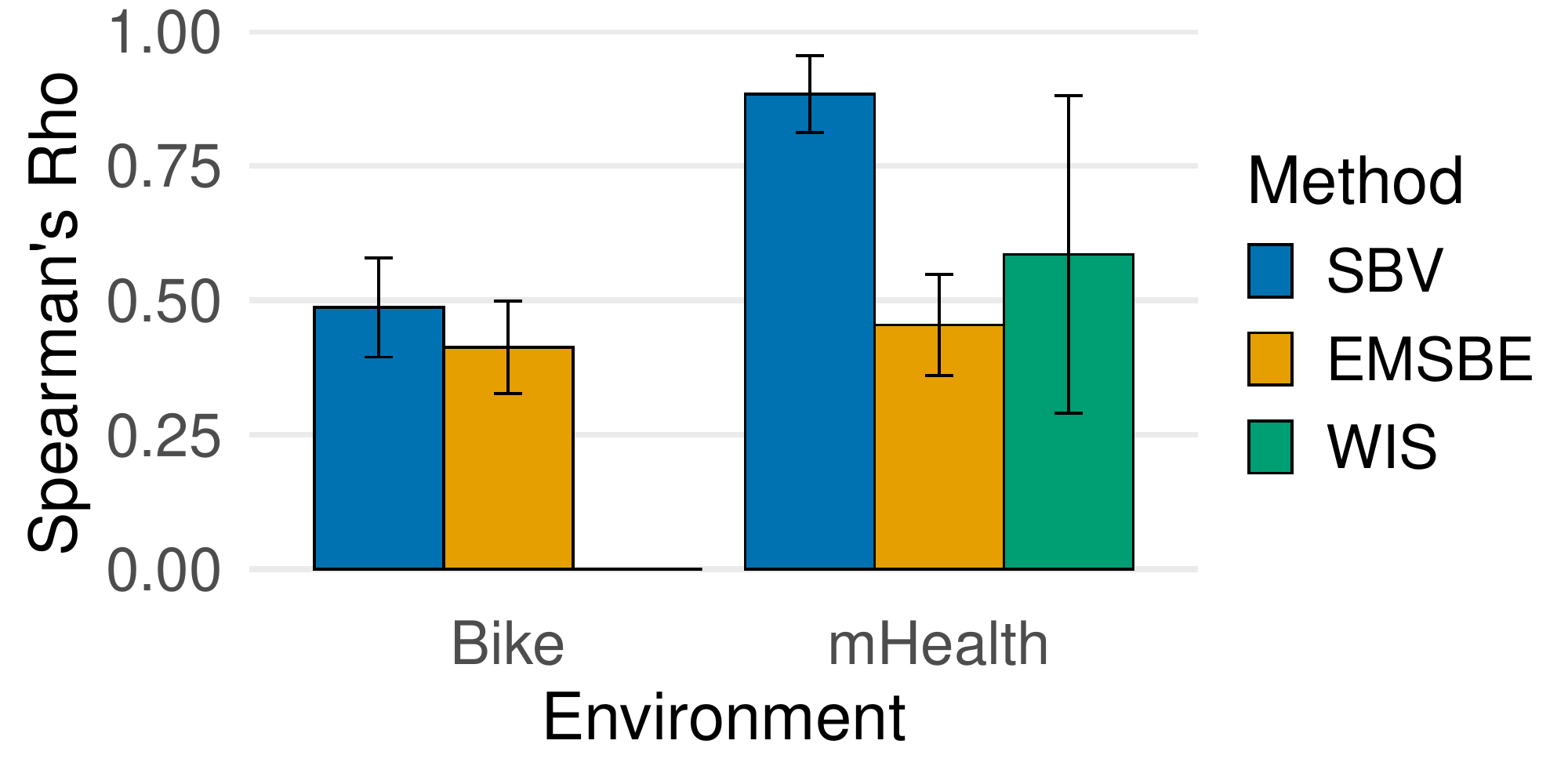} 
\caption{Absolute Spearman Correlation with Policy Value. Mean (Std) of this metric across datasets is reported for each method. While SBV achieves the highest Spearman correlation for both environments, it is still rather low for the Bicycle datasets. Despite this, SBV is still effective at selecting high-quality policies (Figures \ref{figure:bike} and \ref{figure:bike_oracle}).}\label{figure:spearman}
\end{figure}

\begin{figure}[!ht]
\centering
\includegraphics[width=\linewidth]{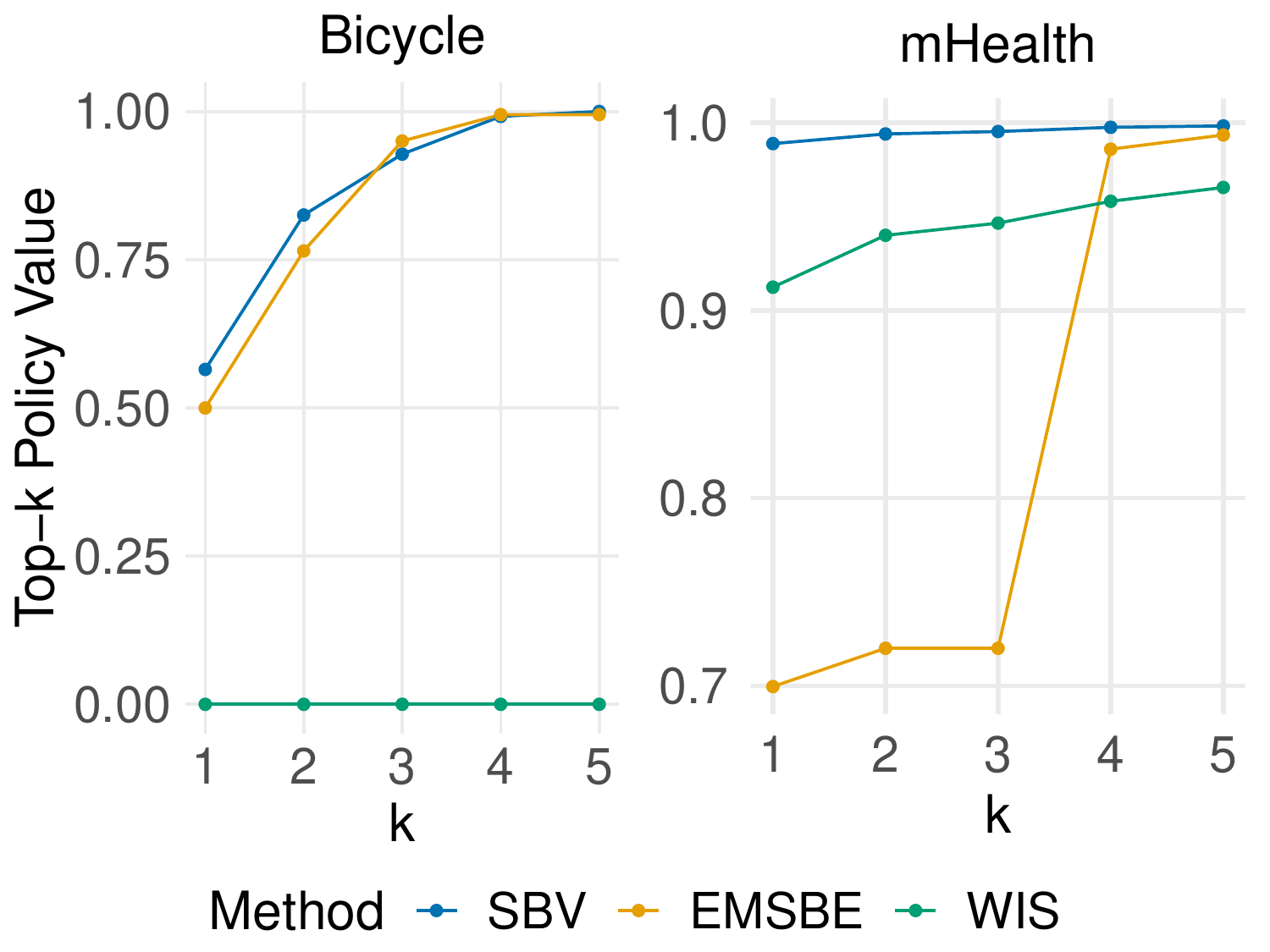} 
\caption{Max Top-$k$ Policy Values for Bicycle and mHealth Environments. Each data point represents the maximum policy value observed among the top-$k$ policies selected for a particular method, averaged over 10 datasets. The max top-$k$ policy value is a useful metric for situations where an online environment is available but can only evaluate up to $k$ policies for safety or cost reasons. While SBV performs well across environments, WIS performs poorly on Bicycle for all values of $k$ while the EMSBE performs poorly on mHealth for small values of $k$ (i.e. when only limited online interactions are available).}\label{figure:bike_oracle}
\end{figure}

\begin{table}[b]
\centering
\caption{Performance of SBV on the Same Four Datasets Evaluated by FQE. Displayed are the top-5 policy returns analogous to Table \ref{table:atari}.}\label{table:atari_fqe}
\begin{tabular}{lllll}
 \toprule
Method& Pong & Breakout & Asterix & Seaquest \\
 \midrule
SBV & 98\% & 69\% & 73\% & 70\% \\
FQE  & 98\% & 41\% & 53\% & 34\% \\
 \bottomrule
\end{tabular}
\end{table}

\begin{figure*}[t]
\centering
\vspace{-15cm}
\includegraphics[width=\textwidth]{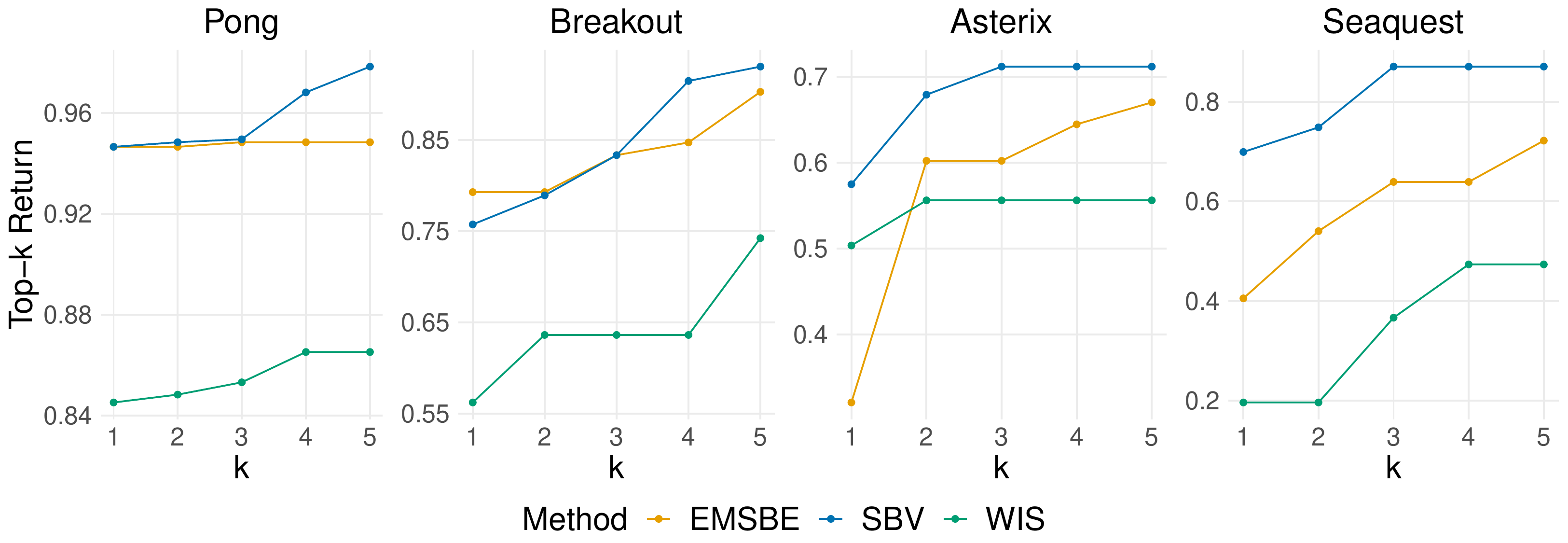} 
\caption{Max Top-$k$ Policy Values for Atari Environments. Each data point represents the maximum expected return observed among the top-$k$ policies selected for a particular method, averaged over 3 datasets. The max top-$k$ policy value is a useful metric for situations where an online environment is available but can only evaluate up to $k$ policies for safety or cost reasons. On average, SBV significantly outperforms competitors across environments and values of $k$.}\label{figure:atari_oracle}
\end{figure*}

\onecolumn

\begin{figure*}
\centering
\includegraphics[width=\textwidth]{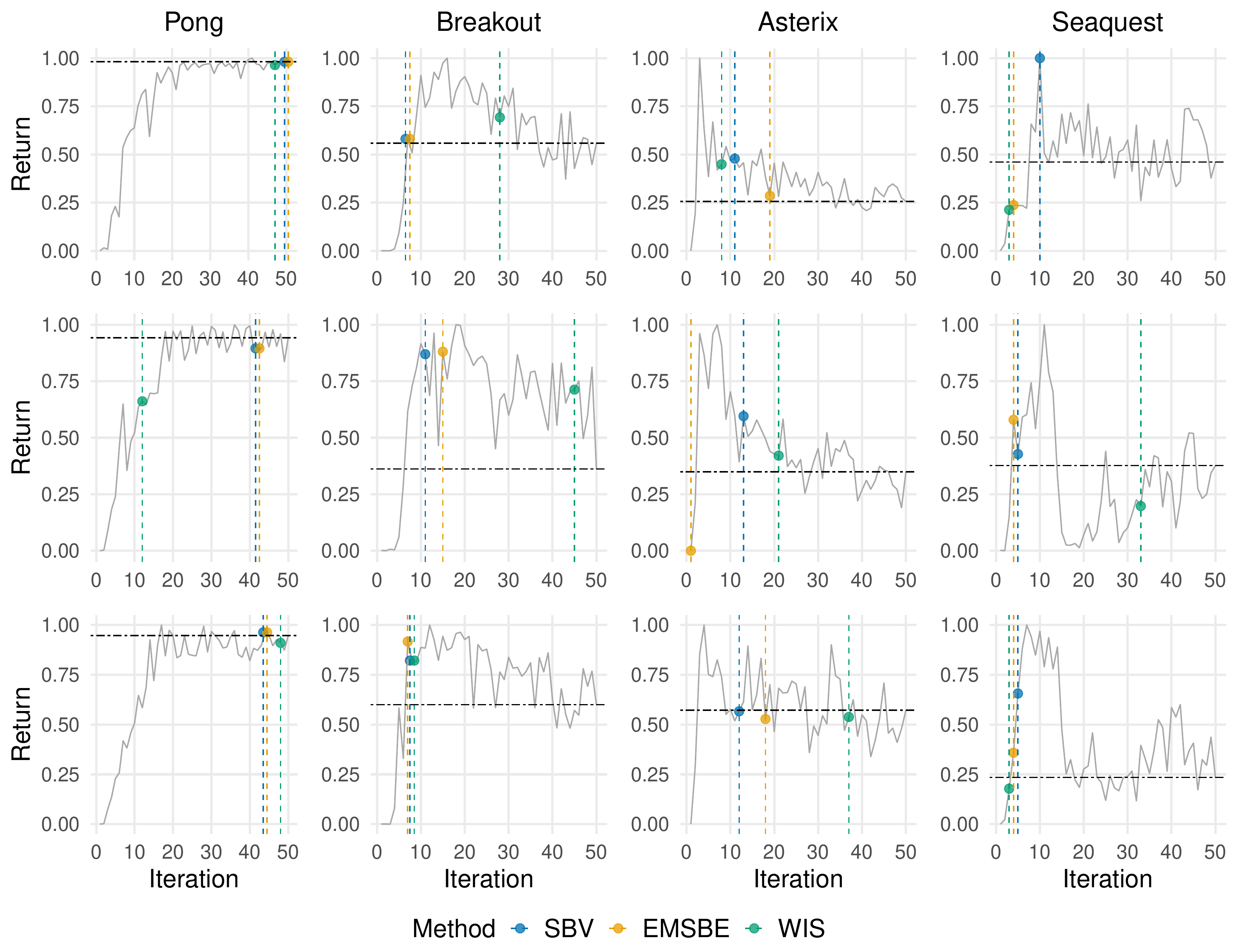} 
\caption{Learning Curves for the Best Configuration for all 12 analyzed Atari datasets. Returns are standardized to a $[0,1]$ range within each dataset. The dashed horizontal line represents performance when no early stopping is applied. The vertical lines represent the iterations where training was stopped according to different methods. Only SBV performs nearly as well as or better than no early stopping for all datasets. FQE was only evaluated on the four datasets present in Figure \ref{figure:atari} and is thus excluded here.}\label{figure:atari_extra}
\end{figure*}

\begin{table*}[b]
\centering
\caption{Performance of Model-Based Evaluation on the Bicycle Balancing Problem (see Section \ref{sec:Bike}). Similar to previous work \citep{Janner2019}, we modelled state transition dynamics as a multivariate normal distribution with a diagonal covariance matrix, where the vector of means and log-standard deviations were outputted from a single feed-forward neural network with two hidden layers of 200 units each. Below the mean ($\pm$ sd) of the top-3 policy values across 10 datasets is given for each method, similar to Figure \ref{figure:bike}. Model-based evaluation performs worse on average and has more variability between datasets.}\label{table:model}
\begin{tabular}{lllll}
 \toprule
Method& Top-3 Policy Return  \\
 \midrule
SBV (Ours)  & 82\% ($\pm$15\%) \\
EMSBE (Equation \ref{eq:emsbe})  & 83\% ($\pm$16\%) \\
Model-Based Evaluation & 72\% ($\pm$25\%)  \\
 \bottomrule
\end{tabular}
\end{table*}

\twocolumn

\begin{figure}
\includegraphics[width=\linewidth]{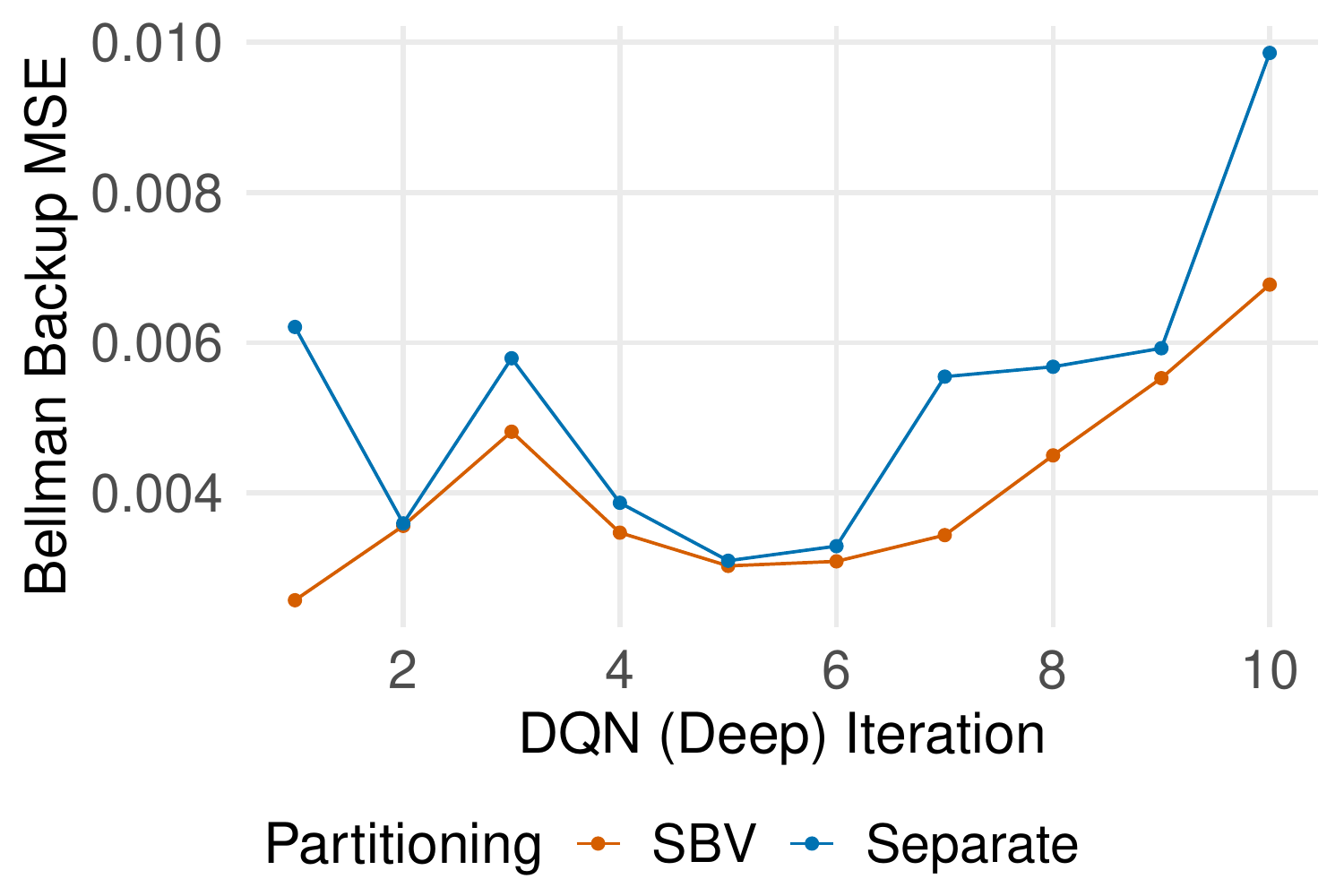}
\centering
\caption{Partitioning Ablation on Breakout. For the first 10 Q-functions generated by the deep DQN configuration, we estimated their Bellman backups using the same dataset as that used to run DQN and plot their validation MSE in red. We then do the same for a separate dataset of 50\% size in blue.}
\label{figure:part_breakout}
\vspace{-0.4cm}
\end{figure}
\end{document}